\definecolor{babyblueeyes}{rgb}{0.63, 0.79, 0.95}
\definecolor{celadon}{rgb}{0.67, 0.88, 0.69}
\theoremstyle{plain}
\newtheorem{theorem}{Theorem}[section]
\newtheorem{lemma}[theorem]{Lemma}
\newtheorem{corollary}[theorem]{Corollary}
\theoremstyle{definition}
\newtheorem{assumption}[theorem]{Assumption}
\theoremstyle{remark}
\newtheorem{remark}[theorem]{Remark}
\title{Federated Conditional Stochastic Optimization}
\author{%
Xidong Wu$^*$  \\
Department of ECE\\
University of Pittsburgh \\
Pittsburgh, PA 15213 \\
\texttt{xidong\_wu@outlook.com }\\
\And
Jianhui Sun$^*$  \\
Department of CS \\
University of Virginia\\
Charlottesville, VA 22903 \\
\texttt{js9gu@virginia.edu}\\
\And
Zhengmian Hu \\ 
Department of CS \\
University of Maryland \\
College Park, MD 20742 \\
\texttt{huzhengmian@gmail.com}\\
  \AND
  Junyi Li \\
Department of ECE \\
  University of Pittsburgh\\
  Pittsburgh, PA 15213 \\
  \texttt{junyili.ai@gmail.com} \\
  \And
  Aidong Zhang \\
  Department of CS \\
  University of Virginia \\
Charlottesville, VA 22903 \\
\texttt{aidong@virginia.edu} \\
  \And
  Heng Huang \\
  Department of CS \\
  University of Maryland\\
  College Park, MD 20742 \\
  \texttt{henghuanghh@gmail.com} \\
}
\begin{document}

\maketitle
\def\thefootnote{*}\footnotetext{Equal contribution}

\begin{abstract}
Conditional stochastic optimization has found applications in a wide range of machine learning tasks, such as invariant learning, AUPRC maximization, and meta-learning. 
As the demand for training models with large-scale distributed data grows in these applications, there is an increasing need for communication-efficient distributed optimization algorithms, such as federated learning algorithms. This paper considers the nonconvex conditional stochastic optimization in federated learning and proposes the first federated conditional stochastic optimization algorithm (FCSG) with a conditional stochastic gradient estimator and a momentum-based algorithm (\emph{i.e.} FCSG-M).
To match the lower bound complexity in the single-machine setting, we design an accelerated algorithm (Acc-FCSG-M) via the variance reduction to achieve the best sample and communication complexity. Compared with the existing optimization analysis for MAML in FL, federated conditional stochastic optimization consider the sample of tasks. Extensive experimental results on various tasks validate the efficiency of these algorithms.
\end{abstract}
\section{Introduction}
The conditional stochastic optimization arises throughout a wide range of machine learning tasks, such as the policy evaluation in reinforcement learning \cite{dai2017learning}, invariant learning \cite{hu2020sample}, instrumental variable regression in causal inference \cite{singh2019kernel}, Model-Agnostic Meta-Learning (MAML) \cite{finn2017model}, AUPRC maximization \cite{qi2021stochastic} and so on. Recently many efficient conditional stochastic optimization algorithms have been developed \cite{hu2020sample,hu2020biased, hu2021bias, qi2021stochastic, wang2022momentum, why2022finite} to solve the corresponding machine learning problems and applications. However, all existing conditional stochastic optimization algorithms were only designed for centralized learning (\emph{i.e.} model and data both deployed at a single machine) or finite-sum optimization, without considering the large-scale online distributed scenario. Many federated learning  algorithms \cite{mcmahan2017communication, karimireddy2020scaffold, reddi2020adaptive, khanduri2021stem, wu2023faster, li2020federated, hu2023beyond} were proposed as an communication-efficient training paradigm for large-scale machine learning training utilizing data from many worker nodes. In federated learning, worker nodes update the model locally, and the global server aggregates the model parameters periodically. Although federated learning has been actively applied to numerous real-world applications in the past years, the federated conditional stochastic optimization problem is still underexplored. 
To bridge this gap, in this paper we study the following federated conditional stochastic optimization problem:
\vspace{-5pt}\begin{align} \label{eq:1}
\min _{x \in \mathcal{X}} F (x) :=
\frac{1}{N} \sum_{n=1}^N 
\mathbb{E}_{\xi^n } f_{\xi^n}^{n}(\mathbb{E}_{\eta^n \mid \xi^n} g_{\eta^n}^{n}(x, \xi^n))\,,
\vspace{-5pt}\end{align}
where $\mathcal{X} \subseteq \mathbb{R}^{d}$ is a closed convex set, $\mathbb{E}_{\xi^n} f_{\xi^n}^{n}(\cdot): \mathbb{R}^{d^{\prime}} \rightarrow \mathbb{R}$ is the outer-layer function on the $n$-th device with the randomness $\xi^n$, and $\mathbb{E}_{\eta^n \mid \xi^n} g_{\eta^n}^{n}(\cdot, \xi^n) : \mathbb{R}^{d} \rightarrow \mathbb{R}^{d^{\prime}}$ is the inner-layer function on the $n$-th device with respect to the conditional distribution of $\eta^n \mid\xi^n$. We assume $f_{\xi}^{n} (\cdot)$ and $g_\eta^{n} (\cdot, \xi)$ are continuously differentiable. The objective subsumes two stochastic functions in \eqref{eq:1}, where the inner functions rely on the randomnesses of both inner and outer layers, and $\xi$ and $\eta$ are not independent 
, which makes the federated  conditional stochastic optimization more challenging compared with the standard federated learning optimization problems. 

Federated conditional stochastic optimization contains the standard federated learning optimization as a special situation when the inner-layer function $g_{\eta^n}^{n}(x, \xi^n) = x$. In addition, federated stochastic compositional optimization is similar to federated conditional stochastic optimization given that both problems contain two-layer nested expectations.
However, they are fundamentally different. In federated stochastic compositional optimization, the inner randomness $\eta$ and the outer randomness $\xi$ are independent and 
data samples of the inner layer are available directly from $\eta$ (instead of a conditional distribution as in Problem \eqref{eq:1}).
Therefore, when randomnesses $\eta$ and $\xi$ are independent and $g_{\eta^n}^{n} (x, \cdot) = g_{\eta^n}^{n}(x) $, \eqref{eq:1} is converted into federated stochastic compositional optimization \cite{gao2022convergence}.  

Recently, to solve the conditional stochastic optimization problem efficiently, \cite{hu2020sample} studied the sample complexity of the sample average approximation for conditional stochastic optimization. Afterwards, \cite{hu2020biased} proposed the algorithm called biased stochastic gradient descent (BSGD) and an accelerated algorithm called biased SpiderBoost (BSpiderBoost). The convergence guarantees of BSGD and BSpiderBoost under different assumptions are established. More recently, \cite{qi2021stochastic, wang2022momentum, wu2022fast, why2022finite, guo2022fedxl} reformulated the AUC maximization into a finite-sum version of conditional stochastic optimization and introduced algorithms to solve it. 
In an increasing amount of distributed computing settings, efficient federated learning algorithms are absolutely necessary but still lacking.
An important example of CSO is MAML. 
Instead of training a model \cite{wang2023b, chen2021representation, li2022bridging, dou2022learning, chen2022graph, mei2023mac}, 
in meta learning, we attempt to train models that can efficiently adapt to unseen tasks via learning with metadata from similar tasks \cite{finn2017model}. When the tasks are distributed at different worker nodes, a federated version of MAML would be beneficial to leverage information from all workers \cite{chen2018federated}. A lot of existing efforts \cite{huang2021compositional, gao2022convergence}has been made to convert FL MAML into federated compositional optimization, which ignores the sample of tasks in MAML. Nonetheless, federated conditional stochastic optimization problems have never been studied. Thus, there exists a natural question: Can we design federated algorithms for conditional stochastic optimization with maintaining the fast convergence rate to solve Problem \eqref{eq:1}?
 
In the paper, we give an affirmative answer to the above question. We propose a suite of approaches to solve Problem \eqref{eq:1} and establish their corresponding convergence guarantee. To our best knowledge, this is the first work that thoroughly studies federated conditional stochastic optimization problems and provides completed theoretical analysis. Our proposed algorithm matches the lower-bound sample complexity in a single-machine setting and obtains convincing results in empirical experiments. Our main contributions are four-fold:
\vspace{-3pt}
\begin{itemize}
\vspace{-3pt}\item[1)] we propose federated conditional stochastic gradient (FCSG) algorithm to solve Problem \eqref{eq:1}.
We establish the theoretical convergence analysis for FCSG. In the general nonconvex setting, we prove that FCSG has a sample complexity of $O(\epsilon^{-6})$ and communication complexity of $O(\epsilon^{-3})$ to reach an $\epsilon$-stationary point, and achieves an appealing linear speedup \emph{w.r.t} the number of worker nodes. 

\vspace{-3pt}\item[2)] To further improve the empirical performances of our algorithm, we introduce a momentum-based FCSG algorithm, called FCSG-M since the momentum-based estimator could reduce noise from samples with history information. FCSG-M algorithm obtains the same theoretical guarantees as FCSG.
\vspace{-3pt}\item[3)] To reach the lower bound of sample complexity of the single-machine counterpart \cite{hu2020biased}, we propose an accelerated version of FCSG-M (Acc-FCSG-M) based on the momentum-based variance reduction technique. We prove that Acc-FCSG-M has a sample complexity of $O(\epsilon^{-5})$, and communication complexity of $O(\epsilon^{-2})$, which matches the best sample complexity attained by single-machine algorithm BSpiderBoost with variance reduction. 


\vspace{-3pt}\item[4)] Experimental results on the robust logistic regression, MAML and AUPRC maximization tasks validate the effectiveness of our proposed algorithms.
\vspace{-3pt}\end{itemize}

\vspace{-6pt}\section{Related Work}

\subsection{Conditional Stochastic Optimization} 
\cite{hu2020sample} studied the generalization error bound and sample complexity of the sample average approximation (SAA) for conditional stochastic optimization. Subsequently, \cite{hu2020biased} proposed a class of efficient stochastic gradient-based methods for general conditional stochastic optimization to reach either a global optimal point in the convex setting or a stationary point in the nonconvex setting, respectively. In the nonconvex setting, BSGD has the sample complexity of $O(\epsilon^{-6})$ and a variance reduction algorithm (BSpiderBoost) has the sample complexity of $O(\epsilon^{-5})$. \cite{hu2021bias} utilized the Monte Carlo method to achieve better results compared with the vanilla stochastic gradient method. Recently, \cite{qi2021stochastic} converted AUPRC maximization optimization into the finite-sum version of the conditional stochastic optimization and propose adaptive and non-adaptive stochastic algorithms to solve it. Similarly, recent work \cite{wang2022momentum} used moving average techniques to improve the convergence rate of AUPRC maximization optimization and provide theoretical analysis for the adaptive algorithm. Furthermore, \cite{why2022finite} focused on finite-sum coupled compositional stochastic optimization, which limits the outer-layer function to the finite-sum structure. The algorithms proposed in \cite{why2022finite} improved oracle complexity with the parallel speed-up. More recently, \cite{guo2022fedxl} use federated learning to solve AUC maximization. However, algorithms proposed in \cite{qi2021stochastic, wang2022momentum, why2022finite, guo2022fedxl} for AUC maximization have a significant limitation because they maintain an inner state for each data point. As a result, its convergence rate depends on the number of data points and cannot be extended to other tasks and large-scale model training. It is also not applicable to online learning due to the dependence on each local data point. \cite{wu2023serverless} consider the decentralised  online AUPRC maximization but the theoretical analysis cannot be applied into the federated learning.  
\vspace{-3pt}\subsection{Stochastic Compositional Optimization}
Recently, a related optimization problem, stochastic compositional optimization, has attracted widely attention \cite{wang2017stochastic, yuan2020stochastic, gao2022convergence} and solve the following objective: 
\vspace{-3pt}\begin{align} \label{eq:2}
\min _{x \in \mathcal{X}} F (x) :=
\mathbb{E}_{\xi } f_{\xi}(\mathbb{E}_{\eta} g_{\eta}(x))\,.
\vspace{-3pt}\end{align}
To address this problem, \cite{wang2017stochastic} developed SCGD, which utilizes the moving average technique to estimate the inner-layer function value. \cite{wang2016accelerating} further developed an accelerated SCGD method with the extrapolation-smoothing scheme. Subsequently, a series of algorithms \cite{huo2018accelerated, zhang2019composite, hu2019efficient, yuan2020stochastic} were presented to improve the complexities using the acceleration or variance reduction techniques. 

More recently, \cite{huang2021compositional} and \cite{gao2022convergence} studied the stochastic compositional problem in federated learning. \cite{huang2021compositional} transformed the distributionally robust federated learning problem (\emph{i.e.} a minimax optimization problem) into a simple compositional optimization problem by using KL divergence regularization and proposed the first federated learning compositional algorithm and analysis. \cite{fallah2020personalized} formulated the model personalization problem in federated learning as a model-agnostic meta-learning problem. In personalized federated learning, each client's task assignment is fixed and there is no task sampling on each client in the training procedure. The sampling of the inner layer and outer layer are independent. Therefore, personalized federated learning is formulated as the stochastic compositional optimization \cite{huang2021compositional}.   
\cite{wang2021memory} solve personalized federated learning utilizing SCGD, in contrast to SGD in \cite{huang2021compositional}, to reduce the convergence complexities. However, the algorithm in \cite{wang2021memory} has a drawback in that keeping an inner state for each task is necessary, which is prohibitively expensive in large-scale settings. More recently, \cite{gao2022convergence} 
proposed a momentum-like method for nonconvex problems with better complexities to solve the stochastic compositional problem in the federated learning setting. Although \cite{gao2022convergence} claims their algorithm can be used in the MAML problem, it does not consider the two fundamental characteristics in MAML, i.e. task sampling and the dependency of inner data distribution on the sampled task.

Overall, problems \eqref{eq:1} and \eqref{eq:2} differ in two aspects: i) in Problem \eqref{eq:2}, the inner randomness $\eta$ and the outer randomness $\xi$ are independent, while in Problem \eqref{eq:1}, $\eta$ is conditionally dependent on the $\xi$; ii) in Problem \eqref{eq:1}, the inner function depends on both $\xi$ and $\eta$. Therefore, Problem \eqref{alg:2} can be regarded as a special case of \eqref{alg:1}. Thus the conditional stochastic optimization \eqref{alg:1} is more general. 

\section{Preliminary}


For solving the problem \eqref{eq:1}, we first consider the local objective $F^{n}(x)$ and its gradient. We have 
\begin{align} 
F^{n}(x) = \mathbb{E}_{\xi^n} f_{\xi^n}^{n}(\mathbb{E}_{\eta^n \mid \xi^n} g_{\eta^n}^{n}(x, \xi^n)) \nonumber
\end{align}
\begin{align}
\nabla F^n(x) = \mathbb{E}_{\xi^n}\left[(\mathbb{E}_{\eta^n \mid \xi^n} \nabla g_{\eta^{n}}^{n}(x, \xi^{n})])^{\top} \nabla f_{\xi^n}^{n}(\mathbb{E}_{\eta^n \mid \xi^n} g_{\eta^n}^{n}(x, \xi^n))\right]     \nonumber
\end{align}
Since there are two layers of stochastic functions, the standard stochastic gradient estimator is not an unbiased estimation for the full gradient. Instead of constructing an unbiased stochastic gradient estimator, \cite{hu2020biased} considered a biased estimator of $\nabla F(x)$ using one sample $\xi$ and $m$ sample $\eta$:
\begin{align} \label{eq:4}
\nabla \hat{F}^n\left(x; \xi^n, \mathcal{B}_{n}\right) = (\frac{1}{m} \sum_{\eta^n_j \in \mathcal{B}_{n}} \nabla g^n_{\eta^n_j}(x, \xi^n))^{\top} \nabla f^n_{\xi^n} (\frac{1}{m} \sum_{\eta^n_j \in \mathcal{B}_{n}} g^n_{\eta^n_j}(x, \xi^n))   \nonumber 
\end{align}
where $\mathcal{B}_{n} = \left\{\eta^n_j\right\}_{j=1}^m$. And $\nabla \hat{F}^n\left(x ; \xi^n, \mathcal{B}_{n} \right)$ is the gradient of an empirical objective such that
\begin{align}
\hat{F}^n\left(x ; \xi^n, \mathcal{B}_{n} \right):=f^n_{\xi^n}(\frac{1}{m} \sum_{\eta^n_j \in \mathcal{B}_{n}} g_{\eta^n_j}(x, \xi^n))   
\end{align}
\begin{table}
\renewcommand{\arraystretch}{1.2}
\centering
  \caption{Complexity summary of proposed federated conditional stochastic optimization algorithms to reach an $\varepsilon$-stationary point. Sample complexity is defined as the number of calls to the First-order Oracle (IFO) by worker nodes to reach an $\varepsilon$-stationary point. Communication complexity denotes the total number of back-and-forth communication rounds between each worker node and the central server required to reach an $\varepsilon$-stationary point.
}\label{tb1}
  \setlength{\tabcolsep}{12pt}
  \begin{tabular}{lcc}
    \hline
    \textbf{Algorithm}   & \textbf{Sample} &  \textbf{Communication} \\
  \hline
FCSG  & $O\left(\epsilon^{-6}\right)$ & $O\left( \epsilon^{-3}\right)$ \\
  \hline
FCSG-M & $O\left(\epsilon^{-6}\right)$ & $O\left(\epsilon^{-3}\right)$ \\ 

  \hline
Lower Bound \cite{hu2020sample} & $O\left(\epsilon^{-5}\right)$ & - \\
  \hline
Acc-FCSG-M  & $O\left(\epsilon^{-5}\right)$ & $O\left(\epsilon^{-2}\right)$ \\ 
  \hline
  \end{tabular}
\end{table}

\subsection{Assumptions}
\begin{assumption} (Smoothness) \label{ass:1} 
$\forall n \in [N]$, the function $f_{\xi^n}^n (\cdot)$ is $S_f$-Lipschitz smooth, and the function $g_{\eta^n}^n (\cdot, \xi^n)$ is $S_g$-Lipschitz smooth,  \emph{i.e.}, for a sample $\xi^n$ and $m$  samples $\{\eta^n_j\}^m_{j = 1} $ from the conditional distribution $P(\eta^n \mid \xi^n)$, $\forall x_1, x_2 \in$ dom $f^n(\cdot)$, and $\forall y_1, y_2 \in$ dom $g^n(\cdot)$, there exist $S_f > 0$  and $S_g > 0$ such that
\begin{align}
\mathbb{E}\|\nabla f_{\xi^n}^n(x_1) - \nabla f_{\xi^n}^n(x_2) \| \leq S_{f} \|x_1 - x_2\| \quad
\mathbb{E}\|\nabla g_{\eta^n}^n(y_1, \xi^n) - \nabla g_{\eta^n}^n(y_2, \xi^n) \| \leq S_{g} \|y_1 - y_2\| \nonumber
\end{align}
\end{assumption}
Assumption \ref{ass:1} is a widely used assumption in optimization analysis. Many single-machine stochastic algorithms use this assumption, such as  BSGD \cite{hu2020biased}, SPIDER \cite{fang2018spider}, STORM \cite{cutkosky2019momentum}, ADSGD \cite{bao2022accelerated}, and D2SG \cite{gu2023new}. In distributed learning, the convergence analysis of distributed learning algorithms, such as DSAL \cite{bao2022doubly}, and many federated learning algorithms such as MIME \cite{karimireddy2020mime}, Fed-GLOMO \cite{das2022faster}, and STEM \cite{khanduri2021stem} also depend on it.

\begin{assumption} (Bounded gradient) \label{ass:2} 
$\forall n \in [N]$, the function $f^n (\cdot)$ is $L_f$-Lipschitz continuous, and the function $g^n (\cdot)$ is $L_g$-Lipschitz continuous,  \emph{i.e.}, $\forall x \in$ dom $f^n(\cdot)$, and $\forall y \in$ dom $g^n(\cdot)$, the second moments of functions are bounded as below:
\begin{align}
\mathbb{E}\|\nabla f_{\xi^n}^n(x)  \|^2 \leq L^2_{f}   \quad
\mathbb{E}\|\nabla g_{\eta^n}^n(y_1, \xi^n) \|^2 \leq L^2_{g}  \nonumber
\end{align}
\end{assumption}

Assumption \ref{ass:2} is a typical assumption in the multi-layer problem optimization to constrain the upper bound of the gradient of each layer, as in \cite{wang2017stochastic, qi2021stochastic, gao2022convergence, guo2022fedxl}. 

\begin{assumption} (Bounded variance) \label{ass:3} \cite{hu2020biased}
$\forall n \in [N]$, and $x \in \mathcal{X}$: 
\begin{align}
\sup_{\xi^n, x \in \mathcal{X} } \mathbb{E}_{\eta^n \mid \xi^n} \|g_{\eta^n}(x, \xi^n) - \mathbb{E}_{\eta^n \mid \xi^n} g_{\eta^n}(x, \xi^n) \|^2  \leq \sigma_g^2  \nonumber
\end{align}
\end{assumption}
where $\sigma_g^2  < +\infty$. Assumption \ref{ass:3} indicates that the random vector $g_{\eta^{n}}$ has bounded variance. 


\section{Proposed Algorithms}
In the section, we propose a class of federated first-order methods to solve the Problem \eqref{eq:1}. We first design a federated conditional stochastic gradient  (FCSG) algorithm with a biased gradient estimator and the momentum-based algorithm FCSG-M. To further accelerate our algorithm and achieve the lower bound of sample complexity of the single-machine algorithm, we design the Acc-FCSG-M with a variance reduction technique. 
Table \ref{tb1} summarizes the complex details of each algorithm.

\begin{algorithm}[tb]
\caption{\colorbox{babyblueeyes}{FCSG} and \colorbox{celadon}{FCSG-M} Algorithm }
\label{alg:1}
\begin{algorithmic}[1] 
\STATE {\bfseries Input:} Parameters: $T$, momentum weight $ \beta$, learning rate $\alpha$, the number of local updates $q$, inner batch size $m$ and outer batch size $b$, as well as the initial outer batch size B ; \\
\STATE {\bfseries Initialize:} $x_{0}^n = \bar{x}_0 = \frac{1}{N} \sum_{k=1}^{N} x_{0}^n$. Draw $B$ samples of $\{\xi_{t, 1}^n, \cdots, \xi_{t, B}^n \}$ and draw $m$ samples $\mathcal{B}^{n}_{0, i} = \left\{\eta^n_{ij}\right\}_{j=1}^m$ from $P(\eta^n\mid \xi^n_{0, i})$  for each $\xi^n_{0, i} \in \{\xi_{t, 1}^n, \cdots, \xi_{t, B}^n \}; u_{1}^n = \frac{1}{B} \sum_{i=1}^{B} \nabla \hat{F}^n(x_{0}^n; \xi_{0, i}^n,\mathcal{B}^n_{0, i}) $.\\
\FOR{$t = 1, 2, \ldots, T$}
\FOR{$n = 1, 2, \ldots, N$}
\IF {$\mod(t,q) = 0$}
\STATE {\bfseries Server Update:} 
\STATE \colorbox{celadon}{${u}_{t}^n = \bar{u}_{t} = \frac{1}{N} \sum_{i=1}^{N} u_{t}^n$} 
\STATE $x_{t}^n = \bar{x}_{t} = \frac{1}{N} \sum_{n=1}^{N}(x_{t-1}^{n} -  \alpha u_{t}^n) $\\
\ELSE
\STATE $x^n_{t} = x_{t-1}^n - \alpha u_{t}^n $
\ENDIF
\STATE Draw $b$ samples of $\{\xi_{t, 1}^n, \cdots, \xi_{t, b}^n \}$ \\
\STATE  Draw $m$ samples $\mathcal{B}^{n}_{t, n} =  \left\{\eta^n_{ij}\right\}_{j=1}^m$ from $P(\eta^n\mid \xi^n_{t, i})$ for each $\xi^n_{t, i} \in \{\xi_{t, 1}^n, \cdots, \xi_{t, b}^n \}$,
\\
\STATE \colorbox{babyblueeyes}{
$u_{t+1}^n = \frac{1}{b} \sum_{i=1}^{b} \nabla \hat{F}^n(x_t^n; \xi_{t, i}^n,\mathcal{B}^n_{t, i})$}
\STATE \colorbox{celadon}{
$u_{t+1}^n = (1 - \beta) u_{t}^n + \frac{\beta}{b} \sum_{i=1}^{b} \nabla \hat{F}^n(x_t^n; \xi_{t, i}^n,\mathcal{B}^n_{t, i})$}\\
\ENDFOR
\ENDFOR
\STATE {\bfseries Output:} $x$ chosen uniformly random from $\{\bar{x}_t\}_{t=1}^{T}$.
\end{algorithmic}
\end{algorithm}

\subsection{Federated Conditional Stochastic Gradient (FCSG)}
First, we design a federated conditional stochastic gradient  (FCSG) algorithm with the biased gradient estimator.
We leverage a mini-batch of conditional samples to construct the gradient estimator $u_t$ with controllable bias as \eqref{eq:4}. At each iteration, worker nodes update their local models ${x_t}$ with local data, which can be found in Lines 8-13 of Algorithm \ref{alg:1}. Once every $q$ local iterations, the server collects local models and returns the averaged models to each worker node, as Lines 5-7 of Algorithm \ref{alg:1}. Here,
the number of local update steps $q$ is greater than 1 such that the
number of communication rounds is reduced to $T / q$. The details of the method are summarized in Algorithm \ref{alg:1}. Then we study the convergence properties of our new algorithm FCSG. Detailed proofs are provided in the supplementary materials.

\begin{theorem} \label{thm:4.1}Suppose Assumptions \ref{ass:1}, \ref{ass:2} and \ref{ass:3} hold, if $\alpha  \leq \frac{1}{6 q S_F}$, FCSG has the following convergence result
\begin{align}
\frac{1}{T}\sum_{t=0}^{T-1} \| \nabla F(\bar{x}_{t})\|^2 
\leq \frac{2[F(\bar{x}_{0}) - F(\bar{x}_{T})]}{\alpha T} + \frac{2 L_g^2 S_f^2 \sigma_g^2}{m} + \frac{2 \alpha S_F L_f^2 L_g^2}{N} + 42 (q - 1)q \alpha^{2} L_f^2 L_g^2 S_F^2 \nonumber
\end{align}
\end{theorem}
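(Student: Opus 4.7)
The plan is to follow a standard descent-lemma argument on the virtual averaged iterate $\bar{x}_t$, adapted to track three distinct error sources: the \emph{bias} inherent to the conditional gradient estimator, the \emph{stochastic variance} of the mini-batch aggregated across workers, and the \emph{client drift} from the $q$ local steps between communications. First I would verify that the global $F$ is $S_F$-Lipschitz smooth (with $S_F$ a composite of $S_f, S_g, L_f, L_g$) by chain-ruling $\nabla F^n$ and using Assumptions~\ref{ass:1}--\ref{ass:2}; this constant is what appears in the stepsize condition $\alpha \le 1/(6qS_F)$.

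Next, I would apply the descent lemma to the recursion $\bar{x}_{t+1} = \bar{x}_t - \alpha \bar{u}_{t+1}$ (which holds at every iteration, not only at sync rounds, because the server step is exactly a full average) to get
\begin{align*}
F(\bar{x}_{t+1}) \le F(\bar{x}_t) - \alpha\langle \nabla F(\bar{x}_t), \bar{u}_{t+1}\rangle + \tfrac{S_F\alpha^2}{2}\|\bar{u}_{t+1}\|^2.
\end{align*}
Using the polarization identity $2\langle a,b\rangle = \|a\|^2 + \|b\|^2 - \|a-b\|^2$, the cross term yields a positive $\|\nabla F(\bar{x}_t)\|^2$ (the quantity we want to bound), a negative $\|\bar{u}_{t+1}\|^2$ that will be absorbed by the $\alpha^2$ curvature term under $\alpha \le 1/(6qS_F)$, and the key error term $\|\nabla F(\bar{x}_t) - \bar{u}_{t+1}\|^2$ which I would then split as
\begin{align*}
\bar{u}_{t+1} - \nabla F(\bar{x}_t) = \underbrace{\bar{u}_{t+1} - \tfrac{1}{N}\!\sum_n \mathbb{E}[u^n_{t+1}\mid x^n_t]}_{\text{centered noise}} + \underbrace{\tfrac{1}{N}\!\sum_n\bigl(\mathbb{E}[u^n_{t+1}\mid x^n_t] - \nabla F^n(x^n_t)\bigr)}_{\text{bias}} + \underbrace{\tfrac{1}{N}\!\sum_n\bigl(\nabla F^n(x^n_t) - \nabla F^n(\bar{x}_t)\bigr)}_{\text{drift}}.
\end{align*}
The centered piece is an average of $N$ independent zero-mean terms (independence across workers), producing the $1/N$ factor in the linear-speedup term $2\alpha S_F L_f^2 L_g^2/N$. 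The bias piece is bounded via $S_f$-smoothness of $f^n_{\xi}$: since $\nabla\hat F^n$ uses the empirical means $\bar g, \bar G$ of the inner samples, $\|\mathbb{E}[\nabla\hat F^n(x;\xi,\mathcal{B})] - \nabla F^n(x)\|^2 \le L_g^2 S_f^2\,\mathbb{E}\|\bar g - \mathbb{E}\bar g\|^2 \le L_g^2 S_f^2 \sigma_g^2/m$ by Assumption~\ref{ass:3}; this is the source of the $2L_g^2 S_f^2\sigma_g^2/m$ term. The drift piece I control by $S_F^2 \|x^n_t - \bar{x}_t\|^2$.

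To bound the drift, between the most recent sync time $t_0$ and the current $t$ one has $x^n_t - \bar{x}_{t_0} = -\alpha\sum_{s=t_0}^{t-1} u^n_{s+1}$ with $\mathbb{E}\|u^n_{s+1}\|^2 \le L_f^2 L_g^2$ by Assumption~\ref{ass:2}; Jensen on the sum of at most $q$ terms and then a telescoping sum over the round give $\sum_{t} \mathbb{E}\|x^n_t - \bar{x}_t\|^2 \le (q-1)q\alpha^2 L_f^2 L_g^2 \cdot T$, which after multiplying by $S_F^2$ gives the $42(q-1)q\alpha^2 L_f^2 L_g^2 S_F^2$ term (the constant $42$ is a loose combination of the polarization and smoothness factors). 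Substituting all three bounds into the descent inequality, absorbing $\|\bar{u}\|^2$ using the stepsize condition, summing $t=0,\dots,T-1$ and dividing by $T$ telescopes the $F(\bar x_t)$ terms into $2[F(\bar x_0)-F(\bar x_T)]/(\alpha T)$.

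The main obstacle is the bias term: unlike vanilla FedAvg proofs, the per-worker estimator is \emph{not} unbiased, so the usual martingale cancellation in the centered piece does not suffice. The crux is to cleanly isolate the bias as a separate additive $O(1/m)$ contribution—exploiting smoothness of the outer $f$ so that the bias inherits the variance-in-$\bar g$ rate $\sigma_g^2/m$—so that (i) it is not multiplied by $\alpha$ and therefore remains visible even as $\alpha \to 0$ (dictating the final $m = \Theta(\epsilon^{-2})$ choice), and (ii) the $1/N$ linear-speedup factor is preserved on the variance term rather than being polluted by bias.
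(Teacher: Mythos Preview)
Your plan is essentially the paper's proof: smoothness of $F$, descent on the averaged iterate $\bar x_t$, the same three-way split (variance/bias/drift), and a local-step drift bound summed over each communication round. Two technical points are worth flagging.

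\textbf{Where the variance enters.} You polarize $\langle \nabla F(\bar x_t),\bar u_{t+1}\rangle$ directly and then split $\|\nabla F(\bar x_t)-\bar u_{t+1}\|^2$ into the three pieces. That puts the centered-noise contribution under the coefficient $\alpha/2$, so after telescoping you obtain $L_f^2L_g^2/N$ for that term rather than the stated $2\alpha S_F L_f^2 L_g^2/N$. The paper instead first takes conditional expectation of the inner product (using $\mathbb{E}[\bar u_{t+1}\mid \mathcal F_t]=\tfrac1N\sum_n\nabla\hat F^n(x_t^n)$) and polarizes against this conditional mean; then the centered noise appears only through the curvature term $\tfrac{S_F\alpha^2}{2}\mathbb{E}\|\bar u_{t+1}\|^2$, which is what produces the extra $\alpha S_F$ factor. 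Your variant still yields the same complexity, but it does not reproduce the exact inequality in the theorem.

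\textbf{Drift bound.} Your direct argument---$\|x_t^n-\bar x_t\|^2\le(q-1)\alpha^2\sum_s\|u_s^n-\bar u_s\|^2$ together with $\tfrac1N\sum_n\|u_s^n-\bar u_s\|^2\le\tfrac1N\sum_n\|u_s^n\|^2\le L_f^2L_g^2$---is actually simpler and gives a smaller constant than the paper's Lemma~\ref{lem:B1}, which sets up a self-referential recursion in the drift (bounding $\|u_s^n-\bar u_s\|^2$ via Lemma~\ref{lem:A3}, which reintroduces $\|x_s^n-\bar x_s\|^2$) and then closes it using $\alpha q\le 1/(6S_F)$. The recursive route is what the paper reuses for the momentum variants where $u_t^n$ is not a fresh gradient; for plain FCSG your shortcut is enough.
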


\begin{corollary} \label{coro:4.2}
We choose $\alpha =\frac{1}{6S_F} \sqrt{\frac{N}{T}}$ and $q = (T / N^3)^{1/4}$, we have
\begin{align}
&\frac{1}{T} \sum_{t=0}^{T-1} \| \nabla F(\bar{x}_{t})\|^2 \leq \frac{12 S_F [F(\bar{x}_{0}) - F(\bar{x}_{*})]}{(NT)^{1/2}} + \frac{2 L_g^2 S_f^2 \sigma_g^2}{m} + \frac{L_f^2 L_g^2}{6 (NT)^{1/2}} +  \frac{19 L_f^2 L_g^2}{9 (NT)^{1/2}} \nonumber
\end{align}
\end{corollary}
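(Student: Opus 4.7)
The plan is essentially a substitution argument: Corollary~\ref{coro:4.2} follows by plugging the stated choices $\alpha = \frac{1}{6 S_F}\sqrt{N/T}$ and $q = (T/N^3)^{1/4}$ into the four-term upper bound of Theorem~\ref{thm:4.1} and collecting like powers of $N$, $T$. Since Theorem~\ref{thm:4.1} is assumed, no new convergence inequality needs to be proven, only careful bookkeeping.

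First I would check that the step-size requirement of Theorem~\ref{thm:4.1}, namely $\alpha \le \tfrac{1}{6 q S_F}$, is satisfied by the proposed choices. This reduces to verifying $\sqrt{N/T} \le (N^3/T)^{1/4}$, i.e. $N^{1/2} T^{-1/2} \le N^{3/4} T^{-1/4}$, which rearranges to $T \ge 1/N$ and is trivial. Implicit in $q = (T/N^3)^{1/4}$ is the standing assumption $T \ge N^3$ so that $q \ge 1$; I would note this regime explicitly before proceeding.

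Next I would simplify the four terms one by one. The descent term $\frac{2[F(\bar x_0) - F(\bar x_T)]}{\alpha T}$ becomes $\frac{12 S_F [F(\bar x_0) - F(\bar x_*)]}{(NT)^{1/2}}$ after substituting $\alpha$ and using $F(\bar x_T) \ge F(\bar x_*)$. The bias term $\frac{2 L_g^2 S_f^2 \sigma_g^2}{m}$ is left as is. The noise term $\frac{2 \alpha S_F L_f^2 L_g^2}{N}$ becomes a constant multiple of $\frac{L_f^2 L_g^2}{(NT)^{1/2}}$, since $\alpha/N = \frac{1}{6 S_F \sqrt{NT}}$. For the local-drift term $42(q-1)q \alpha^2 L_f^2 L_g^2 S_F^2$, I would use $(q-1)q \le q^2 = (T/N^3)^{1/2}$ together with $\alpha^2 S_F^2 = \frac{N}{36T}$, so that $q^2 \alpha^2 S_F^2 \le \frac{1}{36(NT)^{1/2}}$, leaving a constant multiple of $\frac{L_f^2 L_g^2}{(NT)^{1/2}}$. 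Summing the four simplified bounds yields the stated inequality.

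There is no genuinely hard step here; the proof is a direct calculation. The only places that need care are (i) confirming the step-size constraint, and (ii) making the relaxation $(q-1)q \le q^2$ so that the cross term $\alpha^2 q^2 S_F^2$ collapses to the target $(NT)^{-1/2}$ rate, which is exactly what fixes the exponent $1/4$ in the choice of $q$. Once these are in place, the dominant term is $\frac{L_g^2 S_f^2 \sigma_g^2}{m}$, so an $\epsilon$-stationary point follows by setting $T = \Theta(\epsilon^{-4}/N)$ and $m = \Theta(\epsilon^{-2})$, giving sample complexity $O(\epsilon^{-6})$ and communication complexity $T/q = O(\epsilon^{-3})$ as advertised.
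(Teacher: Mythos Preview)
Your proposal is correct and follows exactly the same approach as the paper's own proof: direct substitution of the stated choices of $\alpha$ and $q$ into the bound of Theorem~\ref{thm:4.1}, together with $F(\bar{x}_T) \ge F(\bar{x}_*)$ and the relaxation $(q-1)q \le q^2$. In fact your write-up is more careful than the paper's, which simply asserts the result after substitution without explicitly checking the step-size constraint $\alpha \le \tfrac{1}{6qS_F}$ or noting the regime $T \ge N^3$ needed for $q \ge 1$.
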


\begin{remark}
We choose $B=b=O(1) \geq 1$, and  $m = O(\varepsilon^{-2})$, according to Corollary \ref{coro:4.2} to let $\frac{1}{T} \sum_{t=0}^{T-1} \| \nabla F(\bar{x}_{t})\|^2 \leq \varepsilon^2$, we get $T = O(N^{-1}\varepsilon^{-4})$. $O(N^{-1}\varepsilon^{-4})$ indicates the linear speedup of our algorithm. Given $q = (T / N^3)^{1/4}$, the communication complexity is $\frac{T}{q} = (N T)^{3 / 4} = O(\varepsilon^{-3})$. Then the sample complexity is $mT = O(N^{-1}\varepsilon^{-6})$.
\end{remark}

\subsection{Federated Conditional Stochastic Gradient with Momentum (FCSG-M)}
Next, we propose a momentum-based local updates algorithm (FCSG-M) for federated conditional stochastic optimization problems. Momentum is a popular technique widely used in practice for training deep neural networks. The motivation behind it in local updates is to use the historic information (\emph{i.e.}, averaging of stochastic gradients) to reduce the effect of stochastic gradient noise. The details of our method are shown in the Algorithm \ref{alg:1}.

Initially, each device utilizes the standard stochastic gradient descent method to update the model parameter, as seen in Line 2 of Algorithm \ref{alg:1}. Afterward, compared with FCSG, at each step, each client uses momentum-based gradient estimators $u_t$ to update the local model, which can be found in Lines 9-14 of Algorithm \ref{alg:1}. The coefficient $\beta$ for the update of $u_t$ should  satisfy $0< \beta < 1$.  Every $q$ iteration, the clients communicate $\{x_t, u_t\}$ to the server, which computes the $\{\bar{x}_t, \bar{u}_t\}$, and returns the averaged model and gradient estimator to each worker node, as Lines 5-7 of Algorithm \ref{alg:1}. Then we study the convergence properties of our new algorithm FCSG-M. The details of the proof are provided in the supplementary materials.

\begin{theorem} \label{thm:4.4} Suppose Assumptions \ref{ass:1}, \ref{ass:2} and \ref{ass:3} hold, $\alpha \leq \frac{1}{6 q S_F}$ and $\beta = 5 S_F \eta$. FCSG-M has the following  convergence result
\begin{align}
&\frac{1}{T}\sum_{t = 0}^{T - 1} \mathbb{E} \| \nabla F(\bar{x}_{t})\|^2 \leq 2 \frac{F(\bar{x}_{0}) - F(\bar{x}_{T})}{\alpha T} \nonumber\\
&+ \frac{96 S_F^2 }{5} q^2 \alpha^{2}[L_f^2 L_g^2 (1 + \frac{1}{N}) + 3 L_f^2 L_g^2 ]  +  \frac{4 L_g^2 S_f^2 \sigma_g^2}{m} +  
\frac{8 L_f^2 L_g^2}{\beta B T}  +  \frac{8 \beta L_f^2 L_g^2}{N} \nonumber
\end{align}
\end{theorem}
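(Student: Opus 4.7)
The plan is to adapt the classical three-ingredient analysis for federated momentum SGD---smoothness descent, momentum-estimator error, and client drift---to the biased conditional stochastic gradient $\nabla \hat{F}^n$. First I would apply $S_F$-smoothness of $F$ along the virtual averaged iterate $\bar{x}_{t+1} = \bar{x}_t - \alpha \bar{u}_{t+1}$ (which evolves consistently at every step because averaging commutes with the linear local update) and split the cross term via Young's inequality to obtain
\begin{align*}
F(\bar{x}_{t+1}) \leq F(\bar{x}_t) - \tfrac{\alpha}{2}\|\nabla F(\bar{x}_t)\|^2 + \tfrac{\alpha}{2}\|\nabla F(\bar{x}_t) - \bar{u}_{t+1}\|^2 - \bigl(\tfrac{\alpha}{2} - \tfrac{\alpha^2 S_F}{2}\bigr)\|\bar{u}_{t+1}\|^2,
\end{align*}
where the last term is nonpositive under $\alpha \leq 1/(6qS_F)$ and can be dropped. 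The remainder of the proof is then devoted to bounding $\mathbb{E}\|\nabla F(\bar{x}_t) - \bar{u}_{t+1}\|^2$.

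Next I would decompose that error into three sources by inserting $\tfrac{1}{N}\sum_n \nabla F^n(x_t^n)$ and $\bar{\mu}_t := \tfrac{1}{N}\sum_n \mathbb{E}\nabla \hat{F}^n(x_t^n;\cdot)$: the client drift $\|\nabla F(\bar{x}_t) - \tfrac{1}{N}\sum_n \nabla F^n(x_t^n)\|^2$, the conditional-sampling bias $\|\tfrac{1}{N}\sum_n \nabla F^n(x_t^n) - \bar{\mu}_t\|^2$, and the stochastic momentum error $\|\bar{u}_{t+1} - \bar{\mu}_t\|^2$. The drift piece collapses to $S_F^2 \cdot \tfrac{1}{N}\sum_n \mathbb{E}\|x_t^n - \bar{x}_t\|^2$ via $S_F$-smoothness; the bias piece reduces to $L_g^2 S_f^2 \sigma_g^2 / m$ through Jensen's inequality combined with Assumptions~\ref{ass:1}--\ref{ass:3}, matching the $\tfrac{4 L_g^2 S_f^2 \sigma_g^2}{m}$ term in the statement. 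For the stochastic momentum error, unrolling
\begin{align*}
\bar{u}_{t+1} - \bar{\mu}_t = (1-\beta)(\bar{u}_t - \bar{\mu}_{t-1}) + (1-\beta)(\bar{\mu}_{t-1} - \bar{\mu}_t) + \beta(\hat{g}_{t+1} - \bar{\mu}_t),
\end{align*}
and using $\mathbb{E}[\hat{g}_{t+1} - \bar{\mu}_t \mid \mathcal{F}_t] = 0$ with variance at most $L_f^2 L_g^2 / (Nb)$, produces a one-step contraction of rate $(1-\beta)$ with per-step noise injection of order $\beta^2 L_f^2 L_g^2 / N$. Telescoping from $t=0$, where the initial large batch of size $B$ bounds the momentum error by $O(L_f^2 L_g^2 / B)$, yields exactly the $\tfrac{8 L_f^2 L_g^2}{\beta B T}$ and $\tfrac{8\beta L_f^2 L_g^2}{N}$ terms of the theorem.

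The client drift $\mathbb{E}\|x_t^n - \bar{x}_t\|^2$ is handled by observing that at the most recent synchronization time $t_0$ (with $t - t_0 < q$), one has $x_t^n - \bar{x}_t = -\alpha \sum_{s=t_0}^{t-1}(u_s^n - \bar{u}_s)$, so that Assumption~\ref{ass:2} together with the momentum recursion bounds each $\mathbb{E}\|u_s^n - \bar{u}_s\|^2$ by a multiple of $L_f^2 L_g^2$; a Jensen/Cauchy--Schwarz step over the $q$ summands produces the coefficient $\tfrac{96 S_F^2}{5} q^2 \alpha^2 [L_f^2 L_g^2 (1 + 1/N) + 3 L_f^2 L_g^2]$ after reinstating the $S_F^2$ from the drift piece. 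Summing the descent inequality over $t = 0, \ldots, T-1$ and dividing by $T$ then yields the claimed bound.

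The main obstacle is the coupling between the client drift and the stochastic momentum error: each $u_t^n$ depends on the entire history of local iterates $x_s^n$, which themselves depend on past momentum values, so neither $\tfrac{1}{N}\sum_n \mathbb{E}\|x_t^n - \bar{x}_t\|^2$ nor $\mathbb{E}\|\bar{u}_{t+1} - \bar{\mu}_t\|^2$ contracts in isolation. Closing the recursion requires a joint Lyapunov potential involving both quantities along with $\tfrac{1}{N}\sum_n \mathbb{E}\|u_t^n - \bar{u}_t\|^2$, and the choice $\beta = 5 S_F \alpha$ (the $\eta$ in the theorem statement appearing to be a typographical stand-in for the learning rate $\alpha$) is what calibrates the momentum contraction to absorb the descent-step variance without allowing the conditional-sampling bias to be amplified through the momentum accumulation.
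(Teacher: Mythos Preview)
Your overall architecture---smoothness descent along $\bar{x}_t$, three-way split of the gradient error into client drift, conditional-sampling bias, and momentum error, followed by a recursive bound on each---matches the paper's proof closely, and your handling of the drift and bias pieces is correct. There is, however, a genuine gap in the momentum-error step, and it is tied to your decision to drop the negative $-\bigl(\tfrac{\alpha}{2}-\tfrac{\alpha^2 S_F}{2}\bigr)\|\bar{u}_{t+1}\|^2$ term from the descent inequality.

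You write down the correct unrolling
\[
\bar{u}_{t+1} - \bar{\mu}_t = (1-\beta)(\bar{u}_t - \bar{\mu}_{t-1}) + (1-\beta)(\bar{\mu}_{t-1} - \bar{\mu}_t) + \beta(\hat{g}_{t+1} - \bar{\mu}_t),
\]
but then summarize its effect as ``a one-step contraction of rate $(1-\beta)$ with per-step noise injection of order $\beta^2 L_f^2 L_g^2/N$,'' silently discarding the middle term $(1-\beta)(\bar{\mu}_{t-1}-\bar{\mu}_t)$. That term is not lower order: by $S_F$-smoothness it is $O(S_F\alpha)\|\bar{u}_t\|$ in norm, and after Young's inequality with parameter $O(\beta)$ and telescoping it lands in the averaged bound as $\tfrac{4\alpha^2 S_F^2}{\beta^2}\cdot\tfrac{1}{T}\sum_t\mathbb{E}\|\bar{u}_t\|^2$. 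With $\beta = 5S_F\alpha$ this coefficient is $4/25$, an $O(1)$ multiple of $\tfrac{1}{T}\sum_t\mathbb{E}\|\bar{u}_t\|^2$; bounding $\|\bar{u}_t\|^2\le L_f^2 L_g^2$ would then leave a non-vanishing additive constant and destroy the rate.

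This is exactly why the paper does \emph{not} drop the negative $\|\bar{u}_{t+1}\|^2$ term: combining descent and momentum lemmas gives a net coefficient $-(1-\alpha S_F - \tfrac{16\alpha^2 S_F^2}{\beta^2})$ in front of $\tfrac{1}{T}\sum_t\mathbb{E}\|\bar{u}_{t+1}\|^2$, and the choice $\beta=5S_F\alpha$ is what makes $\tfrac{16\alpha^2 S_F^2}{\beta^2}=16/25<1$ so the combined term stays nonpositive and can \emph{then} be discarded. Your explanation of the role of $\beta=5S_F\alpha$---``calibrates the momentum contraction to absorb the descent-step variance''---has the mechanism reversed: it is the descent lemma's negative $\|\bar{u}\|^2$ that absorbs the momentum recursion's positive one. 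The joint Lyapunov you allude to must therefore retain $\|\bar{u}_{t+1}\|^2$ (carried through the descent inequality), not only the drift and $\tfrac{1}{N}\sum_n\|u_t^n-\bar{u}_t\|^2$ quantities you list.
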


\begin{corollary} \label{coro:4.5}
We choose $\alpha = \frac{1}{6S_F} \sqrt{\frac{N}{T}}$, $q = (T / N^3)^{1/4}$, we have 
\begin{align}
\frac{1}{T} \sum_{t=0}^{T-1} \| \nabla F(\bar{x}_{t})\|^2 
\leq& \frac{12 S_F[F(\bar{x}_{0}) - F(\bar{x}_{*})]}{(NT)^{1/2}} \nonumber\\
&+ \frac{112 L_f^2 L_g^2}{3 (NT)^{1/2}} + \frac{4 L_g^2 S_f^2 \sigma_g^2}{m} + 
\frac{48 L_f^2 L_g^2}{5 (NT)^{1/2}} +  \frac{20 L_f^2 L_g^2}{3 (NT)^{1/2}}\nonumber
\end{align}
\end{corollary}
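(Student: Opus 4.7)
The claim is obtained by directly substituting the stated schedule into the bound of Theorem~\ref{thm:4.4} and simplifying term by term; no new inequality has to be established. My plan is as follows.

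First, I would check that the parameter choices are admissible. The hypothesis $\alpha \leq \tfrac{1}{6qS_F}$ of Theorem~\ref{thm:4.4} becomes $\sqrt{N/T} \leq N^{3/4}/T^{1/4}$ under $\alpha = \tfrac{1}{6S_F}\sqrt{N/T}$ and $q = (T/N^3)^{1/4}$, which collapses to $T \geq 1/N$ and therefore holds for any meaningful horizon. I would then read the coupling $\beta = 5 S_F \eta$ in the theorem statement as $\beta = 5 S_F \alpha = \tfrac{5}{6}\sqrt{N/T}$ (the symbol $\eta$ in the theorem is inconsistent with the rest of the display and is naturally interpreted as $\alpha$), and I would take the outer initial batch $B = O(1)$, consistent with the intended scaling.

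Second, I would simplify each of the five right-hand-side terms. The first term becomes $\tfrac{12 S_F[F(\bar x_0) - F(\bar x_T)]}{\sqrt{NT}}$ because $\tfrac{1}{\alpha T} = \tfrac{6 S_F}{\sqrt{NT}}$. For the second term, the key scalar product is $q^2 \alpha^2 = \tfrac{1}{36 S_F^2 \sqrt{NT}}$, so $\tfrac{96 S_F^2}{5} q^2 \alpha^2 = \tfrac{8}{15\sqrt{NT}}$; multiplying by the bracketed factor $L_f^2 L_g^2(1 + 1/N) + 3 L_f^2 L_g^2$ and bounding $1 + 1/N \leq 2$ produces the claimed $O(L_f^2 L_g^2 / \sqrt{NT})$ contribution. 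The third term $\tfrac{4 L_g^2 S_f^2 \sigma_g^2}{m}$ passes through untouched. The fourth term $\tfrac{8 L_f^2 L_g^2}{\beta B T}$ becomes $\tfrac{48 L_f^2 L_g^2}{5 B \sqrt{NT}}$ upon substituting $\beta$, giving $\tfrac{48 L_f^2 L_g^2}{5\sqrt{NT}}$ with $B = 1$. The fifth term $\tfrac{8\beta L_f^2 L_g^2}{N}$ simplifies to $\tfrac{20 L_f^2 L_g^2}{3\sqrt{NT}}$ since $\beta/N = \tfrac{5}{6\sqrt{NT}}$.

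Finally I would collect the five simplified terms to reproduce the stated bound. The argument is essentially bookkeeping of constants; the only genuinely delicate point is the second term, where one must carry the factor $(1 + 1/N) + 3$ through the product $\tfrac{96 S_F^2}{5} q^2 \alpha^2$ without dropping a factor of $S_F^2$ or miscounting the $\sqrt{NT}$ scaling. Once that constant is tracked, the $1/\sqrt{NT}$ rate, the linear speedup in $N$, and the isolated $1/m$ bias term all fall out immediately, and the admissibility of the schedule ensures the underlying inequality of Theorem~\ref{thm:4.4} remains valid at the chosen $(\alpha, q, \beta)$.
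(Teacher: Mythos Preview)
Your proposal is correct and mirrors the paper's own argument: the corollary is derived purely by plugging the schedule $\alpha = \tfrac{1}{6S_F}\sqrt{N/T}$, $q = (T/N^3)^{1/4}$, $\beta = 5S_F\alpha$, $B = O(1)$ into the bound of Theorem~\ref{thm:4.4} and simplifying each of the five terms exactly as you outline. The only subtlety is the second term, where the paper in fact uses the constant $12$ from the proof of Theorem~\ref{thm:4.4} (bounding the bracket by $14\,L_f^2L_g^2$) rather than the $3$ printed in the theorem statement; with that replacement your bookkeeping reproduces the displayed $\tfrac{112}{3}$ up to the paper's own arithmetic.
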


\begin{remark}
We choose $b = O(1)$, $B = O(1)$, and $m = O(\varepsilon^{-2})$. According to Corollary \ref{coro:4.5} to make $\frac{1}{T} \sum_{t=0}^{T-1} \| \nabla F(\bar{x}_{t})\|^2 \leq \varepsilon^2$, we get $T = O(N^{-1}\varepsilon^{-4})$. Given $q = (T / N^3)^{1/4}$, the communication complexity is $\frac{T}{q} = (N T)^{3 / 4} = O(\varepsilon^{-3})$. The sample complexity is $mT = O(N^{-1}\varepsilon^{-6})$, which indicates FCSG-M also has the linear speedup with respect to the number of worker nodes. 
\end{remark}

\subsection{Acc-FCSG-M}
In the single-machine setting, \cite{hu2020biased} presents that under the general nonconvex conditional stochastic optimization objective, the lower bound of sample complexity is $O(\varepsilon^{-5})$. It means, the sample complexity achieved by FCSG and FCSG-M could be improved for nonconvex smooth conditional stochastic optimization objectives. To match the above lower bound of sample complexity, we propose an accelerated version of the FCSG-M (Acc-FCSG-M)  based on the momentum-based variance reduction technique. The details of the method are shown in Algorithm \ref{alg:2}.

Similar to the FCSG-M, in the beginning, each worker node initializes the model parameters and utilizes the stochastic gradient descent method to calculate the gradient estimator. Subsequently, in every $q$ iteration, all worker nodes perform communication with the central server, and the model parameters and gradient estimators are averaged. The key difference is that at step 13 in Acc-FCSG-M, we use the momentum-based variance reduction gradient estimator $u^n_{t+1}$ to track the gradient and update the model.
where $\beta \in (0,1)$. We establish the theoretical convergence guarantee of our new algorithm Acc-FCSG-M. 
All proofs are provided in the supplementary materials.
\begin{algorithm}[tb]
\caption{Acc-FCSG-M Algorithm }
\label{alg:2}
\begin{algorithmic}[1] 
\STATE {\bfseries Input:} $T$, momentum weight $ \beta$, learning rate $\alpha$, the number of local updates $q$, inner batch size $m$ and outer batch size $b$, as well as the initial outer batch size B ;  \\
\STATE {\bfseries Initialize:} $x_{0}^n = \frac{1}{N} \sum_{k=1}^{N} x_{0}^n$. Draw $B$ samples of $\{\xi_1^n, \cdots, \xi_B^n \}$ and draw $m$ samples $\mathcal{B}^{n}_{0, i} = \left\{\eta^n_{ij}\right\}_{j=1}^m$ from $P(\eta^n \mid \xi^n_i)$ for each $\xi^n_i \in \{\xi_1^n, \cdots, \xi_B^n \}$, then  $u^n_1 = \frac{1}{B} \sum_{i=1}^{B} \nabla \hat{F}^n(x_0^n; \xi_{0,i}^n,\mathcal{B}^n_{0, i})$  for $n \in [N]$.  \\
\FOR{$t = 1, 2, \ldots, T$}
\FOR{$n = 1, 2, \ldots, N$}
\IF {$\mod(t,q) = 0$}
\STATE {\bfseries Server Update:} 
\STATE ${u}_{t}^n = \bar{u}_{t}= \frac{1}{N} \sum_{i=1}^{N} u_{t}^n$
\STATE $x_{t}^n = \bar{x}_{t}= \frac{1}{N} \sum_{n=1}^{N}(x_{t-1}^{n} -  \alpha u_{t}^n) $\\
\ELSE
\STATE $x_{t, n} = x_{t-1}^n - \alpha u_{t}^n $
\ENDIF
\STATE Draw $b$ samples of $\{\xi_{t, 1}^n, \cdots, \xi_{t, b}^n \}$ \\
\STATE  Draw $m$ samples $\mathcal{B}^{n}_{t, n} = \left\{\eta^n_{ij}\right\}_{j=1}^m$ from $P(\eta^n\mid \xi^n_{t, i})$ for each $\xi^n_{t, i} \in \{\xi_{t, 1}^n, \cdots, \xi_{t, B}^n \}$,
\\
\STATE
$u_{t+1}^n = \frac{1}{b} \sum_{i=1}^{b} \nabla \hat{F}^n(x_t^n; \xi_{t, i}^n,\mathcal{B}^n_{t, i}) + (1-\beta)(u_{t}^n -  \frac{1}{b} \sum_{i=1}^{b} \nabla \hat{F}^n(x_{t-1}^n; \xi_{t, i}^n,\mathcal{B}^n_{t, i}))$\\
\ENDFOR
\ENDFOR
\STATE {\bfseries Output:} $x$ chosen uniformly random from $\{\bar{x}_t\}_{t=1}^{T}$.
\end{algorithmic}
\end{algorithm}

\begin{theorem} \label{thm:4.7} Suppose Assumptions \ref{ass:1}, \ref{ass:2} and \ref{ass:3} hold, $\alpha \leq \frac{1}{6 q S_F}$ and $\beta = 5 S_F \alpha$. Acc-FCSG-M has the following 
\begin{align}
&\frac{1}{T}\sum_{t=0}^{T-1} \| \nabla F(\bar{x}_{t})\|^2 \leq \frac{2[F(\bar{x}_{0}) - F(\bar{x}_{T})]}{T \alpha}  + \frac{3 L_f^2 L_g^2}{\beta B N T} + \frac{13 L_f^2 L_g^{2} c^{2}}{6 S_F^{2}}\alpha^{2} + \frac{3 L_g^2 S_f^2 \sigma_g^2}{m} +  \frac{6 \beta L_f^2}{Nb} \nonumber
\end{align}
\end{theorem}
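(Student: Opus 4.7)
The plan is to follow the standard descent-lemma plus gradient-tracking-error template that underlies the earlier theorems, but adapted to the STORM-style variance-reduced estimator used in Acc-FCSG-M. First, by $S_F$-smoothness of $F$ (inherited from Assumptions \ref{ass:1} and \ref{ass:2}), I would write
\begin{align}
\mathbb{E}\,F(\bar{x}_{t+1}) \le F(\bar{x}_{t}) - \alpha \langle \nabla F(\bar{x}_{t}), \bar{u}_{t+1}\rangle + \frac{S_F \alpha^{2}}{2}\mathbb{E}\|\bar{u}_{t+1}\|^{2}, \nonumber
\end{align}
then use $2\langle a, b\rangle = \|a\|^2+\|b\|^2-\|a-b\|^2$ to split the inner product, which converts the descent into a bound involving $\|\nabla F(\bar{x}_t)\|^2$ on the left and $\mathbb{E}\|\bar{u}_{t+1}-\nabla F(\bar{x}_t)\|^2$ on the right. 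Rearranging and summing over $t$ will produce the $\frac{2[F(\bar{x}_0)-F(\bar{x}_T)]}{T\alpha}$ term on the right hand side.

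Second, the core step is to bound the tracking error $E_t:=\mathbb{E}\|\bar{u}_{t+1}-\nabla F(\bar{x}_t)\|^2$. Decompose $\nabla F(\bar{x}_t) = \frac{1}{N}\sum_n \nabla F^n(\bar{x}_t)$ and insert the per-client ``ideal'' biased surrogate $\nabla \hat{F}^n(x_t^n;\xi,\mathcal{B})$. Writing
\begin{align}
\bar{u}_{t+1}-\nabla F(\bar{x}_t) = (1-\beta)\bigl(\bar{u}_{t}-\nabla F(\bar{x}_{t-1})\bigr) + \text{(MVR noise)} + \text{(bias)} + \text{(drift)},\nonumber
\end{align}
the momentum-variance-reduction (MVR) noise is a martingale difference whose second moment is controlled by $S_f,S_g,L_f,L_g$ and the step difference $\|\bar{x}_t-\bar{x}_{t-1}\|^2 \le \alpha^2 \|\bar{u}_t\|^2$. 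The bias comes from replacing $\mathbb{E}_{\eta\mid\xi}g_\eta$ by its $m$-sample average; by Assumption \ref{ass:3} together with Lipschitz smoothness of $f$, this bias contributes $O(L_g^2 S_f^2 \sigma_g^2/m)$. The drift comes from $x_t^n\ne\bar{x}_t$. With $\beta=5S_F\alpha$, this recursion unfolds into a geometric sum, yielding after telescoping the terms $\frac{3L_f^2 L_g^2}{\beta B N T}$ (initialization error discounted by $(1-\beta)^T$), $\frac{6\beta L_f^2}{Nb}$ (MVR noise averaged across $N$ clients and $b$-size batches), and $\frac{3L_g^2 S_f^2\sigma_g^2}{m}$ (the irreducible conditional-sampling bias).

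Third, I need to control the client drift $\frac{1}{N}\sum_n \mathbb{E}\|x_t^n-\bar{x}_t\|^2$, which appears both in the tracking-error recursion and when relating $\nabla F^n(x_t^n)$ to $\nabla F^n(\bar{x}_t)$. Because synchronization happens every $q$ steps, I would expand $x_t^n - \bar{x}_t$ as at most $q$ accumulated local updates since the last sync and apply Assumption \ref{ass:2} to obtain a bound of order $q^2\alpha^2 L_f^2 L_g^2$ (analogous to Theorem \ref{thm:4.4}); under $\alpha \le \frac{1}{6qS_F}$ this can be absorbed into $\frac{13 L_f^2 L_g^2 c^2}{6 S_F^2}\alpha^2$ in the final bound, where $c$ is the constant carried from the smoothness/Lipschitz parameters.

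Finally, I would combine the descent inequality, the tracking-error recursion, and the drift bound into a single Lyapunov functional of the form $\Phi_t = F(\bar{x}_t) + \kappa \mathbb{E}\|\bar{u}_{t+1}-\nabla F(\bar{x}_t)\|^2$ for a suitable constant $\kappa$ (dictated by the coefficients of $E_t$ on the two sides), telescope from $t=0$ to $T-1$, divide by $T$, and read off the stated inequality. The main obstacle, as in any biased-gradient conditional stochastic analysis, will be the joint handling of the STORM-style recursion under a biased oracle: the usual STORM bound assumes unbiasedness, so I must keep the $\sigma_g^2/m$ bias term outside the geometric contraction (otherwise it would blow up as $\beta\to 0$) while still exploiting Lipschitz smoothness of $\nabla \hat F^n$ with respect to $x$ — verifying this Lipschitzness via Assumptions \ref{ass:1} and \ref{ass:2} (products and compositions of bounded/Lipschitz maps) and carefully tracking the resulting constant $c$ is the most delicate bookkeeping step.
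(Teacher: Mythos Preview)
Your high-level template (descent lemma, STORM tracking-error recursion, drift control, Lyapunov combination) matches the paper, but three concrete points in your plan diverge from what is actually needed to get the stated bound.

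\textbf{(i) Which error you track.} You propose to recurse on $E_t=\mathbb{E}\|\bar u_{t+1}-\nabla F(\bar x_t)\|^2$ and then ``keep the $\sigma_g^2/m$ bias term outside the geometric contraction.'' The paper avoids that difficulty altogether by tracking the error to the \emph{surrogate} gradient, i.e.\ $\mathbb{E}\|\bar u_{t+1}-\nabla \hat F(\bar x_t)\|^2$, for which the STORM martingale structure is exact. The bias $L_g^2S_f^2\sigma_g^2/m$ is peeled off once, at the descent-lemma step, via the decomposition $\nabla F(\bar x_t)\to \frac1N\sum_n\nabla F^n(x_t^n)\to \frac1N\sum_n\nabla \hat F^n(x_t^n)$, and never enters the recursion. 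The Lyapunov is $\Phi_t=F(\bar x_t)+\frac{3\alpha}{2\beta}\,\mathbb{E}\|\bar u_{t+1}-\nabla \hat F(x_t)\|^2$, so your $\kappa$ is not a free constant but forced to be $3\alpha/(2\beta)$.

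\textbf{(ii) What $c$ is and how $\beta$ is set.} You describe $c$ as ``the constant carried from the smoothness/Lipschitz parameters'' and use $\beta=5S_F\alpha$ from the theorem line. In the actual proof $\beta=c\,\alpha^2$ with the specific choice $c=\tfrac{30 S_F^2}{bN}$ (STORM scaling, not linear momentum), and the $\tfrac{13 L_f^2 L_g^2 c^2}{6S_F^2}\alpha^2$ term arises by plugging this $c$ into the consensus bound below, not from generic Lipschitz bookkeeping. The hypothesis $\beta=5S_F\alpha$ in the statement is inconsistent with the proof; with that linear choice the $\frac{3}{\beta BNT}$ and $\frac{6\beta}{Nb}$ terms would not produce the $(NT)^{-2/3}$ rates of the corollary.

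\textbf{(iii) Which consensus you control.} You plan to bound only the model drift $\frac1N\sum_n\|x_t^n-\bar x_t\|^2$ as in Theorem~\ref{thm:4.4}. For the STORM recursion in the paper (their Lemma~D.2), the contraction inequality carries an extra term $\sum_n\|u_t^n-\bar u_t\|^2$, i.e.\ the consensus of the \emph{gradient estimators}. That requires its own self-bounding lemma (their Lemma~D.4), obtained by unrolling the $u$-update over one communication round and summing; the model drift is then reduced to the $u$-consensus via $\|x_t^n-\bar x_t\|^2\le(q-1)\alpha^2\sum_s\|u_s^n-\bar u_s\|^2$. Without this piece the Lyapunov telescoping does not close.
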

\begin{corollary} \label{corollary:4.8}
We choose $q=\left(T / N^{2}\right)^{1 / 3}$. Therefore, $\alpha = \frac{1}{12 q S_F} = \frac{N^{2/3}}{12 S_F T^{1/3}}$, since $c = \frac{30 S_F^2}{ b  N }$, we have $\beta = c \alpha^2 =  \frac{5 N^{1/3}}{24 T^{2/3} b }$. And let $B = \frac{T^{1/3}}{N^{2/3}}$. Therefore, we have 
\begin{align}
\frac{1}{T}\sum_{t=0}^{T-1} \| \nabla F(\bar{x}_{t})\|^2 \leq& \frac{24 S_F [F(\bar{x}_{0}) - F(\bar{x}_{*})]}{(NT)^{2/3}} \nonumber\\
&+ \frac{72 L_f^2 L_g^2 b }{5 (NT)^{2/3}} + \frac{325 L_f^2 L_g^{2}}{24 b ^2 (TN)^{2/3}} + \frac{3 L_g^2 S_f^2 \sigma_g^2}{m}  + \frac{5 L_f^2 }{4 b ^2 (NT)^{2/3}} \nonumber
\end{align}

\end{corollary}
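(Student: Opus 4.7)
The statement is a tuning corollary: it specializes Theorem 4.7 by a particular coupled choice of $q,\alpha,\beta,B$ so that the five summands in the Theorem 4.7 bound all collapse to the same rate $(NT)^{-2/3}$. My plan is to proceed by direct substitution and term-by-term algebraic simplification — no new estimates are needed beyond those already supplied by Theorem 4.7.

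First I would verify admissibility of the choices. With $q=(T/N^2)^{1/3}$, we have $\alpha=\frac{1}{12 q S_F}=\frac{N^{2/3}}{12 S_F T^{1/3}}$, which comfortably satisfies the hypothesis $\alpha\le \frac{1}{6 q S_F}$ (by a factor of two). Using $c=\frac{30 S_F^2}{bN}$, a quick check gives $c\alpha^2=\frac{30 S_F^2}{bN}\cdot\frac{N^{4/3}}{144 S_F^2 T^{2/3}}=\frac{5 N^{1/3}}{24 b T^{2/3}}$, which matches the stated value of $\beta$ and lies in $(0,1)$ for $T$ sufficiently large. With $B=T^{1/3}/N^{2/3}$ the product $\beta B=\frac{5}{24 b (NT)^{1/3}}$ takes a particularly convenient form.

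Then I would push each of the five summands of Theorem 4.7 through the substitution. The deterministic progress term $2[F(\bar x_0)-F(\bar x_T)]/(T\alpha)$ yields $24 S_F[F(\bar x_0)-F(\bar x_*)]/(NT)^{2/3}$. The initialization term $\frac{3 L_f^2 L_g^2}{\beta B N T}$ becomes $\frac{72 L_f^2 L_g^2 b}{5 (NT)^{2/3}}$ from $\beta B N T=\frac{5 (NT)^{2/3}}{24 b}$. The variance-reduction term $\frac{13 L_f^2 L_g^2 c^2 \alpha^2}{6 S_F^2}$ becomes $\frac{325 L_f^2 L_g^2}{24 b^2 (NT)^{2/3}}$ using $c\alpha/S_F=\frac{5}{2 b (NT)^{1/3}}$. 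The inner-batch bias term $\frac{3 L_g^2 S_f^2 \sigma_g^2}{m}$ is untouched. Finally $\frac{6\beta L_f^2}{Nb}$ simplifies to $\frac{5 L_f^2}{4 b^2 (NT)^{2/3}}$. Summing the five contributions produces exactly the bound claimed in the corollary.

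The main obstacle is not mathematical depth but careful bookkeeping of exponents: one must be sure the coupling $\beta=c\alpha^2$ is consistent with the constraint implicit in Theorem 4.7's proof (the STORM-style recursion on the estimator error demands $\beta$ to scale like $\alpha^2$ with coefficient at least of order $S_F^2/(bN)$, which is precisely what $c$ encodes), and that each of the $N$- and $T$-exponents align so the choice $q=(T/N^2)^{1/3}$ genuinely equalizes the dominant terms. Once these bookkeeping checks are done, the corollary follows with no further analysis.
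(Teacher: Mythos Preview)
Your proposal is correct and follows essentially the same approach as the paper: the paper's own derivation of the corollary is nothing more than plugging the stated choices of $q,\alpha,\beta,B$ into the Theorem~4.7 bound and simplifying term by term, exactly as you outline. Your intermediate identities $\beta B=\frac{5}{24 b (NT)^{1/3}}$ and $c\alpha/S_F=\frac{5}{2 b (NT)^{1/3}}$ match the arithmetic implicit in the paper, and your observation that the coupling $\beta=c\alpha^2$ with $c=\frac{30 S_F^2}{bN}$ is what the STORM-type recursion actually requires (rather than the $\beta=5S_F\alpha$ appearing, apparently as a typo, in the Theorem~4.7 hypothesis) is exactly right.
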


\begin{remark}
We choose b as $O(1) (b \geq 1)$ and $m = O(\varepsilon^{-2})$ . According to Corollary \ref{corollary:4.8} to make $\frac{1}{T} \sum_{t=0}^{T-1} \| \nabla F(\bar{x}_{t})\|^2 \leq \varepsilon^2$,  
we get $T = O(N^{-1}\varepsilon^{-3})$
and $\frac{T}{q} = (N T)^{2 / 3} = O(\varepsilon^{-2})$. The sample complexity is $O(N^{-1}\varepsilon^{-5})$. The communication complexity is $O(\varepsilon^{-2})$.  $T =  O(N^{-1}\varepsilon^{-3})$ indicates the linear speedup of our algorithm.
\end{remark}

\section{Experiments}
In this section, we conduct experiments to validate the efficiency of our algorithms on two machine learning tasks: 1) Invariant Logistic Regression, and 2) Federated Model-Agnostic Meta-Learning. 
The experiments are run on CPU machines with AMD EPYC 7513 32-Core Processors as well as NVIDIA RTX A6000.

\subsection{Invariant Logistic Regression}

Invariant learning has an important role in robust classifier training \cite{mroueh2015learning}. In this section, We compare the performance of our algorithms, FCSG and FCSG-M, on the distributed invariant logistic regression to evaluate the benefit from momentum and the effect of inner batch size, and the  problem was formulated by \cite{hu2020sample}:
\begin{equation}
\begin{aligned}
&\min_{x} \frac{1}{N} \sum_{n=1}^N \mathbb{E}_{\xi^n = (a,b)} [l_n(x) + g(x)] \\
&\text { where } l_n(x) = \text{log} (1 + \text{exp}(-b \mathbb{E}_{\eta^n \mid \xi^n } [(\eta^n)^{\top} x])) \quad g(x) = \lambda \sum_{i=1}^{d} \frac{\gamma x_i^2}{1 + \gamma x_i^2}
\end{aligned}
\end{equation}
where $l_n(x)$ is the logistic loss function  and $g(x)$ is a non-convex regularization. We follow the experimental protocols in \cite{hu2020biased} and set the dimension of the model as 10 over 16 worker nodes. We construct the dataset $\xi^n = (a,b)$ and $\eta$ as follow: We sample $a \sim N (0, \sigma^2_1 I_d )$, set $b = \{\pm 1\}$ according to the sign of $a^{\top} x^{*} $, then sample $\eta^n_{ij} \sim N(a, \sigma_2^2 I_d)$. We choose $\sigma_1 = 1$, and consider the $\sigma_2 / \sigma_1$ from $\{1, 1.5, 2\}$. At each local iteration, we use a fixed mini-batch size $m$ from $\{1, 10, 100\}$. The outer batch size is set as 1. We test the model with 50000 outer samples to report the test accuracy. We carefully tune hyperparameters for both methods. $\lambda$ = 0.001 and $\alpha$ = 10. We run a grid search for the learning rate and choose the learning rate in the set $\{0.01, 0.005, 0.001\}$. $\beta$ in FCSG-M are chosen from the set $\{0.001, 0.01, 0.1, 0.5, 0.9\}$. The local update step is set as 50. The experiments are presented in Figure \ref{fig:1}.

\begin{figure}[t]
\centering
\subfigure[$m$ = 1]{
\hspace{0pt}
\includegraphics[width=.31\textwidth]{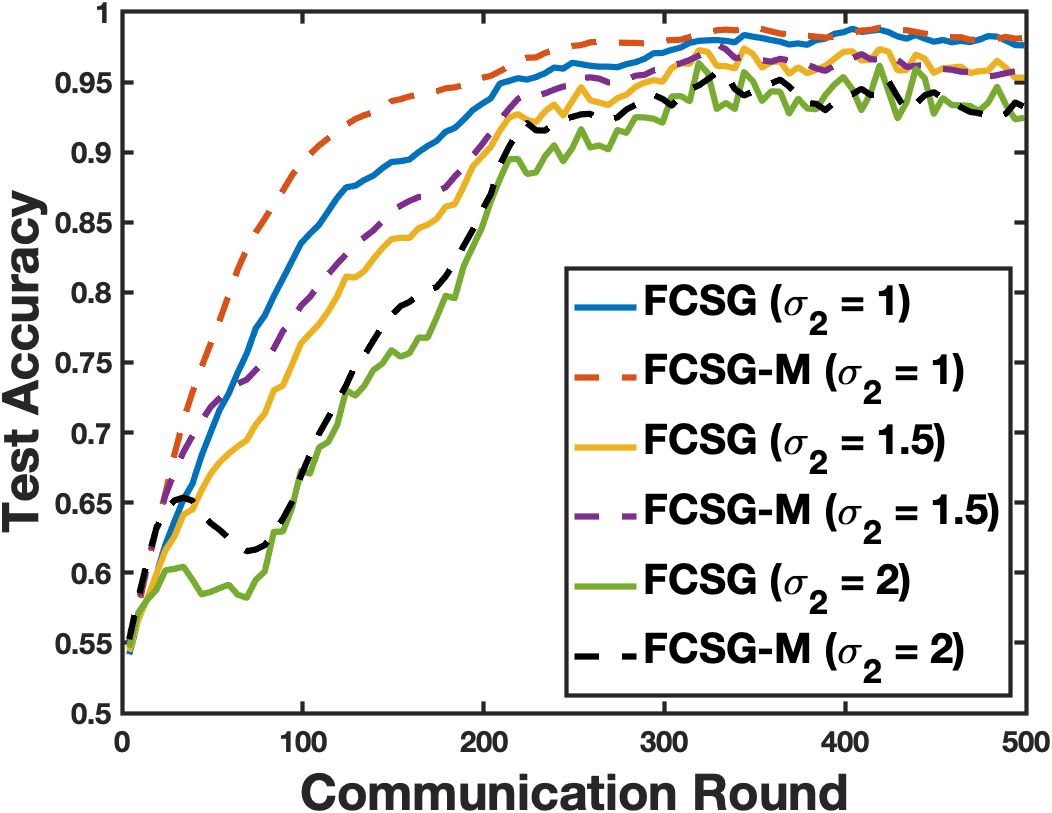}
}
\subfigure[$m$ = 10]{
\hspace{0pt}
\includegraphics[width=.31\textwidth]{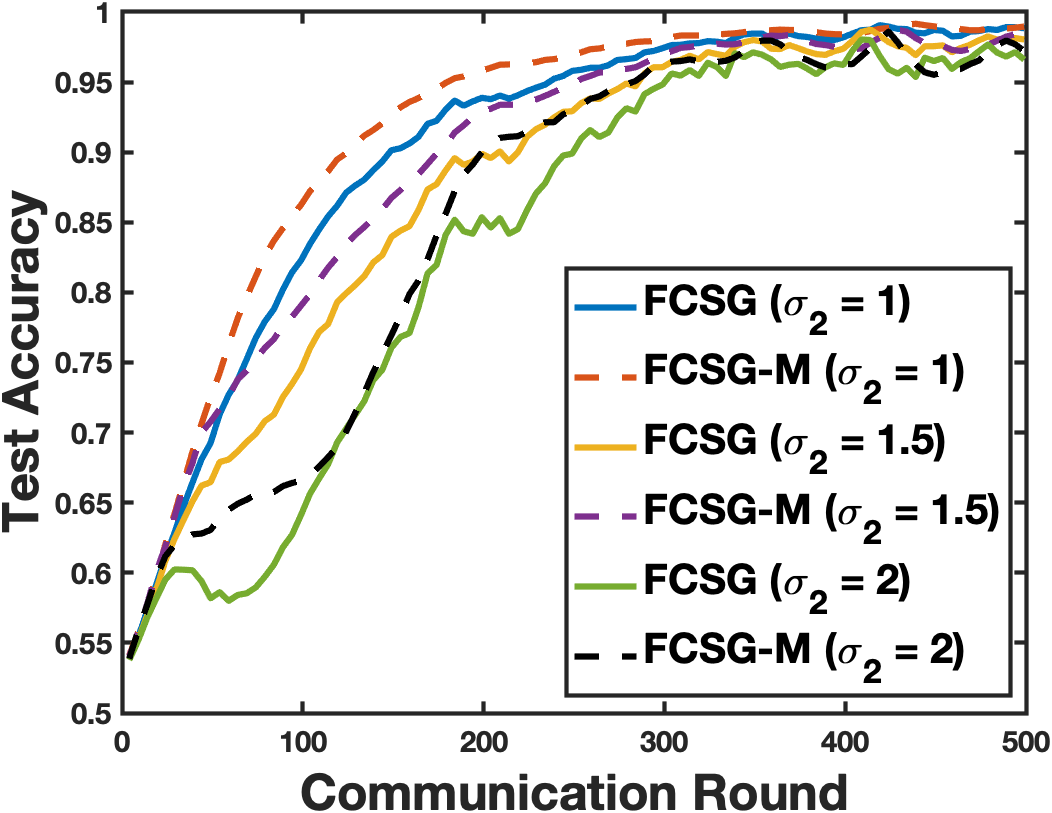}
}
\subfigure[$m$ = 100]{
\hspace{0pt}
\includegraphics[width=.31\textwidth]{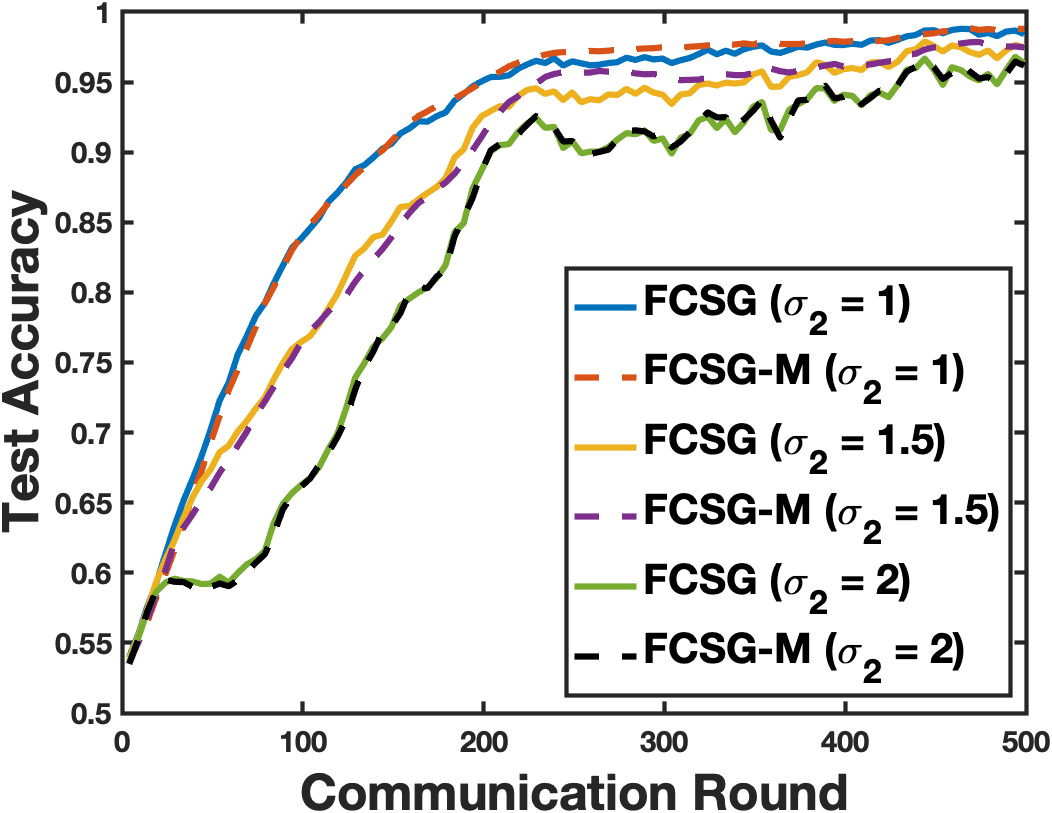}
}
\caption{Test accuracy vs the number of communication rounds for different inner mini-batch ($m$ = 1, 10, 100) under different noise ratios ($\sigma_2 / \sigma_1 = 1, 1.5, 2$).}
\label{fig:1}
\end{figure}
Figure \ref{fig:1} shows that when the noise ratio $\sigma_2 / \sigma_1$ increases, larger inner samples $m$ are needed, as suggested by the theory because a large batch size could reduce sample noise. In addition, when $m = 100$, FCSG and FCSG-M have similar performance. However, when batch size is small, compared with FCSG, FCSG-M has a more stable performance, because FCSG-M can use historic information to reduce the effect of stochastic gradient noise.

\subsection{Federated Model-Agnostic Meta-Learning}

Next, we evaluate our proposed algorithms with the few shot image classification task over the dataset with baselines: Local-SCGD, Local-SCGDM \cite{gao2022convergence}. MOML \cite{wang2021memory} is not suitable for this task since MOML requires maintaining an inner state for each task which is not permitted due to the large number of tasks in the Omniglot dataset.  Local-SCGD is the federated version of SCGD \cite{wang2017stochastic}. This task can be effectively solved via Model-Agnostic Meta-Learning \cite{finn2017model}.  

Meta-learning aims to train a model on various learning tasks, such that the model can easily adapt to a new task using few training samples. Model-agnostic meta-learning (MAML) \cite{finn2017model} is a popular meta-learning method to learn a good initialization with a gradient-based update, which can be formulated as the following conditional stochastic optimization problem:
\vspace{-5pt}
\begin{align}
\min _x \mathbb{E}_{i \sim \mathcal{P}_{\text {task}}, a \sim D_{\text {query }}^i} \mathcal{L}_i\left(\mathbb{E}_{b \sim D_{\text {support }}^i}\left(x - \lambda \nabla \mathcal{L}_i(x, b)\right), a\right) \nonumber   
\end{align}
where $\mathcal{P}_{\text {task }}$ denotes the learning tasks distribution, $D^{i}_{\text {support }}$ and $D^{i}_{\text {query}}$ are support
(training) dataset and query (testing) dataset of the learning task $i$, respectively. $\mathcal{L}_i(\cdot, D_i)$ is the loss function on dataset $D_i$ of task $i$. And the $\lambda$ is a fixed meta step size. Assume $\xi = (i, a)$ and $\eta = b$, the MAML problem is an example of conditional stochastic optimization where the support (training) samples in the inner layer for the meta-gradient update are drawn from the conditional distribution of $P (\eta \mid \xi)$ based on the sampled task in the outer layer.

Given there are a large number of pre-train tasks in MAML, federated learning is a good training strategy to improve efficiency because we can evenly distribute tasks over various worker nodes and the global server coordinates worker nodes to learn a good initial model collaboratively like MAML. Therefore, in Federated MAML, it is assumed that each device has part of the tasks. The optimization problem is defined as follows:
\vspace*{-5pt}
\begin{equation} \label{eq:6}
\begin{aligned}
& \min _{x \in \mathbb{R}^d} \frac{1}{N} \sum_{n=1}^N F^{n}(x) \triangleq \frac{1}{N} \sum_{n=1}^N f^{n}\left(g^{n}(x)\right) \\
& \text { where } g^{n}(x) = \mathbb{E}_{\eta^{n} \sim \mathcal{D}_{i, \text { support }}^{n}}\left[x - \lambda \nabla \mathcal{L}_i^{n}\left(x ; \eta^{n}\right)\right], f^{n}(y) =\mathbb{E}_{i \sim \mathcal{P}_{\text {task }}^{n}, a^n \sim \mathcal{D}_{i, \text { query }}^{n}} \mathcal{L}_i^{n}\left(y ; a^{n}\right)  .
\end{aligned}
\end{equation}

In this part, we apply our methods to few-shot image classification on the Omniglot \cite{lake2011one, finn2017model}. 
The Omniglot dataset contains 1623 different handwritten characters from 50 different alphabets and each of the 1623 characters consists of 20 instances drawn by different persons. We divide the characters to train/validation/test with  1028/172/423 by Torchmeta \cite{deleu2019torchmeta} and
tasks are evenly partitioned into disjoint sets and we distribute tasks randomly among 16 worker nodes. We conduct the fast learning of N-way-K-shot classification
following the experimental protocol in \cite{vinyals2016matching}. The N-way-K-shot classification denotes we sample N unseen classes, randomly provide the model with K different instances of each class for training, and evaluate the model’s ability to classify with new instances from the same N classes. We sample 15 data points for validation. We use a 4-layer convolutional neural model where each layer has 3 × 3 convolutions and 64 filters, followed by a ReLU nonlinearity and batch normalization \cite{finn2017model}. The images from Omniglot are downsampled to 28 × 28. For all methods, the model is trained using a single gradient step with a learning rate of 0.4. The model was evaluated using 3 gradient steps \cite{finn2017model}. Then we use grid search and carefully tune other hyper-parameters for each method. We choose the learning rate from the set $\{0.1, 0.05, 0.01\}$ and $\eta$ in as 1 \cite{gao2022convergence}. We select the inner state momentum coefficient for Local-SCGD and Local-SCGDM from $\{0.1, 0.5, 0.9\}$ and outside momentum coefficient for Local-SCGDM, FCSG-M and Acc-FCSG-M from $\{0.1, 0.5, 0.9\}$.

\begin{figure}[t]
\centering
\subfigure{
\hspace{0pt}
\includegraphics[width=.22\textwidth]{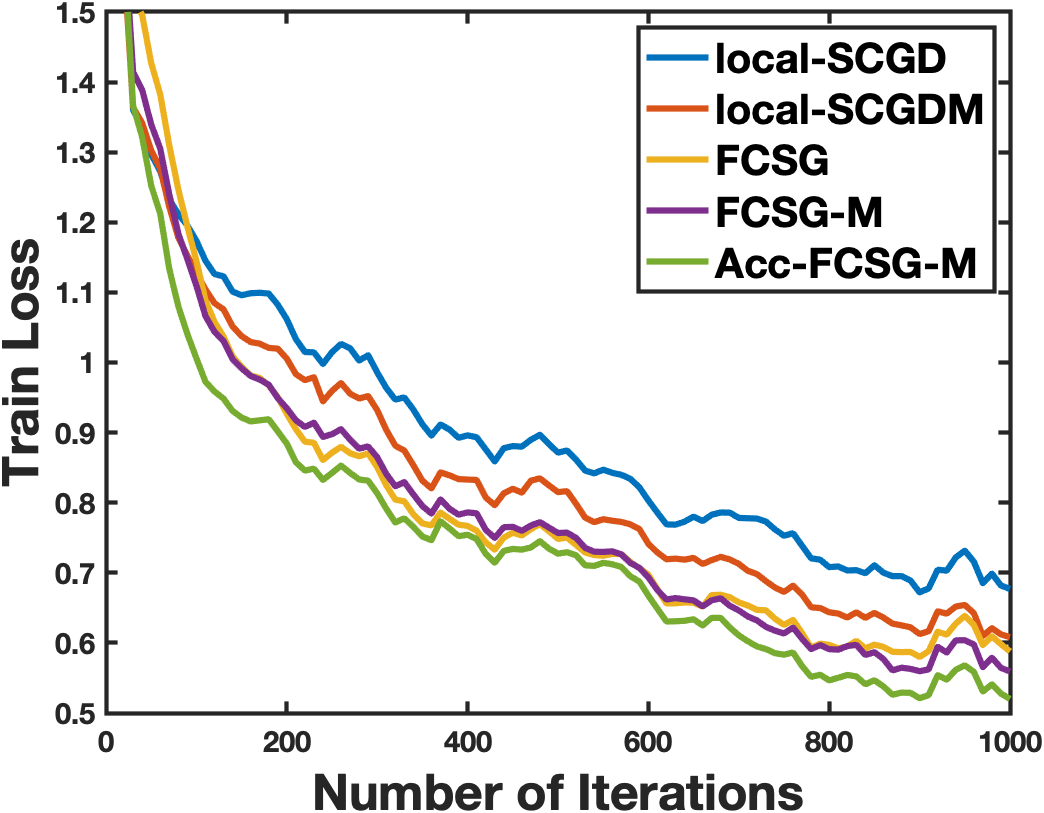}
}
\subfigure{
\hspace{0pt}
\includegraphics[width=.22\textwidth]{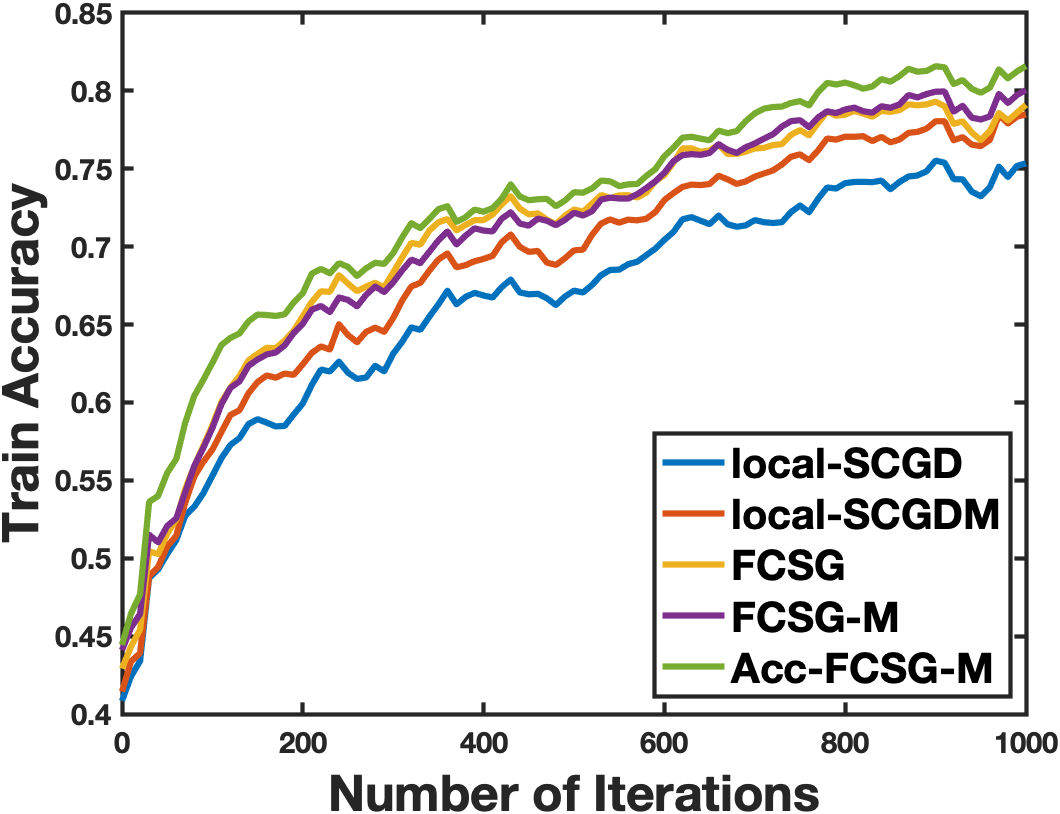}
}
\subfigure{
\hspace{0pt}
\includegraphics[width=.22\textwidth]{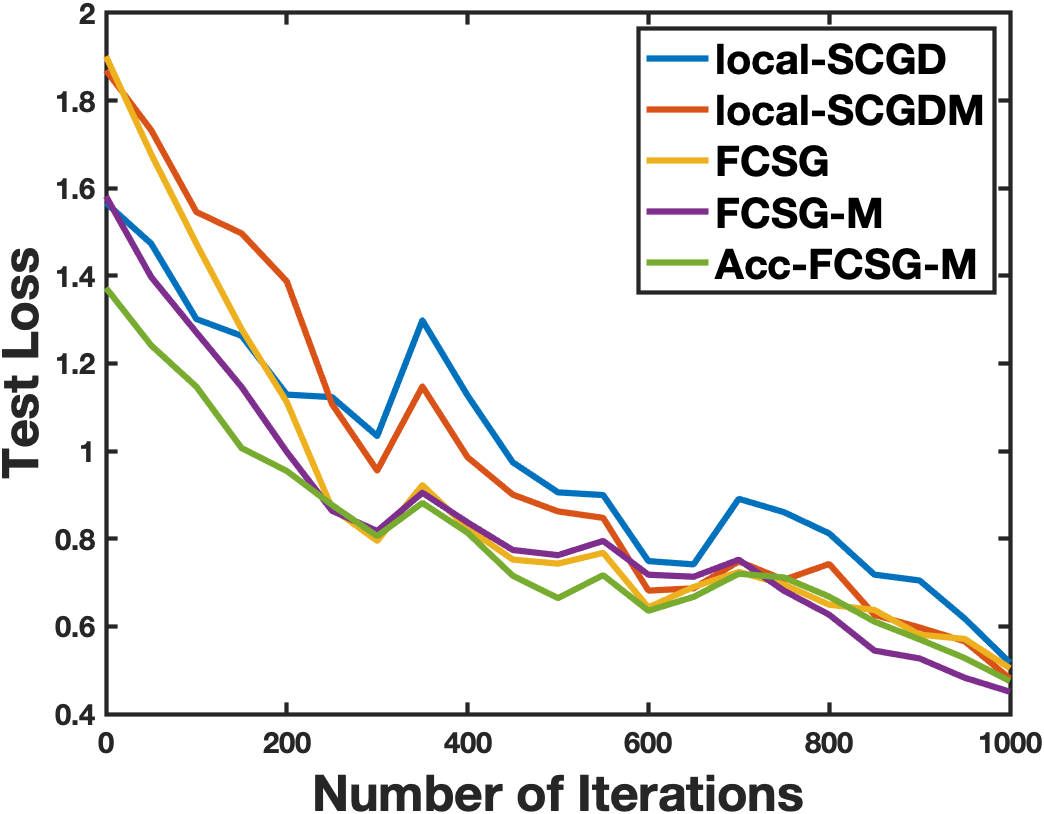}
}
\subfigure{
\hspace{0pt}
\includegraphics[width=.22\textwidth]{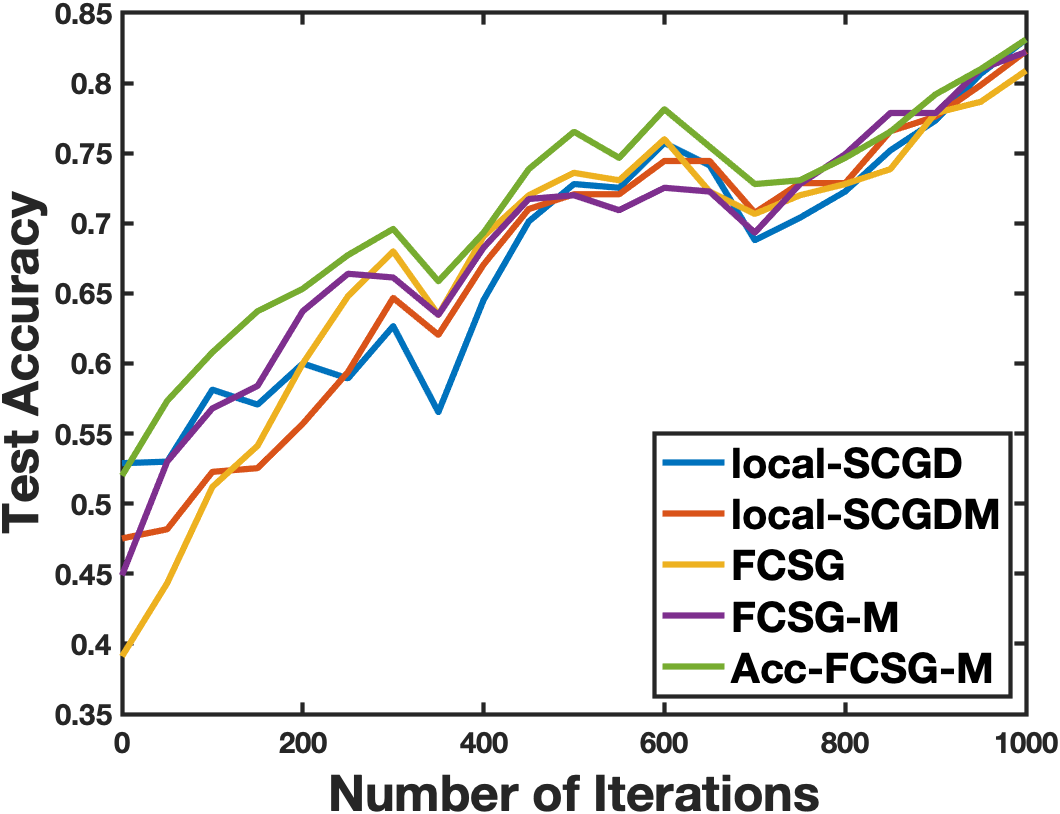}
}
\caption{Convergence results of the 5-way-1-shot case over Omniglot Dataset.}
\label{fig:2}
\end{figure}

\begin{figure}[t]
\centering
\subfigure{
\hspace{0pt}
\includegraphics[width=.22\textwidth]{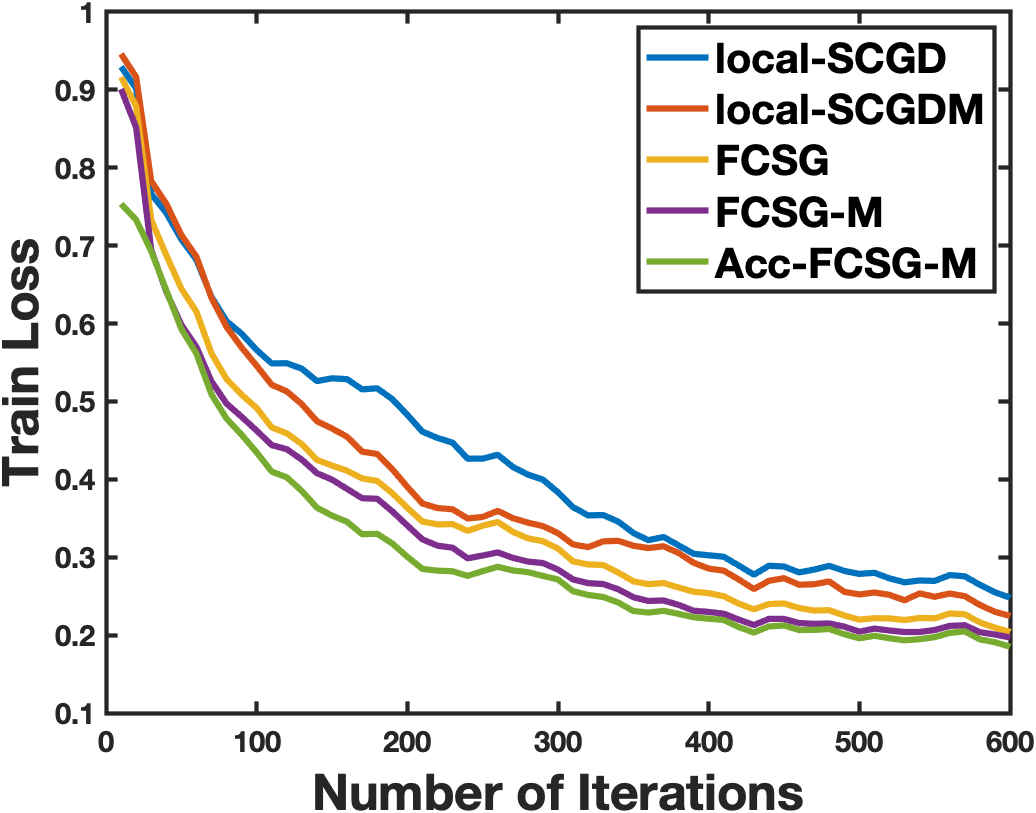}
}
\subfigure{
\hspace{0pt}
\includegraphics[width=.22\textwidth]{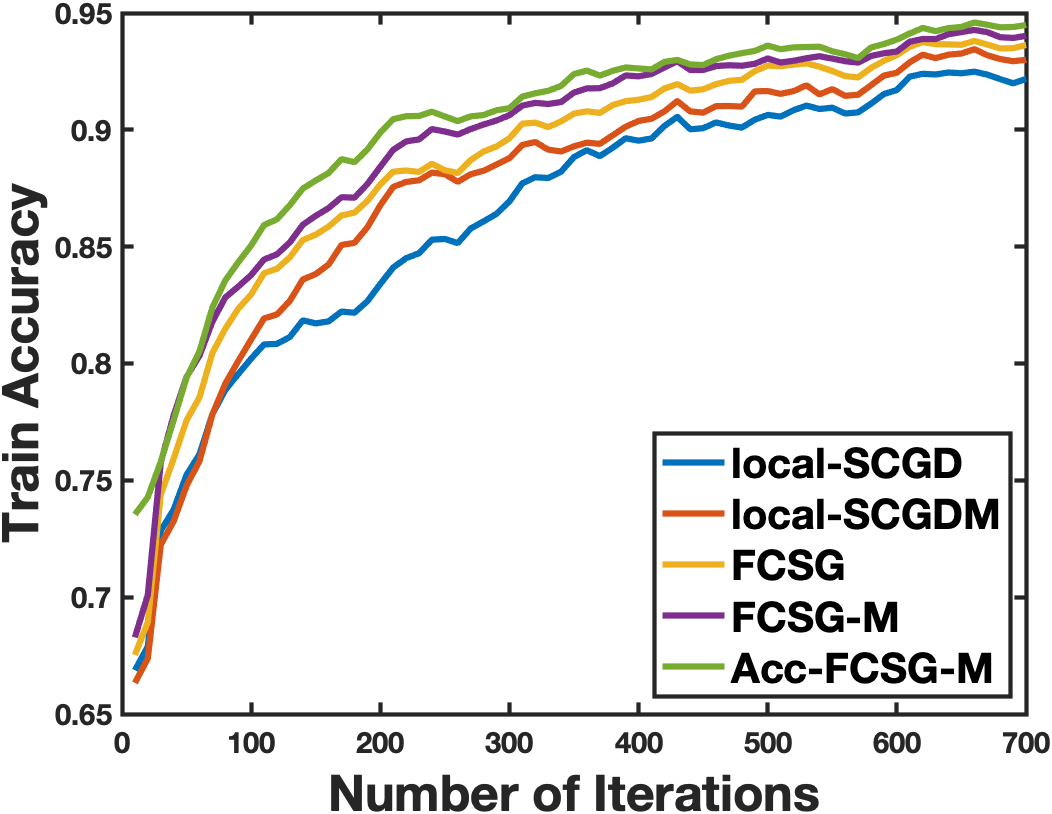}
}
\subfigure{
\hspace{0pt}
\includegraphics[width=.22\textwidth]{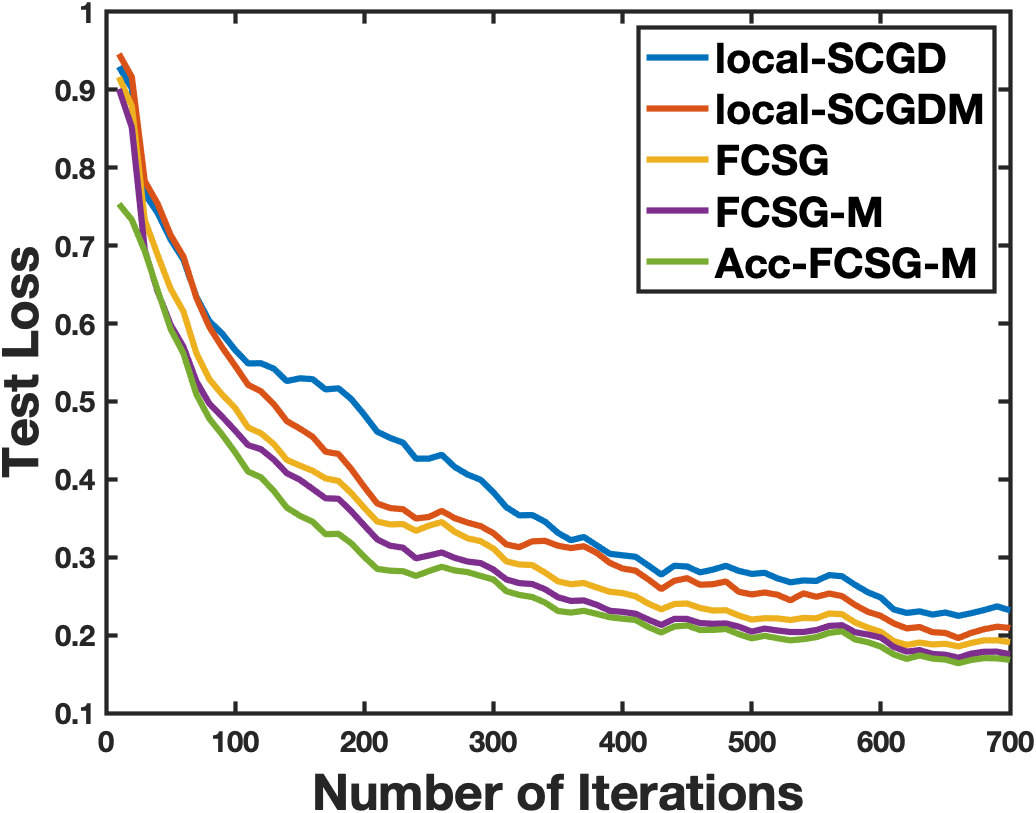}
}
\subfigure{
\hspace{0pt}
\includegraphics[width=.22\textwidth]{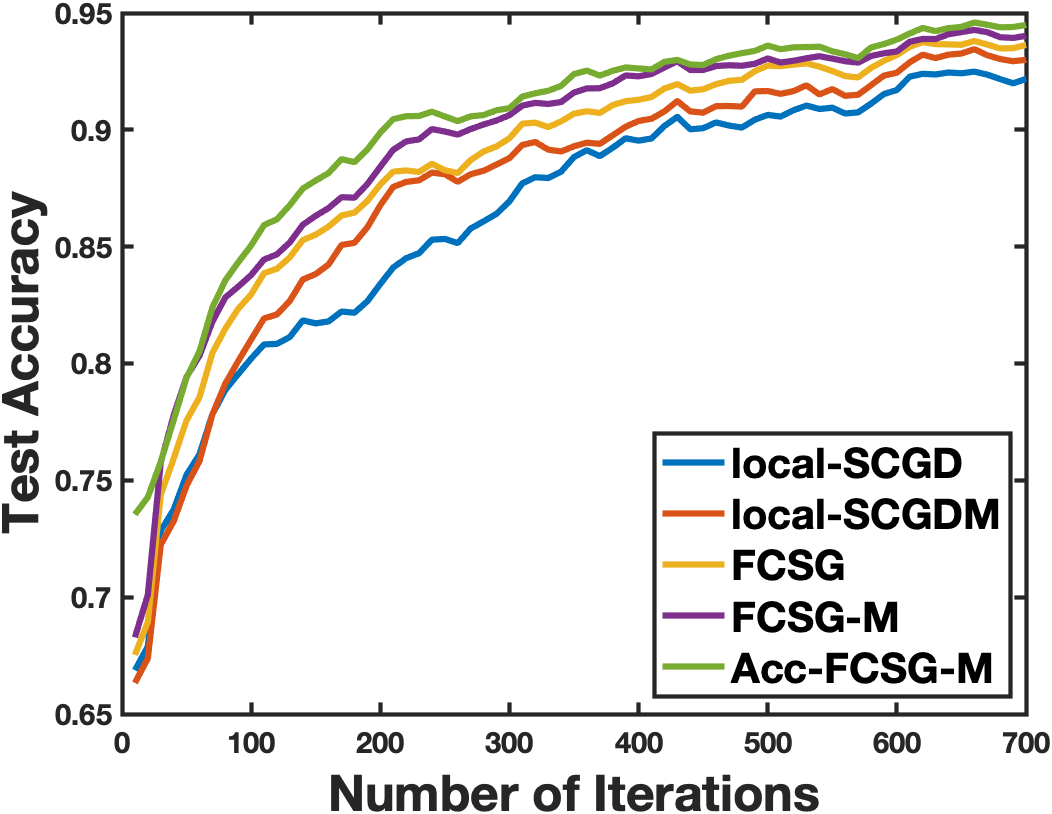}
}
\caption{Convergence results of the 5-way-5-shot case over Omniglot Dataset.}
\label{fig:3}
\end{figure}

Figures \ref{fig:2} and \ref{fig:3} show experimental results in the 5-way-1-shot and 5-way-5-shot cases, respectively.  Results show that our algorithms outperform baselines by a large margin. The main reason for Local-SCGD and Local-SCGDM to have bad performance is that converting the MAML optimization to the stochastic compositional optimization is unreasonable. It ignores the effect of task sampling on the training and inner training data distribution changes based on the sampled tasks in the outer layer. The use of the momentum-like inner state to deal with the MAML will slow down the convergence and we have to tune the extra momentum coefficient for the inner state. In addition, the momentum-like inner state also introduces extra communication costs because the server needs to average the inner state as in Local-SCGDM. In addition, comparing the results in Figures \ref{fig:2} and \ref{fig:3}, we can see when the K increases in the few-shot learning, the training performance is improved, which matches the theoretical analysis that a large inner batch-size $m$ benefits the model training.

\subsection{Federated Online AUPRC maximization}

AUROC maximization in FL have been studied in  \cite{guo2020communication, yuan2021federated, wu2023solving} and AUPRC maximization is also used to solve the imbalanced classification. Existing AUPRC algorithms maintain an inner state for each data point. \cite{wu2023serverless} consider the online AUPRC in the decentralized learning. For the large-scale distributed data over multiple clients, algorithms for online AUPRC maximization in FL is necessary.

Following  \cite{qi2021stochastic} and \cite{wu2023serverless}, the surrogate function of average precision (AP) for online AUPRC maximization is:
\begin{align} 
\hat{\text{AP}} = \mathbb{E}_{\mathbf{\xi}^{+} \sim  \mathcal{D}^{+}} \frac{\mathbb{E}_{\mathbf{\xi} \sim  \mathcal{D}} \mathbf{I}\left(y = 1\right)  \ell \left(x; z^{+}, z\right)}{\mathbb{E}_{\mathbf{\xi} \sim \mathcal{D}} \quad \ell\left(x; z^{+}, z \right)}  \nonumber
\end{align}
where $\ell\left(x; z^{+}, z \right) = (\max \{s - h(x; z^{+}) + h(x; z), 0\})^2$ and $h(x; z)$ is the prediction score function of input $z$ with model $x$. Federated Online AUPRC maximization could be reformulated as:
\begin{align} \label{eq:4}
\min_{x} F (x) &= \min_{x} \frac{1}{N} \sum_{n=1}^n \mathbb{E}_{\xi_n \sim \mathcal{D}_n^{+}} f(\mathbb{E}_{\xi^{\prime}_n \sim \mathcal{D}_n} g^n(\mathbf{x}; \xi_n, \xi^{\prime}_n))
\end{align}
where $\mathbf{\xi}_n = (z_n, y_n) \sim \mathcal{D}_n$ and $\mathbf{\xi}^{+}_n = (z^{+}_n, y^{+}_n) \sim \mathcal{D}_n^{+}$ are samples drawn from the whole datasets and positive datasets, respectively.
It is a two-level problem and the inner objective depends on both $\mathbf{\xi}$ and $\mathbf{\xi}^{+}$. Since federated online AUPRC is a special example of a federated CSO, our algorithms could be directly applied to it.

We choose MNIST dataset and CIFAR-10 datasets. As AUROC maximization in federated settings has been demonstrated in existing works, \cite{guo2020communication, yuan2021federated}, we use CODA+ in \cite{yuan2021federated} as a baseline. Another baseline is the FedAvg with cross-entropy loss. Since AUPRC is used for binary classification, the first half of the classes in the MNIST and CIFAR10 datasets are designated to be a negative class, and the rest half of the classes are considered to be the positive class. Then, we remove 80\% of the positive examples in the training set to make it imbalanced, while keeping the test set unchanged. The results show that our algorithms could be used to solve the online AUPRC maximization in FL and it largely improves the model's performance.
\begin{table}
\centering
  \caption{Final averaged AP scores on the testing data.
}\label{tb1}
  \setlength{\tabcolsep}{12pt}
  \begin{tabular}{lccccc}
    \hline
    \textbf{Datasets}   & FedAvg &  CODA+ & FCSG & FCSG-M	& Acc-FCSG-M  \\
  \hline
MNIST & 0.9357 & 0.9733 & 0.9868 & 0.9878 & 0.9879 \\
  \hline
CIFAR-10 & 0.5059 & 0.6039 & 0.7130 & 0.7157 & 0.7184 \\ 
  \hline
  \end{tabular}
\end{table}

\section{Conclusion}
In this paper, we studied federated conditional stochastic optimization under the general nonconvex setting. To the best of our knowledge, this is the first paper proposing algorithms for the federated conditional stochastic optimization problem. We first used the biased stochastic first-order gradient to design an algorithm called FCSG, which we proved to have a sample complexity of $O(\epsilon^{-6})$, and communication complexity of $O(\epsilon^{-3})$ to reach an $\epsilon$-stationary point. FCSG enjoys an appealing linear speedup with respect to the number of worker nodes. To improve the empirical performances of FCSG, we also proposed a novel algorithm (\emph{i.e.}, FCSG-M), which achieves the same theoretical guarantees as FCSG. To fill the gap from lower-bound complexity, we introduced an accelerated version of FCSG-M, called Acc-FCSG-M, using variance reduction technique, which is optimal for the nonconvex smooth federated conditional stochastic optimization problems as it matches the best possible complexity result achieved by the single-machine 
method. It has a sample complexity of $O(\epsilon^{-5})$, and communication complexity of $O(\epsilon^{-2})$. And the communication complexity achieves the best communication complexity of the nonconvex optimization in federated learning. Experimental results on the machine learning tasks validate the effectiveness of our algorithms.

\textbf{Limitation} Conditional stochastic optimization has much broader applications and a more comprehensive evaluation of our proposed algorithms on other use cases would be a promising future direction. In addition, FCSG-M and accelerated FCSG-M require higher communication overhead. 


\pagebreak
\medskip

{
\small
\bibliography{paper}

\begin{thebibliography}{50}
\providecommand{\natexlab}[1]{#1}
\providecommand{\url}[1]{\texttt{#1}}
\expandafter\ifx\csname urlstyle\endcsname\relax
  \providecommand{\doi}[1]{doi: #1}\else
  \providecommand{\doi}{doi: \begingroup \urlstyle{rm}\Url}\fi

\bibitem[Bao et~al.(2022{\natexlab{a}})Bao, Gu, and Huang]{bao2022accelerated}
Runxue Bao, Bin Gu, and Heng Huang.
\newblock An accelerated doubly stochastic gradient method with faster explicit
  model identification.
\newblock In \emph{Proceedings of the 31st ACM International Conference on
  Information \& Knowledge Management}, pages 57--66, 2022{\natexlab{a}}.

\bibitem[Bao et~al.(2022{\natexlab{b}})Bao, Wu, Xian, and Huang]{bao2022doubly}
Runxue Bao, Xidong Wu, Wenhan Xian, and Heng Huang.
\newblock Doubly sparse asynchronous learning for stochastic composite
  optimization.
\newblock In \emph{Thirty-First International Joint Conference on Artificial
  Intelligence (IJCAI)}, pages 1916--1922, 2022{\natexlab{b}}.

\bibitem[Chen et~al.(2018)Chen, Luo, Dong, Li, and He]{chen2018federated}
Fei Chen, Mi~Luo, Zhenhua Dong, Zhenguo Li, and Xiuqiang He.
\newblock Federated meta-learning with fast convergence and efficient
  communication.
\newblock \emph{arXiv preprint arXiv:1802.07876}, 2018.

\bibitem[Chen(2021)]{chen2021representation}
Jiangce Chen.
\newblock \emph{Representation-agnostic Shape Matching of 3D Models with Graph
  Machine Learning Methods}.
\newblock PhD thesis, University of Connecticut, 2021.

\bibitem[Chen et~al.(2022)Chen, Ilies, and Ding]{chen2022graph}
Jiangce Chen, Horea~T Ilies, and Caiwen Ding.
\newblock Graph-based shape analysis for heterogeneous geometric datasets:
  Similarity, retrieval and substructure matching.
\newblock \emph{Computer-Aided Design}, 143:\penalty0 103125, 2022.

\bibitem[Cutkosky and Orabona(2019)]{cutkosky2019momentum}
Ashok Cutkosky and Francesco Orabona.
\newblock Momentum-based variance reduction in non-convex sgd.
\newblock \emph{Advances in neural information processing systems}, 32, 2019.

\bibitem[Dai et~al.(2017)Dai, He, Pan, Boots, and Song]{dai2017learning}
Bo~Dai, Niao He, Yunpeng Pan, Byron Boots, and Le~Song.
\newblock Learning from conditional distributions via dual embeddings.
\newblock In \emph{Artificial Intelligence and Statistics}, pages 1458--1467.
  PMLR, 2017.

\bibitem[Das et~al.(2022)Das, Acharya, Hashemi, Sanghavi, Dhillon, and
  Topcu]{das2022faster}
Rudrajit Das, Anish Acharya, Abolfazl Hashemi, Sujay Sanghavi, Inderjit~S
  Dhillon, and Ufuk Topcu.
\newblock Faster non-convex federated learning via global and local momentum.
\newblock In \emph{Uncertainty in Artificial Intelligence}, pages 496--506.
  PMLR, 2022.

\bibitem[Deleu et~al.(2019)Deleu, W\"urfl, Samiei, Cohen, and
  Bengio]{deleu2019torchmeta}
Tristan Deleu, Tobias W\"urfl, Mandana Samiei, Joseph~Paul Cohen, and Yoshua
  Bengio.
\newblock {Torchmeta: A Meta-Learning library for PyTorch}, 2019.
\newblock URL \url{https://arxiv.org/abs/1909.06576}.
\newblock Available at: https://github.com/tristandeleu/pytorch-meta.

\bibitem[Dou et~al.(2022)Dou, Jia, Zaslavsky, Bao, Zhang, Ni, Liang, Mao, and
  Mao]{dou2022learning}
Jason~Xiaotian Dou, Minxue Jia, Nika Zaslavsky, Runxue Bao, Shiyi Zhang, Ke~Ni,
  Paul~Pu Liang, Haiyi Mao, and Zhihong Mao.
\newblock Learning more effective cell representations efficiently.
\newblock In \emph{NeurIPS 2022 Workshop on Learning Meaningful Representations
  of Life}, 2022.
\newblock URL \url{https://openreview.net/forum?id=oRULd-eaNZH}.

\bibitem[Fallah et~al.(2020)Fallah, Mokhtari, and
  Ozdaglar]{fallah2020personalized}
Alireza Fallah, Aryan Mokhtari, and Asuman Ozdaglar.
\newblock Personalized federated learning: A meta-learning approach.
\newblock \emph{arXiv preprint arXiv:2002.07948}, 2020.

\bibitem[Fang et~al.(2018)Fang, Li, Lin, and Zhang]{fang2018spider}
Cong Fang, Chris~Junchi Li, Zhouchen Lin, and Tong Zhang.
\newblock Spider: Near-optimal non-convex optimization via stochastic
  path-integrated differential estimator.
\newblock \emph{Advances in Neural Information Processing Systems}, 31, 2018.

\bibitem[Finn et~al.(2017)Finn, Abbeel, and Levine]{finn2017model}
Chelsea Finn, Pieter Abbeel, and Sergey Levine.
\newblock Model-agnostic meta-learning for fast adaptation of deep networks.
\newblock In \emph{International conference on machine learning}, pages
  1126--1135. PMLR, 2017.

\bibitem[Gao et~al.(2022)Gao, Li, and Huang]{gao2022convergence}
Hongchang Gao, Junyi Li, and Heng Huang.
\newblock On the convergence of local stochastic compositional gradient descent
  with momentum.
\newblock In \emph{International Conference on Machine Learning}, pages
  7017--7035. PMLR, 2022.

\bibitem[Gu et~al.(2023)Gu, Bao, Zhang, and Huang]{gu2023new}
Bin Gu, Runxue Bao, Chenkang Zhang, and Heng Huang.
\newblock New scalable and efficient online pairwise learning algorithm.
\newblock \emph{IEEE Transactions on Neural Networks and Learning Systems},
  2023.

\bibitem[Guo et~al.(2020)Guo, Liu, Yuan, Shen, Liu, and
  Yang]{guo2020communication}
Zhishuai Guo, Mingrui Liu, Zhuoning Yuan, Li~Shen, Wei Liu, and Tianbao Yang.
\newblock Communication-efficient distributed stochastic auc maximization with
  deep neural networks.
\newblock In \emph{International conference on machine learning}, pages
  3864--3874. PMLR, 2020.

\bibitem[Guo et~al.(2022)Guo, Jin, Luo, and Yang]{guo2022fedxl}
Zhishuai Guo, Rong Jin, Jiebo Luo, and Tianbao Yang.
\newblock Fedxl: Provable federated learning for deep x-risk optimization.
\newblock 2022.

\bibitem[Hu et~al.(2019)Hu, Li, Lian, Liu, and Yuan]{hu2019efficient}
Wenqing Hu, Chris~Junchi Li, Xiangru Lian, Ji~Liu, and Huizhuo Yuan.
\newblock Efficient smooth non-convex stochastic compositional optimization via
  stochastic recursive gradient descent.
\newblock \emph{Advances in Neural Information Processing Systems}, 32, 2019.

\bibitem[Hu et~al.(2020{\natexlab{a}})Hu, Chen, and He]{hu2020sample}
Yifan Hu, Xin Chen, and Niao He.
\newblock Sample complexity of sample average approximation for conditional
  stochastic optimization.
\newblock \emph{SIAM Journal on Optimization}, 30\penalty0 (3):\penalty0
  2103--2133, 2020{\natexlab{a}}.

\bibitem[Hu et~al.(2020{\natexlab{b}})Hu, Zhang, Chen, and He]{hu2020biased}
Yifan Hu, Siqi Zhang, Xin Chen, and Niao He.
\newblock Biased stochastic first-order methods for conditional stochastic
  optimization and applications in meta learning.
\newblock \emph{Advances in Neural Information Processing Systems},
  33:\penalty0 2759--2770, 2020{\natexlab{b}}.

\bibitem[Hu et~al.(2021)Hu, Chen, and He]{hu2021bias}
Yifan Hu, Xin Chen, and Niao He.
\newblock On the bias-variance-cost tradeoff of stochastic optimization.
\newblock \emph{Advances in Neural Information Processing Systems},
  34:\penalty0 22119--22131, 2021.

\bibitem[Hu et~al.(2023)Hu, Wu, and Huang]{hu2023beyond}
Zhengmian Hu, Xidong Wu, and Heng Huang.
\newblock Beyond lipschitz smoothness: A tighter analysis for nonconvex
  optimization.
\newblock 2023.

\bibitem[Huang et~al.(2021)Huang, Li, and Huang]{huang2021compositional}
Feihu Huang, Junyi Li, and Heng Huang.
\newblock Compositional federated learning: Applications in distributionally
  robust averaging and meta learning.
\newblock \emph{arXiv preprint arXiv:2106.11264}, 2021.

\bibitem[Huo et~al.(2018)Huo, Gu, Liu, and Huang]{huo2018accelerated}
Zhouyuan Huo, Bin Gu, Ji~Liu, and Heng Huang.
\newblock Accelerated method for stochastic composition optimization with
  nonsmooth regularization.
\newblock In \emph{Thirty-Second AAAI Conference on Artificial Intelligence},
  2018.

\bibitem[Karimireddy et~al.(2020{\natexlab{a}})Karimireddy, Jaggi, Kale, Mohri,
  Reddi, Stich, and Suresh]{karimireddy2020mime}
Sai~Praneeth Karimireddy, Martin Jaggi, Satyen Kale, Mehryar Mohri, Sashank~J
  Reddi, Sebastian~U Stich, and Ananda~Theertha Suresh.
\newblock Mime: Mimicking centralized stochastic algorithms in federated
  learning.
\newblock \emph{arXiv preprint arXiv:2008.03606}, 2020{\natexlab{a}}.

\bibitem[Karimireddy et~al.(2020{\natexlab{b}})Karimireddy, Kale, Mohri, Reddi,
  Stich, and Suresh]{karimireddy2020scaffold}
Sai~Praneeth Karimireddy, Satyen Kale, Mehryar Mohri, Sashank Reddi, Sebastian
  Stich, and Ananda~Theertha Suresh.
\newblock Scaffold: Stochastic controlled averaging for federated learning.
\newblock In \emph{International conference on machine learning}, pages
  5132--5143. PMLR, 2020{\natexlab{b}}.

\bibitem[Khanduri et~al.(2021)Khanduri, Sharma, Yang, Hong, Liu, Rajawat, and
  Varshney]{khanduri2021stem}
Prashant Khanduri, Pranay Sharma, Haibo Yang, Mingyi Hong, Jia Liu, Ketan
  Rajawat, and Pramod Varshney.
\newblock Stem: A stochastic two-sided momentum algorithm achieving
  near-optimal sample and communication complexities for federated learning.
\newblock \emph{Advances in Neural Information Processing Systems},
  34:\penalty0 6050--6061, 2021.

\bibitem[Lake et~al.(2011)Lake, Salakhutdinov, Gross, and
  Tenenbaum]{lake2011one}
Brenden Lake, Ruslan Salakhutdinov, Jason Gross, and Joshua Tenenbaum.
\newblock One shot learning of simple visual concepts.
\newblock In \emph{Proceedings of the annual meeting of the cognitive science
  society}, volume~33, 2011.

\bibitem[Li et~al.(2022)Li, Zhang, Maybank, and Tao]{li2022bridging}
Jizhizi Li, Jing Zhang, Stephen~J Maybank, and Dacheng Tao.
\newblock Bridging composite and real: towards end-to-end deep image matting.
\newblock \emph{International Journal of Computer Vision}, 130\penalty0
  (2):\penalty0 246--266, 2022.

\bibitem[Li et~al.(2020)Li, Sahu, Zaheer, Sanjabi, Talwalkar, and
  Smith]{li2020federated}
Tian Li, Anit~Kumar Sahu, Manzil Zaheer, Maziar Sanjabi, Ameet Talwalkar, and
  Virginia Smith.
\newblock Federated optimization in heterogeneous networks.
\newblock \emph{Proceedings of Machine learning and systems}, 2:\penalty0
  429--450, 2020.

\bibitem[McMahan et~al.(2017)McMahan, Moore, Ramage, Hampson, and
  y~Arcas]{mcmahan2017communication}
Brendan McMahan, Eider Moore, Daniel Ramage, Seth Hampson, and Blaise~Aguera
  y~Arcas.
\newblock Communication-efficient learning of deep networks from decentralized
  data.
\newblock In \emph{Artificial intelligence and statistics}, pages 1273--1282.
  PMLR, 2017.

\bibitem[Mei et~al.(2023)Mei, Zhou, Lan, Venkataramani, and Wei]{mei2023mac}
Yongsheng Mei, Hanhan Zhou, Tian Lan, Guru Venkataramani, and Peng Wei.
\newblock Mac-po: Multi-agent experience replay via collective priority
  optimization.
\newblock In \emph{Proceedings of the 2023 International Conference on
  Autonomous Agents and Multiagent Systems}, pages 466--475, 2023.

\bibitem[Mroueh et~al.(2015)Mroueh, Voinea, and Poggio]{mroueh2015learning}
Youssef Mroueh, Stephen Voinea, and Tomaso~A Poggio.
\newblock Learning with group invariant features: A kernel perspective.
\newblock \emph{Advances in neural information processing systems}, 28, 2015.

\bibitem[Qi et~al.(2021)Qi, Luo, Xu, Ji, and Yang]{qi2021stochastic}
Qi~Qi, Youzhi Luo, Zhao Xu, Shuiwang Ji, and Tianbao Yang.
\newblock Stochastic optimization of areas under precision-recall curves with
  provable convergence.
\newblock \emph{Advances in Neural Information Processing Systems},
  34:\penalty0 1752--1765, 2021.

\bibitem[Reddi et~al.(2020)Reddi, Charles, Zaheer, Garrett, Rush,
  Kone{\v{c}}n{\`y}, Kumar, and McMahan]{reddi2020adaptive}
Sashank Reddi, Zachary Charles, Manzil Zaheer, Zachary Garrett, Keith Rush,
  Jakub Kone{\v{c}}n{\`y}, Sanjiv Kumar, and H~Brendan McMahan.
\newblock Adaptive federated optimization.
\newblock \emph{arXiv preprint arXiv:2003.00295}, 2020.

\bibitem[Singh et~al.(2019)Singh, Sahani, and Gretton]{singh2019kernel}
Rahul Singh, Maneesh Sahani, and Arthur Gretton.
\newblock Kernel instrumental variable regression.
\newblock \emph{Advances in Neural Information Processing Systems}, 32, 2019.

\bibitem[Vinyals et~al.(2016)Vinyals, Blundell, Lillicrap, Wierstra,
  et~al.]{vinyals2016matching}
Oriol Vinyals, Charles Blundell, Timothy Lillicrap, Daan Wierstra, et~al.
\newblock Matching networks for one shot learning.
\newblock \emph{Advances in neural information processing systems}, 29, 2016.

\bibitem[Wang and Yang(2022)]{why2022finite}
Bokun Wang and Tianbao Yang.
\newblock Finite-sum coupled compositional stochastic optimization: Theory and
  applications.
\newblock In \emph{International Conference on Machine Learning, {ICML} 2022,
  17-23 July 2022, Baltimore, Maryland, {USA}}, volume 162 of \emph{Proceedings
  of Machine Learning Research}, pages 23292--23317. PMLR, 2022.

\bibitem[Wang et~al.(2021)Wang, Yuan, Ying, and Yang]{wang2021memory}
Bokun Wang, Zhuoning Yuan, Yiming Ying, and Tianbao Yang.
\newblock Memory-based optimization methods for model-agnostic meta-learning.
\newblock \emph{arXiv preprint arXiv:2106.04911}, 2021.

\bibitem[Wang et~al.(2022)Wang, Yang, Zhang, and Yang]{wang2022momentum}
Guanghui Wang, Ming Yang, Lijun Zhang, and Tianbao Yang.
\newblock Momentum accelerates the convergence of stochastic auprc
  maximization.
\newblock In \emph{International Conference on Artificial Intelligence and
  Statistics}, pages 3753--3771. PMLR, 2022.

\bibitem[Wang et~al.(2023)Wang, Li, and Hu]{wang2023b}
Jialu Wang, Ping Li, and Feifang Hu.
\newblock A/b testing in network data with covariate-adaptive randomization.
\newblock 2023.

\bibitem[Wang et~al.(2016)Wang, Liu, and Fang]{wang2016accelerating}
Mengdi Wang, Ji~Liu, and Ethan Fang.
\newblock Accelerating stochastic composition optimization.
\newblock \emph{Advances in Neural Information Processing Systems}, 29, 2016.

\bibitem[Wang et~al.(2017)Wang, Fang, and Liu]{wang2017stochastic}
Mengdi Wang, Ethan~X Fang, and Han Liu.
\newblock Stochastic compositional gradient descent: algorithms for minimizing
  compositions of expected-value functions.
\newblock \emph{Mathematical Programming}, 161\penalty0 (1):\penalty0 419--449,
  2017.

\bibitem[Wu et~al.(2022)Wu, Huang, and Huang]{wu2022fast}
Xidong Wu, Feihu Huang, and Heng Huang.
\newblock Fast stochastic recursive momentum methods for imbalanced data
  mining.
\newblock In \emph{2022 IEEE International Conference on Data Mining (ICDM)},
  pages 578--587. IEEE, 2022.

\bibitem[Wu et~al.(2023{\natexlab{a}})Wu, Hu, Pei, and Huang]{wu2023serverless}
Xidong Wu, Zhengmian Hu, Jian Pei, and Heng Huang.
\newblock Serverless federated auprc optimization for multi-party collaborative
  imbalanced data mining.
\newblock In \emph{Proceedings of the 29th ACM SIGKDD Conference on Knowledge
  Discovery and Data Mining}, pages 2648--2659, 2023{\natexlab{a}}.

\bibitem[Wu et~al.(2023{\natexlab{b}})Wu, Huang, Hu, and Huang]{wu2023faster}
Xidong Wu, Feihu Huang, Zhengmian Hu, and Heng Huang.
\newblock Faster adaptive federated learning.
\newblock In \emph{Proceedings of the AAAI Conference on Artificial
  Intelligence}, volume~37, pages 10379--10387, 2023{\natexlab{b}}.

\bibitem[Wu et~al.(2023{\natexlab{c}})Wu, Sun, Hu, Zhang, and
  Huang]{wu2023solving}
Xidong Wu, Jianhui Sun, Zhengmian Hu, Aidong Zhang, and Heng Huang.
\newblock Solving a class of non-convex minimax optimization in federated
  learning.
\newblock \emph{Advances in Neural Information Processing Systems (NeurIPS)},
  2023{\natexlab{c}}.

\bibitem[Yuan and Hu(2020)]{yuan2020stochastic}
Huizhuo Yuan and Wenqing Hu.
\newblock Stochastic recursive momentum method for non-convex compositional
  optimization.
\newblock \emph{arXiv preprint arXiv:2006.01688}, 2020.

\bibitem[Yuan et~al.(2021)Yuan, Guo, Xu, Ying, and Yang]{yuan2021federated}
Zhuoning Yuan, Zhishuai Guo, Yi~Xu, Yiming Ying, and Tianbao Yang.
\newblock Federated deep auc maximization for hetergeneous data with a constant
  communication complexity.
\newblock In \emph{International Conference on Machine Learning}, pages
  12219--12229. PMLR, 2021.

\bibitem[Zhang and Xiao(2019)]{zhang2019composite}
Junyu Zhang and Lin Xiao.
\newblock A composite randomized incremental gradient method.
\newblock In \emph{International Conference on Machine Learning}, pages
  7454--7462. PMLR, 2019.

\end{thebibliography}
\bibliographystyle{plainnat}
}

\newpage
\appendix
\section{Supplementary material}
$s_t$ denotes the $s_{t}=\lfloor t / q\rfloor $. We define
$ g^n(x, \xi^n) = \mathbb{E}_{\eta^n \mid \xi^n}g^n_{\eta^n}(x, \xi^n)$ and
$\hat{g}^n(x, \xi) = \frac{1}{m} \sum_{j=1}^m g^n(x, \xi; \eta_j), \hat{F}^n(x; \xi^n, \{\eta^n_j\}^m_{j = 1}) = f^n_{\xi^n}(\hat{g}^n(x, \xi^n))$.
\subsection{Basic Lemma}
For convenience, in the subsequent analysis, $s_t$ denotes the $s_{t}=\lfloor t / q \rfloor$ and $s_t \in [ \lfloor T / q \rfloor ]$.
$ g^n(x, \xi^n) = \mathbb{E}_{\eta^n \mid \xi^n}g^n_{\eta^n}(x, \xi^n)$ and
$\hat{g}^n(x, \xi^n) = \frac{1}{m} \sum_{j=1}^m g^n(x, \xi^n; \eta_j), \hat{F}^n(x; \xi^n, \{\eta^n_j\}^m_{j = 1}) = f^n_{\xi^n}(\hat{g}^n(x, \xi^n))$.
\begin{align}
\nabla F^n(x) &= \nabla \mathbb{E}_{\xi^n}\left[f^n_{\xi^n}(g^n(x, \xi^n))\right] = \mathbb{E}_{\xi^n}\left[\nabla\left(f^n_{\xi^n}(g^n(x, \xi^n))\right)\right] \nonumber\\
&= \mathbb{E}_{\xi^n}\left[\nabla g^n(x, \xi^n)^{\top} \cdot \nabla_g f^n_{\xi^n}(g^n(x, \xi^n)) \right]
\end{align}
\begin{align}
\hat{F}^n(x) &= \mathbb{E}\hat{F}^n(x; \xi^n, \{\eta^n_j\}^m_{j = 1}) = \mathbb{E}_{\xi^n} \mathbb{E}_{\eta^n \mid \xi^n}\left[f^n_{\xi^n}(\hat{g}^n(x, \xi^n))\right],
\nabla \hat{F}^n(x) \nonumber\\
&= \mathbb{E}\nabla \hat{F}^n(x; \xi^n, \{\eta^n_j\}^m_{j = 1}) \nonumber
\end{align}
\begin{align}
F(x) = \frac{1}{N} \sum_{n = 1}^{N} F^n(x),  \hat{F}(x) = \frac{1}{N} \sum_{n = 1}^{N} \hat{F}^n(x)
\end{align}


\begin{lemma} \label{lem:A1}  \cite{khanduri2021stem}
For $x^n_t \in \mathbb{R}^{d}$, $n \in [N]$ and $\bar{x}_t \in \mathbb{R}^{d}$, we have 
\begin{align}
\sum_{n=1}^N\|x^n_t -  \bar{x}_t \|^{2} \leq \sum_{n=1}^N \|x_t^n\|^{2}
\end{align}
\end{lemma}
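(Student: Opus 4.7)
The statement is the standard fact that centering a collection of vectors at their mean can only decrease (or preserve) the sum of squared norms, equivalently that the sample variance is at most the raw second moment. The plan is therefore a direct algebraic expansion of the squared norm, using only the definition $\bar{x}_t = \frac{1}{N}\sum_{n=1}^{N} x_t^n$ and no properties of the iterates beyond membership in a Hilbert space.

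Concretely, I would expand
\begin{align}
\sum_{n=1}^{N} \|x_t^n - \bar{x}_t\|^2
= \sum_{n=1}^{N} \|x_t^n\|^2 - 2\sum_{n=1}^{N} \langle x_t^n, \bar{x}_t\rangle + N\|\bar{x}_t\|^2. \nonumber
\end{align}
Then I would pull $\bar{x}_t$ out of the middle sum and substitute its definition to get $\sum_{n=1}^{N}\langle x_t^n,\bar{x}_t\rangle = \langle \sum_{n=1}^{N} x_t^n,\bar{x}_t\rangle = N\|\bar{x}_t\|^2$, so that the identity collapses to
\begin{align}
\sum_{n=1}^{N} \|x_t^n - \bar{x}_t\|^2 = \sum_{n=1}^{N} \|x_t^n\|^2 - N\|\bar{x}_t\|^2. \nonumber
\end{align}
Since $N\|\bar{x}_t\|^2 \ge 0$, dropping this nonnegative term yields the claim.

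There is essentially no obstacle here; this is a one-line Pythagorean/variance identity. The only subtlety worth flagging in the writeup is that equality holds precisely when $\bar{x}_t = 0$, so the inequality is typically not tight in the way it is later applied (the lemma is usually invoked after a shift so that $\bar{x}_t$ equals a fixed reference point and the sum on the right becomes a consensus error). An equivalent, coordinate-free way to state the same argument, which I would mention as a remark rather than spell out, is that $\bar{x}_t$ is the orthogonal projection of the tuple $(x_t^1,\dots,x_t^N)$ onto the consensus subspace $\{(y,\dots,y):y\in\mathbb{R}^d\}$ in the product Hilbert space, so the Pythagorean theorem gives the decomposition above with $N\|\bar{x}_t\|^2$ as the squared norm of the projection.
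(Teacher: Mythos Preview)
Your proof is correct. The paper itself does not supply a proof of this lemma; it simply cites \cite{khanduri2021stem} and states the inequality, so there is no ``paper's own proof'' to compare against. Your direct expansion yielding the variance identity $\sum_{n}\|x_t^n-\bar{x}_t\|^2=\sum_{n}\|x_t^n\|^2-N\|\bar{x}_t\|^2$ is the standard argument and is exactly what one would expect in the cited reference. One minor point worth making explicit in your writeup: the lemma as stated in the paper introduces $\bar{x}_t$ as an arbitrary element of $\mathbb{R}^d$, but the inequality is of course false for arbitrary $\bar{x}_t$; you correctly (and necessarily) read $\bar{x}_t=\frac{1}{N}\sum_{n=1}^N x_t^n$, which is consistent with the bar notation used throughout the paper and with every application of the lemma downstream.
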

\begin{lemma} \label{lem:A2} Under Assumption \ref{ass:1} \ref{ass:2} and \ref{ass:3}, in the n-th device, for a sample $\xi^n$ and $m$ i.i.d. samples $\{\eta^n_j\}^m_{j = 1} $ from the conditional distribution $P(\eta^n \mid \xi^n)$, and $\forall x, x_1, x_2 \in \mathcal{X}$ that is independent of $\xi^n$ and $\{\eta^n_j\}^m_{j = 1}$, we have\\
(a) (Lemma 2.2 in \cite{hu2020sample})\begin{align} 
\|\mathbb{E} \nabla \widehat{F}^n(x; \xi^n, \{\eta^n_j\}^m_{j = 1}) - \nabla F^n(x) \|^2 \leq \frac{ L_g^2 S_f^2 \sigma_g^2}{m}
\end{align} 
(b) (Proposition B.1 in  \cite{hu2020biased} $F^n(x)$ and $\hat{F}^n(x)$ are $S_F$-Lipschitz smooth where $S_F=S_g L_f+S_f L_g^2$ and we also have
\begin{align}
\mathbb{E}\left\|\nabla \hat{F}^n\left(x_1 ; \xi,\left\{\eta_j\right\}_{j=1}^m\right) - \nabla \hat{F}^n\left(x_2 ; \xi,\left\{\eta_j\right\}_{j=1}^m\right)\right\| \leq S_F \|x_1 - x_2\| \label{eq:21}
\end{align}
(c) (Proposition B.1 in  \cite{hu2020biased})
\begin{align}
&\mathbb{E}_{\xi^n}\left\|\nabla f^n_{\xi^n}(y)-\nabla \mathbb{E} f^n_{\xi^n}(y)\right\|_2^2 \leq L_f^2 \quad \mathbb{E}_{\xi^n \mid \eta^n}\left\|\nabla g_\eta^n(x, \xi^n) - \nabla \mathbb{E} g_\eta^n(x, \xi^n)\right\|_2^2 \leq L_g^2 \nonumber\\
&\mathbb{E}_{\xi^n} \mathbb{E}_{\xi^n \mid \eta^n} \left\|\nabla\left(f^n_{\xi^n}(\hat{g}^n(x, \xi^n))\right) - \nabla \hat{F}^n(x)\right\|^2 \leq L_f^2 L_g^2 \quad  \left\| \nabla \hat{F}^n(x)\right\|^2 \leq L_f^2 L_g^2 \nonumber
\end{align} 
(d)  (Bounded Heterogeneity) 
\begin{align}
\left\|\nabla \hat{F}^n(x) - \nabla \hat{F}^k(x)\right\|^2 \leq 4 L_f^2 L_g^2    
\end{align}
\end{lemma}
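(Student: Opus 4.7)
The plan is straightforward: part (d) is the only piece of Lemma~A.2 that requires an original argument, since parts (a)--(c) are quoted directly from \cite{hu2020sample} and \cite{hu2020biased}. Part (d) is a one-step corollary of the uniform bound $\|\nabla \hat{F}^n(x)\|^2 \leq L_f^2 L_g^2$ furnished by part (c), which holds for every client index $n \in [N]$ because Assumption~\ref{ass:2} is imposed uniformly across all workers.

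First I would apply the elementary inequality $\|a - b\|^2 \leq 2\|a\|^2 + 2\|b\|^2$ (a direct consequence of $(\|a\|+\|b\|)^2 \leq 2(\|a\|^2 + \|b\|^2)$) to the quantity $\|\nabla \hat{F}^n(x) - \nabla \hat{F}^k(x)\|^2$. This splits the heterogeneity term into the sum of two squared gradient norms, one attached to client $n$ and one to client $k$. Then I would invoke part (c) twice, contributing at most $L_f^2 L_g^2$ for each client; summing with the factor $2$ carried from the first step yields exactly $4 L_f^2 L_g^2$, matching the stated bound.

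There is essentially no technical obstacle here, so the proposal amounts to checking a one-line chain of inequalities. The only subtlety worth flagging is that the argument relies critically on the bound in (c) being uniform in the client index, which is exactly what Assumption~\ref{ass:2} guarantees; without such uniformity a separate dissimilarity parameter would be needed. The resulting constant $4$ is crude but is the natural form that downstream convergence lemmas (e.g.\ controlling the client-drift term $\sum_n \|x_t^n - \bar{x}_t\|^2$ in Theorems~\ref{thm:4.1}, \ref{thm:4.4}, and \ref{thm:4.7}) are written to absorb, so no tightening is required for the theory that follows.
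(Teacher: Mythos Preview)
Your proposal is correct and follows essentially the same approach as the paper: apply $\|a-b\|^2 \le 2\|a\|^2 + 2\|b\|^2$ and then bound each term by $L_f^2 L_g^2$ using part (c), yielding $4L_f^2 L_g^2$. The paper's proof is exactly this one-line chain of inequalities, so there is nothing to add.
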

\begin{proof}
(e) Based on the (d), we know $\nabla \hat{F}^n(x)$ has bounded gradient, therefore, 
\begin{align}
\left\|\nabla \hat{F}^n(x) - \nabla \hat{F}^k(x)\right\|^2 \leq 2 \| \nabla \hat{F}^n(x) \|^2 + 2\| \nabla \hat{F}^k(x) \|^2 \leq 4 L_f^2 L_g^2 \nonumber
\end{align}
\end{proof}

\begin{lemma} \label{lem:A3}
For $n \in [N] $, we have  
\begin{align}
& \mathbb{E}\|\frac{1}{b} \sum_{i=1}^{b} \nabla \hat{F}^n(x_t^n; \xi_{t, i}^n,\mathcal{B}^n_{t, i}) - \nabla \hat{F}^n(x^n_{t}) \|^2 \leq \frac{L^2_f L^2_g}{b} \nonumber\\
&\sum_{n=1}^{N} \mathbb{E}\|\nabla \hat{F}^n(x^n_{t}) - \frac{1}{N} \sum_{k=1}^{N} \nabla \hat{F}^k(x^k_{t}) \|^2 \leq 6 S_F^{2} \sum_{n=1}^{N} \mathbb{E}\|x^n_{t} -  \bar{x}_{t}\|^{2} + 12 N L_f^{2} L_g^2 \nonumber
\end{align}

\end{lemma}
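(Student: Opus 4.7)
The plan is to treat the two inequalities independently: the first is a standard variance-of-a-mean bound, and the second is a client-drift estimate handled by a three-term decomposition that isolates the smoothness contribution from the heterogeneity contribution.

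For the first bound, I would note that conditional on $x_t^n$, the $b$ pairs $(\xi_{t,i}^n, \mathcal{B}^n_{t,i})$ are i.i.d., and by definition
\[
\mathbb{E}\bigl[\nabla \hat{F}^n(x_t^n;\xi_{t,i}^n,\mathcal{B}^n_{t,i})\bigr] = \nabla \hat{F}^n(x_t^n).
\]
Hence the expected squared norm of the centered average equals $1/b$ times the per-sample variance. By Lemma~\ref{lem:A2}(c), each centered sample has second moment at most $L_f^2 L_g^2$, so dividing by $b$ yields $L_f^2 L_g^2 / b$. This part is routine and should fit in two or three lines.

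For the second bound, the plan is to insert $\pm \nabla \hat{F}^n(\bar{x}_t)$ and $\pm \tfrac{1}{N}\sum_k \nabla \hat{F}^k(\bar{x}_t)$ to decompose
\[
\nabla \hat{F}^n(x^n_t) - \tfrac{1}{N}\sum_k \nabla \hat{F}^k(x^k_t) = A_n + B_n + C_n,
\]
where $A_n := \nabla \hat{F}^n(x^n_t) - \nabla \hat{F}^n(\bar{x}_t)$, $B_n := \nabla \hat{F}^n(\bar{x}_t) - \tfrac{1}{N}\sum_k \nabla \hat{F}^k(\bar{x}_t)$, and $C_n := \tfrac{1}{N}\sum_k[\nabla \hat{F}^k(\bar{x}_t) - \nabla \hat{F}^k(x^k_t)]$. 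Applying $\|A_n+B_n+C_n\|^2 \le 3(\|A_n\|^2+\|B_n\|^2+\|C_n\|^2)$ and summing over $n$, I would bound each piece: $\sum_n \|A_n\|^2 \le S_F^2 \sum_n \|x^n_t-\bar{x}_t\|^2$ by the $S_F$-smoothness of $\hat{F}^n$ from Lemma~\ref{lem:A2}(b); $\sum_n \|B_n\|^2 \le 4 N L_f^2 L_g^2$ by Jensen and the bounded heterogeneity in Lemma~\ref{lem:A2}(d); and $\sum_n \|C_n\|^2 \le S_F^2 \sum_k \|x^k_t-\bar{x}_t\|^2$ by Jensen followed again by $S_F$-smoothness. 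Collecting terms gives exactly $6 S_F^2 \sum_n \|x^n_t-\bar{x}_t\|^2 + 12 N L_f^2 L_g^2$.

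The only delicate step is making sure the $1/N$ averaging in the $C_n$ term collapses cleanly: after Jensen, $\sum_n \|C_n\|^2 \le \tfrac{1}{N}\sum_n \sum_k \|\nabla \hat{F}^k(\bar{x}_t)-\nabla \hat{F}^k(x^k_t)\|^2 = \sum_k \|\nabla \hat{F}^k(\bar{x}_t)-\nabla \hat{F}^k(x^k_t)\|^2$, which is then controlled by smoothness. Other than tracking constants through the three-term split, there is no real obstacle — both claims follow directly from Lemma~\ref{lem:A2}.
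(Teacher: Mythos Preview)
Your proposal is correct and follows essentially the same route as the paper: the first bound via the variance-of-a-mean identity plus Lemma~\ref{lem:A2}(c), and the second via the same three-term split $A_n+B_n+C_n$ bounded by $S_F$-smoothness (Lemma~\ref{lem:A2}(b)) for $A_n,C_n$ and bounded heterogeneity (Lemma~\ref{lem:A2}(d)) for $B_n$. The only cosmetic difference is the ordering of the three terms; the constants collect identically.
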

\begin{proof}
1) we have
\begin{align}
&\mathbb{E}\|\frac{1}{b} \sum_{i=1}^{b} \nabla \hat{F}^n(x_t^n; \xi_{t, i}^n,\mathcal{B}^n_{t, i}) - \nabla \hat{F}^n(x^n_{t}) \|^2 = \frac{1}{b^2} \mathbb{E}\|\sum_{i = 1}^{b} \nabla \hat{F}^n(x_t^n; \xi_{t, i}^n,\mathcal{B}^n_{t, i}) - \nabla \hat{F}^n(x^n_{t}) \|^2 \nonumber\\
=& \frac{1}{b^2} \sum_{i = 1}^{b} \mathbb{E}\| \nabla \hat{F}^n(x_t^n; \xi_{t, i}^n,\mathcal{B}^n_{t, i}) - \nabla \hat{F}^n(x^n_{t}) \|^2 
\leq \frac{L^2_f L^2_g}{b}  \nonumber
\end{align}
where the second equality is due to $ \mathbb{E} [\nabla \hat{F}^n(x_t^n; \xi_{t, i}^n,\mathcal{B}^n_{t, i}) - \nabla \hat{F}^n(x^n_{t})] = 0$ and the last inequality follows Lemma \ref{lem:A2} (c).

2)
\begin{align}
& \sum_{n=1}^{N} \mathbb{E}\|\nabla \hat{F}^n(x^n_{t}) - \frac{1}{N} \sum_{k=1}^{N} \nabla \hat{F}^k(x^k_{t}) \|^2 \nonumber\\
\leq& 3 \sum_{n=1}^{N}\mathbb{E}\left[\|\nabla \hat{F}^n(x^n_{t}) - \nabla \hat{F}^n(\bar{x}_{t})\|^{2} + \|\nabla \hat{F}(\bar{x}_{t}) -  \frac{1}{N} \sum_{k=1}^{N} \nabla \hat{F}(x^k_{t}) \|^2 + \|\nabla \hat{F}^n(\bar{x}_{t}) - \nabla \hat{F}(\bar{x}_{t}) \|^2 \right]\nonumber\\
\leq&  6 S_F^{2} \sum_{n=1}^{N} \mathbb{E}\|x^n_{t} -  \bar{x}_{t}\|^{2} + 3 \sum_{n=1}^{N} \mathbb{E}\|\nabla \hat{F}^n(\bar{x}_{t}) - \nabla \hat{F}(\bar{x}_{t})\|^{2} \nonumber\\
\leq& 6 S_F^{2} \sum_{n=1}^{N} \mathbb{E}\|x^n_{t} -  \bar{x}_{t}\|^{2} + 12 N L_f^{2} L_g^2
\end{align}
where the second inequality is due to Assumption  \ref{lem:A2} (b) and the last inequality is due to Assumption \ref{lem:A2} (d). 
\end{proof}

\section{Proof of FCSG Algorithm}
\subsection{Proofs of the Intermediate Lemmas}
\begin{lemma} \label{lem:B1}
Assume the sequence $\{\bar{x}_t\}_{t=0}^{T-1}$ is generated from FCSG in Algorithm \ref{alg:1}, if $\alpha \leq \frac{1}{6 S_F q}$, we have  
\begin{align}
\sum_{t=0}^{T-1} \frac{1}{N} \sum_{n=1}^N \mathbb{E}\left\|x_t^{n} - \bar{x}_t\right\|^2 \leq 39 (q - 1)q \alpha^{2} L_f^2 L_g^2 T   
\end{align}
\end{lemma}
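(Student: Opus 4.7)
The plan is to exploit the periodic synchronization structure: inside each window of $q$ consecutive iterations the local models evolve independently from a common starting point, so the drift $x_t^n-\bar{x}_t$ can be written as a short sum of update directions and controlled by the bounded second moment of $u_\tau^n$.

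First, fix an iteration index $t$ and set $t_0 = s_t q = q\lfloor t/q\rfloor$, the last synchronization step $\le t$. By the server-update rule (Lines 7--8 of Algorithm \ref{alg:1}), $x_{t_0}^n = \bar{x}_{t_0}$ for all $n$. Unrolling the local updates gives
\begin{align*}
x_t^n - \bar{x}_{t_0} = -\alpha\sum_{\tau = t_0+1}^{t} u_\tau^n .
\end{align*}
Since $\bar{x}_t$ is the mean of $\{x_t^n\}_{n=1}^N$, applying Lemma \ref{lem:A1} (shift everything by $\bar{x}_{t_0}$) yields
\begin{align*}
\sum_{n=1}^N \|x_t^n - \bar{x}_t\|^2 \le \sum_{n=1}^N \|x_t^n - \bar{x}_{t_0}\|^2 = \alpha^2 \sum_{n=1}^N \Bigl\|\sum_{\tau=t_0+1}^{t} u_\tau^n\Bigr\|^2 .
\end{align*}

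Second, I would apply Cauchy--Schwarz (or Jensen for the $k$-term sum bound $\|\sum_{i=1}^k a_i\|^2\le k\sum_{i=1}^k\|a_i\|^2$) on the inner sum, noting that the number of terms $t-t_0$ is at most $q-1$, to get $\sum_n\|x_t^n-\bar x_t\|^2\le \alpha^2(q-1)\sum_n\sum_{\tau=t_0+1}^t\|u_\tau^n\|^2$. To bound $\mathbb{E}\|u_\tau^n\|^2$, I would split
\begin{align*}
\mathbb{E}\|u_\tau^n\|^2 = \mathbb{E}\bigl\|u_\tau^n - \nabla\hat F^n(x_{\tau-1}^n)\bigr\|^2 + \mathbb{E}\bigl\|\nabla\hat F^n(x_{\tau-1}^n)\bigr\|^2 ,
\end{align*}
then use Lemma \ref{lem:A3} (first inequality) for the mini-batch variance term, bounded by $L_f^2L_g^2/b$, and Lemma \ref{lem:A2}(c) for the second term, bounded by $L_f^2L_g^2$. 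Combined, $\mathbb{E}\|u_\tau^n\|^2 \le L_f^2L_g^2(1+1/b)\le 2L_f^2L_g^2$ for $b\ge 1$.

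Third, to finish, I would sum over $t=0,\ldots,T-1$ and swap the order of summation. For a fixed $\tau$, the index $t$ for which $\tau\in[t_0+1,t]$ is restricted to the same synchronization window as $\tau$, so at most $q-1$ values of $t$ contribute. This gives
\begin{align*}
\sum_{t=0}^{T-1}\frac{1}{N}\sum_n \mathbb{E}\|x_t^n-\bar x_t\|^2 \le \alpha^2(q-1)^2 \cdot \frac{1}{N}\sum_n \sum_{\tau=0}^{T-1}\mathbb{E}\|u_\tau^n\|^2 \le 2(q-1)^2 \alpha^2 L_f^2 L_g^2 T,
\end{align*}
which is dominated by the stated $39(q-1)q\alpha^2 L_f^2 L_g^2 T$ (the slack allows the authors to absorb additional Young's-inequality cross terms that may arise if they instead bound via $u_\tau^n - \bar u_\tau$ rather than by $u_\tau^n$ alone, picking up an extra factor through $(1+\rho)/(1+1/\rho)$ splitting).

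The only delicate bookkeeping is the counting argument in the swap of summation and the use of the stepsize restriction $\alpha\le 1/(6qS_F)$, which is not actually needed for the drift bound itself but appears as a condition of the lemma because the surrounding descent analysis requires it. The main technical care is making sure that the martingale-difference structure of $u_\tau^n-\nabla\hat F^n(x_{\tau-1}^n)$ is preserved so that variance splits additively across $\tau$, and that Lemma \ref{lem:A1} is correctly applied with $\bar x_{t_0}$ as the common reference point.
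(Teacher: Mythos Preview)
Your argument is correct and in fact simpler than the paper's, with a better constant. Both proofs start from the same unrolling $x_t^n-\bar x_t=-\alpha\sum_{s=t_0+1}^t(u_s^n-\bar u_s)$ (equivalently, your $x_t^n-\bar x_{t_0}=-\alpha\sum_s u_s^n$ combined with Lemma~\ref{lem:A1}), but diverge in how the update directions are controlled. You bound $\mathbb{E}\|u_\tau^n\|^2\le 2L_f^2L_g^2$ directly via Lemma~\ref{lem:A2}(c) and Lemma~\ref{lem:A3}, which immediately gives $2(q-1)^2\alpha^2L_f^2L_g^2 T$ with no self-reference. The paper instead bounds the centered quantity $\|u_s^n-\bar u_s\|^2$; splitting that through Lemma~\ref{lem:A3} produces a heterogeneity term that itself involves $\sum_n\|x_s^n-\bar x_s\|^2$, so the inequality is recursive. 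They then rearrange, use the stepsize condition $\alpha q\le 1/(6S_F)$ to make $1-12S_F^2q^2\alpha^2\ge 2/3$, and arrive at the constant $39$. Your route sidesteps this recursion entirely, which is why---as you observed---the hypothesis $\alpha\le 1/(6S_Fq)$ is superfluous in your argument. The paper's approach would still be the one to imitate if Assumption~\ref{ass:2} were weakened to bounded heterogeneity rather than bounded gradients, but under the stated assumptions yours is both tighter and more transparent.
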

\begin{proof}
(1) if $t = s_{t}  q$, we have 
\begin{align}
\sum_{n=1}^{N}\left\|x_{s_t q}^n - \bar{x}_{s_{t} q}\right\|^{2}=0 
\end{align}

(2) if  $t > s_t q$, we have
\begin{align}
x_{t}^n = x_{s_{t} q}^n - \sum_{s = s_t q + 1}^{t} \alpha u_{s}^n \quad \bar{x}_{t} = \bar{x}_{s_{t} q } - \sum_{s = s_t q + 1}^{t} \alpha \bar{u}_{s}  
\end{align}
It implies that  
\begin{align} \label{eq:27}
\frac{1}{N}\sum_{n = 1}^{N}\mathbb{E}\left\|x_{t}^n - \bar{x}_{t}\right\|^{2} =& \frac{1}{N}\sum_{n = 1}^{N}\mathbb{E}\left\|x_{s_t q}^n - \bar{x}_{s_{t} q} - \left(\sum_{s = s_t q + 1}^{t} \alpha u_{s}^n - \sum_{s=s_t q + 1}^{t}\alpha \bar{u}_{s}\right)\right\|^{2} \nonumber\\
=& \frac{1}{N} \sum_{n=1}^{N}\mathbb{E}\|\sum_{s = s_t q + 1}^{t} \alpha (u_{s}^n - \bar{u}_{s})\|^{2} \nonumber\\
\leq& \frac{(q - 1) \alpha^{2}}{N} \sum_{s=s_t q + 1}^{t}  \sum_{n=1}^{N}\mathbb{E}\|u_{s}^n - \bar{u}_{s}\|^{2} 
\end{align}
Furthermore, based on the definition of $u^n_t$, we have
\begin{align}
& \frac{1}{N}\sum_{n = 1}^{N}\mathbb{E}\left\|x_{t}^n - \bar{x}_{t}\right\|^{2} \nonumber\\
\leq& \frac{(q - 1) \alpha^{2}}{N} \sum_{s=s_t q}^{t -1}  \sum_{n=1}^{N} \mathbb{E} \left\|\frac{1}{b } \sum_{i=1}^{b } \nabla \hat{F}^n(x_{t}^n; \xi_{t,i}^n,\mathcal{B}^n_{t, i}) - \frac{1}{N} \sum_{k = 1}^{N} \frac{1}{b } \sum_{i=1}^{b } \nabla \hat{F}^k(x_{t}^k; \xi_{t,i}^k,\mathcal{B}^k_{t, i}) \right\|^{2} \nonumber\\
\stackrel{(a)}{\leq}& \frac{2 (q - 1) \alpha^{2}}{N} \sum_{s=s_t q}^{t -1} \sum_{n=1}^{N}\mathbb{E}\left[\left\| \left[\frac{1}{b } \sum_{i=1}^{b } \nabla \hat{F}^n(x_{t}^n; \xi_{t,i}^n,\mathcal{B}^n_{t, i}) - \nabla \hat{F}^n(x^n_{t})\right] \right.\right.\nonumber\\
&\left.\left. - \frac{1}{N} \sum_{k=1}^{N} \left[\frac{1}{b } \sum_{i=1}^{b } \nabla \hat{F}^k(x_{t}^k; \xi_{t,i}^k,\mathcal{B}^k_{t, i}) - \nabla \hat{F}^k(x^k_{t})\right]\right\|^2 + \left\|\left[\nabla \hat{F}^n(x^n_{t}) - \frac{1}{N} \sum_{k=1}^N \nabla \hat{F}^k(x^k_{t})\right] \right\|^{2} \right]\nonumber\\
\stackrel{(b)}{\leq}& \frac{2(q - 1) \alpha^{2}}{N} \sum_{s=s_t q}^{t -1} \left[\sum_{n=1}^{N}\mathbb{E} \left\|\frac{1}{b } \sum_{i=1}^{b } \nabla \hat{F}^n(x_{t}^n; \xi_{t,i}^n,\mathcal{B}^n_{t, i}) - \nabla \hat{F}^n(x_{t}) \right\|^2 + \sum_{n=1}^{N}\mathbb{E}\|\hat{F}^n(x^n_{t}) - \hat{F}(x_{t}) \|^2\right] \nonumber\\
\stackrel{(c)}{\leq}& \sum_{s=s_t q}^{t -1} [
26 (q - 1) \alpha^{2} L_f^2 L_g^2 + \frac{12 (q - 1) \alpha^{2} S_F^{2}}{N} \sum_{n=1}^{N}\mathbb{E}\|x^n_{t} -  \bar{x}_{t}\|^{2}] \nonumber\\
\leq& 
26 (q - 1) q \alpha^{2} L_f^2 L_g^2 + \frac{12 (q - 1) q \alpha^{2} S_F^{2}}{N} \sum_{n=1}^{N}\mathbb{E}\|x^n_{t} -  \bar{x}_{t}\|^{2}]
\end{align}
where (a) holds by $\|a + b\|^2 \leq 2\|a\|^2 + 2\|b\|^2$; the (b) holds due to Lemma \ref{lem:A1}; the (c) holds by Lemma \ref{lem:A3}. 
Summing both sides from $s = s_t q $ to $\bar{s}$ where $\bar{s} = [s_t q, (s_t + 1) q)$,  we get
\begin{align}
\sum_{s= s_t q}^{\bar{s}} \frac{1}{N} \sum_{n = 1}^N \mathbb{E}\left\|x_s^{n} - \bar{x}_s\right\|^2 \leq& \sum_{s = s_t q}^{\bar{s}} [26 (q - 1)q \alpha^{2} L_f^2 L_g^2 + \frac{12 (q - 1) q \alpha^{2} S_F^{2}}{N} \sum_{n=1}^{N}\mathbb{E}\|x^n_{s} -  \bar{x}_{s}\|^{2}] \nonumber
\end{align}

Rearranging the terms, we get
\begin{align}
\left(1 - 12 S_F^2 q^2 \alpha^2\right) \sum_{s= s_t q}^{\bar{s}} \frac{1}{N} \sum_{n=1}^N \mathbb{E}\left\|x_s^{n} - \bar{x}_s\right\|^2 \leq \sum_{s= s_t q}^{\bar{s}} 26 (q - 1)q \alpha^{2} L_f^2 L_g^2 
\end{align}

Finally, using the fact that $\alpha q \leq \frac{1}{6 S_F}$ we have $1 - 12 S_F^2 q^2 \alpha^2 \geq 2 / 3$. Multiplying, both sides by $3 / 2$ we get
\begin{align}
\sum_{s= s_t q}^{\bar{s}} \frac{1}{N} \sum_{n=1}^N \mathbb{E}\left\|x_t^{n} - \bar{x}_t\right\|^2 \leq \sum_{s= s_t q}^{\bar{s}} 39 (q - 1)q \alpha^{2} L_f^2 L_g^2  
\end{align}

Finally, when we consider the sum over t from 0 to T - 1,  we get
\begin{align}
\sum_{t=0}^{T-1} \frac{1}{N} \sum_{n=1}^N \mathbb{E}\left\|x_t^{n} - \bar{x}_t\right\|^2 \leq  39 (q - 1)q \alpha^{2} L_f^2 L_g^2 T 
\end{align}

\end{proof}

\begin{lemma} \label{lem:B2}
Suppose the sequence $\{x_t\}_{t=0}^{T}$ be generated from FCSG in Algorithms 1. We have
\begin{align}
\mathbb{E} F(\bar{x}_{t+1}) &  \leq \mathbb{E} F(\bar{x}_{t}) - \frac{\alpha}{2} \mathbb{E}\| \nabla F(\bar{x}_{t})\|^2 + \frac{\alpha S_F^2}{N} \sum_{n=1}^{N}\|x_{t}^n - \bar{x}_t\|^2 + \frac{\alpha L_g^2 S_f^2 \sigma_g^2}{m} + \frac{\alpha^2 S_F L_f^2 L_g^2}{N} \nonumber
\end{align}
\end{lemma}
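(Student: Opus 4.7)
The plan is to derive the one-step descent by combining the $S_F$-smoothness of $F$ (which holds by Lemma~\ref{lem:A2}(b), since $F$ is the average of $F^n$) with a careful decomposition of the averaged update direction. I will start from
\[
F(\bar{x}_{t+1}) \le F(\bar{x}_t) + \langle \nabla F(\bar{x}_t), \bar{x}_{t+1}-\bar{x}_t\rangle + \tfrac{S_F}{2}\|\bar{x}_{t+1}-\bar{x}_t\|^2,
\]
and note that whether $t$ is a communication step or not, the averaged iterate satisfies $\bar{x}_{t+1}-\bar{x}_t = -\alpha\,\bar{u}_{t+1}$ with $\bar{u}_{t+1}=\tfrac{1}{N}\sum_n u_{t+1}^n$. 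I then introduce the ``clean'' per-client surrogate $\bar{v}_{t+1} := \tfrac{1}{N}\sum_n \nabla \hat F^n(x_t^n)$, which is the conditional expectation of $\bar{u}_{t+1}$ given the iterates $\{x_t^n\}$.

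Next I take expectation and apply the identity $-\langle a,b\rangle = \tfrac12(\|a-b\|^2-\|a\|^2-\|b\|^2)$ to the cross term, replacing $\bar{u}_{t+1}$ by $\bar{v}_{t+1}$ (using tower property for the inner product) and splitting $\|\bar{u}_{t+1}\|^2 = \|\bar{u}_{t+1}-\bar{v}_{t+1}\|^2+\|\bar{v}_{t+1}\|^2$ in the quadratic term. Under the stepsize condition $\alpha \le 1/(6qS_F)\le 1/S_F$, the negative $\tfrac{\alpha}{2}\|\bar{v}_{t+1}\|^2$ from the inner product dominates the $\tfrac{\alpha^2 S_F}{2}\|\bar{v}_{t+1}\|^2$ from the quadratic, so the $\|\bar{v}_{t+1}\|^2$ contributions cancel. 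What remains is a $-\tfrac{\alpha}{2}\|\nabla F(\bar{x}_t)\|^2$ descent, a ``bias'' term $\tfrac{\alpha}{2}\mathbb{E}\|\nabla F(\bar{x}_t)-\bar{v}_{t+1}\|^2$, and a ``variance'' term $\tfrac{\alpha^2 S_F}{2}\mathbb{E}\|\bar{u}_{t+1}-\bar{v}_{t+1}\|^2$.

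The bias term I bound by Jensen in the averaging over $n$ and then, for each $n$, split $\nabla F^n(\bar{x}_t)-\nabla \hat F^n(x_t^n)$ into $[\nabla F^n(\bar{x}_t)-\nabla F^n(x_t^n)] + [\nabla F^n(x_t^n)-\nabla \hat F^n(x_t^n)]$. The first piece is controlled by $S_F$-smoothness (yielding $\tfrac{S_F^2}{N}\sum_n \|x_t^n-\bar{x}_t\|^2$, the client-drift term), and the second is the inner-level mini-batch bias, handled by Lemma~\ref{lem:A2}(a) which contributes $L_g^2 S_f^2\sigma_g^2/m$. For the variance term, conditional independence of the $b$-sample stochastic estimators across clients kills cross terms, so $\mathbb{E}\|\bar{u}_{t+1}-\bar{v}_{t+1}\|^2 = \tfrac{1}{N^2}\sum_n\mathbb{E}\|u_{t+1}^n-\nabla\hat F^n(x_t^n)\|^2 \le L_f^2 L_g^2/N$ by Lemma~\ref{lem:A3}.

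Assembling these pieces gives the stated inequality. The only delicate points are (i) correctly conditioning so the identity $\mathbb{E}[\bar{u}_{t+1}\mid \mathcal{F}_t]=\bar{v}_{t+1}$ applies, and (ii) ensuring the ``bias vs.\ descent'' coefficients match, which is what forces the factor-of-two adjustments (e.g.\ turning $\tfrac{\alpha}{2}\cdot 2 S_F^2/N$ into $\alpha S_F^2/N$). I do not expect any serious obstacle; the main care is simply bookkeeping the constants so that the $\|\bar{v}_{t+1}\|^2$ terms cancel under the stepsize condition and the final coefficients match the statement.
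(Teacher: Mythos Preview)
Your proposal is correct and follows essentially the same route as the paper's proof: smoothness of $F$, the update $\bar{x}_{t+1}-\bar{x}_t=-\alpha\bar{u}_{t+1}$, conditioning to replace $\bar{u}_{t+1}$ by $\bar{v}_{t+1}=\tfrac{1}{N}\sum_n\nabla\hat F^n(x_t^n)$ in the inner product, and then the same bias/variance split with the same lemmas (\ref{lem:A2}(a),(b) and \ref{lem:A3}) to control each piece. The only cosmetic difference is that you use the exact bias--variance identity $\mathbb{E}\|\bar{u}_{t+1}\|^2=\mathbb{E}\|\bar{u}_{t+1}-\bar{v}_{t+1}\|^2+\mathbb{E}\|\bar{v}_{t+1}\|^2$ for the quadratic term, whereas the paper uses the cruder $\|a+b\|^2\le 2\|a\|^2+2\|b\|^2$; this only affects constants (your variance coefficient is $\tfrac{\alpha^2 S_F}{2}$ versus the paper's $\alpha^2 S_F$, and your stepsize requirement $\alpha S_F\le 1$ is weaker than the paper's $\alpha S_F\le\tfrac12$), and the stated inequality follows a fortiori.
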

\begin{proof}
\begin{align}
&\mathbb{E}F(\bar{x}_{t+1}) \nonumber\\
 \stackrel{(a)}{\leq} & \mathbb{E} F(\bar{x}_{t}) + \mathbb{E}\langle\nabla F(\bar{x}_{t}), \bar{x}_{t+1} - \bar{x}_{t}\rangle + \frac{S_F}{2}\mathbb{E}\left\|\bar{x}_{t+1} - \bar{x}_{t}\right\|^{2}  \nonumber\\
\stackrel{(b)}{\leq} & \mathbb{E}F(\bar{x}_{t}) - \alpha \mathbb{E} \langle\nabla F(\bar{x}_{t}), \bar{u}_{t + 1}\rangle + \frac{\alpha^2 S_F }{2} \mathbb{E} \|\bar{u}_{t + 1}\|^{2} \nonumber\\
\stackrel{(c)}{=} & \mathbb{E} F(\bar{x}_{t}) - (\frac{\alpha}{2} - \alpha^2 S_F ) \|\frac{1}{N} \sum_{n=1}^{N} \nabla \hat{F}^n(x^n_t) \|^2 - \frac{\alpha}{2} \| \nabla F(\bar{x}_{t})\|^2 + \frac{\alpha}{2} \mathbb{E} \|\nabla F(\bar{x}_{t}) - \frac{1}{N} \sum_{n=1}^{N} \nabla \hat{F}^n(x^n_t)\|^2 \nonumber\\
&+ \alpha^2 S_F \mathbb{E} \|\bar{u}_{t + 1} -  \frac{1}{N} \sum_{n=1}^{N} \nabla \hat{F}^n(x^n_t) \|^{2} \nonumber\\
\stackrel{(d)}{\leq} & \mathbb{E} F(\bar{x}_{t}) - \frac{\alpha}{2} \mathbb{E}\| \nabla F(\bar{x}_{t})\|^2 + \frac{\alpha}{2} \mathbb{E}\|\nabla F(\bar{x}_{t}) - \frac{1}{N} \sum_{n=1}^{N} \nabla \hat{F}^n(x^n_t)\|^2 + \alpha^2 S_F \mathbb{E} \|\bar{u}_{t + 1} - \frac{1}{N} \sum_{n=1}^{N} \nabla \hat{F}^n(x^n_t)\|^{2} \nonumber\\
&\leq  \mathbb{E}F(\bar{x}_{t}) - \frac{\alpha}{2} \mathbb{E} \| \nabla F(\bar{x}_{t})\|^2 + \alpha \mathbb{E} \|\nabla F(\bar{x}_{t})- \frac{1}{N} \sum_{n=1}^N \nabla F^n(x^n_{t})\|^2 \nonumber\\
&+ \alpha \mathbb{E} \|\frac{1}{N} \sum_{n=1}^N \nabla F^n(x^n_{t}) - \frac{1}{N} \sum_{n=1}^N \nabla \hat{F}^n(x^n_{t})\|^2 + \alpha^2 S_F \mathbb{E} \|\frac{1}{N} \sum_{n=1}^{N} \nabla \hat{F}^n(x^n_t) - \bar{u}_{t+1}\|^2
\end{align}
where inequality (a) holds by the smoothness of $F(x)$; equality (b) follows from update step in Step 9 of Algorithm \ref{alg:1}; (c) uses the fact that $ \langle a, b\rangle = \frac{1}{2}[ \|a\|^2 +\|a\|^2 - \|a - b\|^2]$, $\|a + b\|^2 \leq 2\|a\|^2 + 2\|b\|^2$, and $\mathbb{E} [\bar{u}_{t+1}] = \nabla \hat{F}(x_t)$; (d) results from that $\alpha S_F \leq \frac{1}{2}$. Taking expectation on both sides and considering the last third term
\begin{align}
\mathbb{E}\|\nabla F(\bar{x}_{t})- \frac{1}{N} \sum_{n=1}^N \nabla F^n(x^n_{t})\|^{2} &\leq \frac{1}{N} \sum_{n=1}^{N} \mathbb{E}\|\nabla F^n(\bar{x}_{t}) - \nabla F^n(x^n_{t})\|^{2} \nonumber\\
 &\leq \frac{S_F^{2}}{N} \sum_{n=1}^{N} \|x_{t}^n - \bar{x}_{t}\|^{2} 
\end{align}
Considering the last second term and using Lemma \ref{lem:A2} (a), we have
\begin{align}
&\mathbb{E}\|\frac{1}{N} \sum_{n=1}^N \nabla F^n(x^n_{t}) - \frac{1}{N} \sum_{n=1}^N \nabla \hat{F}^n(x^n_{t})\|^2 \nonumber\\
&\leq \frac{1}{N} \sum_{n=1}^{N} \mathbb{E} \| \nabla F^n(x^n_{t}) - \nabla \hat{F}^n(x^n_{t}) \|^2 \leq \frac{L_g^2 S_f^2 \sigma_g^2 }{m}
\end{align}
For the last term, given that  $\mathbb{E} \bar{u}_{t+1} = \frac{1}{N} \sum_{n=1}^{N} \nabla \hat{F}^n(x^n_t)$, we have
\begin{align}
\mathbb{E} \|\frac{1}{N} \sum_{n=1}^{N} \nabla \hat{F}^n(x^n_t) - \bar{u}_{t+1}\|^2 \leq \frac{ L_f^2 L_g^2}{N}
\end{align}
Therefore, we obtain the final result.
\end{proof}
\subsection{ Proof of Theorem \ref{thm:4.1}}

Based on previous lemmas, we start to prove the convergence of Theorem \ref{thm:4.1}. 
\begin{proof}
Taking the telescoping sum of \ref{lem:B2} over $t$ from $0$ to $T - 1$, 
\begin{align}
&\frac{1}{T}\sum_{t=0}^{T-1} \| \nabla F(\bar{x}_{t})\|^2 \nonumber\\
\leq& \frac{2[F(\bar{x}_{0}) - F(\bar{x}_{T})]}{\alpha T} + \frac{2 S_F^2}{T N} \sum_{t=0}^{T-1}\sum_{n=1}^{N}\|x_{t}^n - \bar{x}_t\|^2 + \frac{2 L_g^2 S_f^2 \sigma_g^2}{m} + \frac{2 \alpha S_F L_f^2 L_g^2}{N} \nonumber\\
\leq& \frac{2[F(\bar{x}_{0}) - F(\bar{x}_{T})]}{\alpha T} + \frac{2 L_g^2 S_f^2 \sigma_g^2}{m} + \frac{2 \alpha S_F L_f^2 L_g^2}{N} + 78 (q - 1)q \alpha^{2} L_f^2 L_g^2 S_F^2 .
\end{align}
where the second inequality holds due to Lemma \ref{lem:B1}. Furthermore, we choose $\alpha = \frac{1}{6S_F}\sqrt{\frac{N}{T}}$ and $q = (T / N^3)^{1/4}$, we have

\begin{align}
\frac{1}{T} \sum_{t=0}^{T-1} \| \nabla F(\bar{x}_{t})\|^2 \leq \frac{12 S_F [F(\bar{x}_{0}) - F(\bar{x}_{*})]}{(NT)^{1/2}} + \frac{2 L_g^2 S_f^2 \sigma_g^2}{m} + \frac{L_f^2 L_g^2}{6 (NT)^{1/2}} +  \frac{19 L_f^2 L_g^2}{9 (NT)^{1/2}}
\end{align}
To let the right hand side less than $\varepsilon^2$ when $m = \varepsilon^{-2}$, we get $T = O(N^{-1}\varepsilon^{-4})$
and $\frac{T}{q} = (N T)^{3 / 4} = \varepsilon^{-3}$.
\end{proof}

\section{Proof of FCSG-M Algorithm}
\subsection{Proofs of the Intermediate Lemmas}
\begin{lemma} \label{lem:C1}
Suppose the sequence $\{x_t\}_{t=0}^{T}$ be generated from FCSG-M in Algorithms 1. We have
\begin{align}
\mathbb{E} F(\bar{x}_{t+1}) &  \leq \mathbb{E} F(\bar{x}_{t}) - (\frac{\alpha}{2} - \frac{\alpha^2 S_F }{2} ) \mathbb{E} \|\bar{u}_{t+1} \|^2 - \frac{\alpha}{2} \mathbb{E} \| \nabla F(\bar{x}_{t})\|^2 + \frac{4\alpha S_F^2}{N} \sum_{n=1}^N  \mathbb{E} \|x_{t}^n - \bar{x}_t\|^2 \nonumber\\
&+ \frac{2 \alpha L_g^2 S_f^2 \sigma_g^2}{m} + 2\alpha \mathbb{E} \| \nabla \hat{F}(\bar{x}_{t}) - \bar{u}_{t+1}\|^2
\end{align}
\end{lemma}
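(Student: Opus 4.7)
The plan is to follow the same descent-lemma template used for Lemma \ref{lem:B2}, with one essential difference: in FCSG-M the momentum estimator $\bar{u}_{t+1}$ is \emph{not} an unbiased estimator of $\nabla \hat{F}(\bar{x}_t)$ (nor of $\tfrac{1}{N}\sum_n \nabla \hat{F}^n(x_t^n)$), so the trick of taking conditional expectation to kill the martingale piece no longer works. Instead we have to carry the residual $\|\nabla \hat{F}(\bar{x}_t) - \bar{u}_{t+1}\|^2$ through the bound and control it in a later lemma. So the goal here is just to convert descent into a form whose remaining error is exactly this residual plus the usual consensus and bias terms.

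First I would start from $S_F$-smoothness of $F$ (Lemma \ref{lem:A2}(b) gives that $F$, as an average of $F^n$, is $S_F$-smooth) together with the averaged update rule $\bar{x}_{t+1} - \bar{x}_t = -\alpha \bar{u}_{t+1}$ (which holds on communication rounds by Step 8 and holds tautologically between communications after averaging). This gives
\begin{align}
F(\bar{x}_{t+1}) \leq F(\bar{x}_{t}) - \alpha \langle \nabla F(\bar{x}_{t}), \bar{u}_{t+1}\rangle + \tfrac{\alpha^2 S_F}{2}\|\bar{u}_{t+1}\|^2. \nonumber
\end{align}
Then I would apply the polarization identity $-\langle a,b\rangle = \tfrac{1}{2}\|a-b\|^2 - \tfrac{1}{2}\|a\|^2 - \tfrac{1}{2}\|b\|^2$ with $a=\nabla F(\bar{x}_t)$, $b=\bar{u}_{t+1}$, and merge the resulting $-\tfrac{\alpha}{2}\|\bar{u}_{t+1}\|^2$ with the smoothness term to produce the coefficient $-(\tfrac{\alpha}{2} - \tfrac{\alpha^2 S_F}{2})\|\bar{u}_{t+1}\|^2$ on the right.

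The real work is bounding $\mathbb{E}\|\nabla F(\bar{x}_t) - \bar{u}_{t+1}\|^2$ so that its three target components appear with the correct constants. I would use the four-way decomposition
\begin{align}
\nabla F(\bar{x}_t) - \bar{u}_{t+1}
&= \bigl[\nabla F(\bar{x}_t) - \tfrac{1}{N}\textstyle\sum_n \nabla F^n(x_t^n)\bigr]
+ \bigl[\tfrac{1}{N}\textstyle\sum_n \nabla F^n(x_t^n) - \tfrac{1}{N}\textstyle\sum_n \nabla \hat{F}^n(x_t^n)\bigr] \nonumber\\
&\quad + \bigl[\tfrac{1}{N}\textstyle\sum_n \nabla \hat{F}^n(x_t^n) - \nabla \hat{F}(\bar{x}_t)\bigr]
+ \bigl[\nabla \hat{F}(\bar{x}_t) - \bar{u}_{t+1}\bigr], \nonumber
\end{align}
and then apply $\|a+b+c+d\|^2 \leq 4(\|a\|^2+\|b\|^2+\|c\|^2+\|d\|^2)$. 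Each of the first and third pieces is bounded by $\tfrac{S_F^2}{N}\sum_n \|x_t^n - \bar{x}_t\|^2$ using Jensen's inequality plus $S_F$-smoothness of $F^n$ and $\hat F^n$ (Lemma \ref{lem:A2}(b)); the second piece is bounded by $L_g^2 S_f^2 \sigma_g^2/m$ via the bias bound of Lemma \ref{lem:A2}(a) together with Jensen; and the fourth piece is kept as the residual. Combining gives
\begin{align}
\|\nabla F(\bar{x}_t) - \bar{u}_{t+1}\|^2 \leq \tfrac{8 S_F^2}{N}\textstyle\sum_n \|x_t^n - \bar{x}_t\|^2 + \tfrac{4 L_g^2 S_f^2 \sigma_g^2}{m} + 4\|\nabla \hat{F}(\bar{x}_t) - \bar{u}_{t+1}\|^2, \nonumber
\end{align}
and multiplying by $\alpha/2$ before plugging back into the smoothness inequality produces exactly the coefficients $\tfrac{4\alpha S_F^2}{N}$, $\tfrac{2\alpha L_g^2 S_f^2 \sigma_g^2}{m}$, $2\alpha$ in the statement.

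The only real obstacle is choosing the right decomposition of $\nabla F(\bar{x}_t) - \bar{u}_{t+1}$: a naive two-way split into bias and momentum error loses the $\|x_t^n - \bar{x}_t\|^2$ consensus term, while an overly fine split inflates the constants. The four-way split above is the minimal one that routes each of the three target terms through a tool already established earlier in the appendix, after which everything collapses by routine bookkeeping.
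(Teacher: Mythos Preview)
Your proposal is correct and follows essentially the same approach as the paper: smoothness plus the averaged update, the polarization identity to extract $-(\tfrac{\alpha}{2}-\tfrac{\alpha^2 S_F}{2})\|\bar{u}_{t+1}\|^2$, and then the same four-way split of $\nabla F(\bar{x}_t)-\bar{u}_{t+1}$ into two consensus pieces (each bounded via $S_F$-smoothness), the bias piece (Lemma~\ref{lem:A2}(a)), and the residual $\|\nabla \hat{F}(\bar{x}_t)-\bar{u}_{t+1}\|^2$. The constants match exactly.
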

\begin{proof}
\begin{align}
&F(\bar{x}_{t+1}) \nonumber\\
\stackrel{(a)}{\leq} & F(\bar{x}_{t}) + \langle\nabla F(\bar{x}_{t}), \bar{x}_{t+1}-\bar{x}_{t}\rangle + \frac{S_F}{2}\left\|\bar{x}_{t+1} - \bar{x}_{t}\right\|^{2}  \nonumber\\
\stackrel{(b)}{=}& F(\bar{x}_{t}) - \alpha \langle\nabla F(\bar{x}_{t}), \bar{u}_{t+1}\rangle + \frac{\alpha^2 S_F }{2} \|\bar{u}_{t+1}\|^{2} \nonumber\\
\stackrel{(c)}{=} & F(\bar{x}_{t}) - (\frac{\alpha}{2} - \frac{\alpha^2 S_F }{2} ) \|\bar{u}_{t+1} \|^2 - \frac{\alpha}{2} \| \nabla F(\bar{x}_{t})\|^2 + \frac{\alpha}{2} \|\nabla F(\bar{x}_{t})- \bar{u}_{t+1}\|^2 \nonumber\\
\leq & F(\bar{x}_{t}) - (\frac{\alpha}{2} - \frac{\alpha^2 S_F }{2} ) \|\bar{u}_{t+1} \|^2 - \frac{\alpha}{2} \| \nabla F(\bar{x}_{t})\|^2 + \frac{4\alpha}{2} \|\nabla F(\bar{x}_{t})- \frac{1}{N} \sum_{n=1}^N \nabla F^n(x^n_{t})\|^2 \nonumber\\ 
&+  \frac{4\alpha}{2} \|\frac{1}{N} \sum_{n=1}^N [\nabla F^n(x^n_{t}) - \nabla \hat{F}^n(x^n_{t})]\|^2 + \frac{4\alpha}{2} \|\frac{1}{N} \sum_{n=1}^N \nabla \hat{F}^n(x^n_{t}) - \nabla \hat{F}(\bar{x}_{t})\|^2 \nonumber\\
&+ \frac{4\alpha}{2} \| \nabla \hat{F}(\bar{x}_{t}) - \bar{u}_{t+1}\|^2 \nonumber
\end{align}
where inequality (a) holds by the smoothness of $F(x)$; equality (b) follows from update step in Step 9 of FCSG-M in Algorithm \ref{alg:1}; (c) uses the fact that $ \langle a, b\rangle = \frac{1}{2}[ \|a\|^2 + \|a\|^2 - \|a - b\|^2]$.  Taking expectation on both sides and give the fact that 
\begin{align}
&\mathbb{E}\|\nabla F(\bar{x}_{t})- \frac{1}{N} \sum_{n=1}^N \nabla F^n(x^n_{t})\|^{2} \leq \frac{1}{N} \sum_{n=1}^{N} \mathbb{E}\|\nabla F^n(\bar{x}_{t}) - \nabla F^n(x^n_{t})\|^{2} \leq \frac{S_F^{2}}{N} \sum_{n=1}^{N} \mathbb{E} \|x_{t}^n - \bar{x}_{t}\|^{2} \nonumber\\
&\mathbb{E}\|\frac{1}{N} \sum_{n=1}^N \nabla F^n(x_{t}) - \frac{1}{N} \sum_{n=1}^N \nabla \hat{F}^n(x_{t})\|^2 \leq \frac{1}{N} \sum_{n=1}^{N} \mathbb{E} \| \nabla F^n(x_{t}) - \nabla \hat{F}^n(x_{t}) \|^2 \stackrel{(a)}{\leq} \frac{L_g^2 S_f^2 \sigma_g^2 }{m} \nonumber\\
&\mathbb{E} \|\frac{1}{N} \sum_{n=1}^N \nabla \hat{F}^n(x^n_{t}) - \nabla \hat{F}(\bar{x}_{t})\|^2 \leq \frac{S_F^{2}}{N} \sum_{n=1}^{N} \mathbb{E} \|x_{t}^n - \bar{x}_{t}\|^{2} \nonumber
\end{align}
where (a) holds due to Lemma \ref{lem:A2} (a). Therefore, taking the expectation on both sides, we obtain the final results.
\end{proof}

\begin{lemma}
Suppose that the sequence $\{u_t\}_{t=1}^T$  be generated from FCSG-M in Algorithm \ref{alg:1}, we have
\begin{align}
&\frac{1}{T}\sum_{t= 0}^{T - 1} \mathbb{E} \left\|\bar{u}_{t+1} -  \nabla \hat{F}\left(\bar{x}_{t}\right) \right\|^2  \nonumber\\
\leq & \frac{2}{\beta T} \mathbb{E}\left\|  \bar{u}_{1} - \nabla \hat{F} (\bar{x}_{0}) \right\|^2 + \frac{4 \alpha^2 S_F^2}{\beta^2 T} \sum_{t= 0}^{T - 1}  \mathbb{E} \left\| \bar{u}_{t + 1}\right\|^2 +  \frac{2 \beta L_f^2 L_g^2}{N} + \frac{2 S_F^2 }{N T} \sum_{t= 0}^{T - 1}  \sum_{n=1}^N \mathbb{E}\left\|x^n_{t} - \bar{x}_{t}\right\|^2 \nonumber
\end{align}
\begin{proof}
Recall that $ \bar{u}_{t + 1} = \frac{1}{N}\sum_{n=1}^{N} [ \frac{\beta}{b} \sum_{i=1}^{b } \nabla \hat{F}^n(x_t^n; \xi_{t,i}^n,\mathcal{B}^n_{t, i}) + (1 - \beta) u_{t}^n ]$, 
\begin{align}
& \mathbb{E} \left\|\bar{u}_{t+1} -  \nabla \hat{F}\left(\bar{x}_{t}\right) \right\|^2  = \mathbb{E}\left\|\frac{\beta}{N}\sum_{n=1}^{N}  \frac{1}{b } \sum_{i=1}^{b } \nabla \hat{F}^n(x_t^n; \xi_{t,i}^n,\mathcal{B}^n_{t, i}) + (1 - \beta) \bar{u}_{t} - \nabla \hat{F} \left(\bar{x}_{t}\right) \right\|^2 \nonumber \\
= & \mathbb{E} \| \nabla \hat{F} \left(\bar{x}_{t}\right) - (1 - \beta) \bar{u}_{t} - \beta \frac{1}{N} \sum_{n=1}^N \nabla  \hat{F}^n \left(\mathbf{x}_{t}^n\right) - \beta \frac{1}{N} \sum_{n=1}^N [ \frac{1}{b } \sum_{i=1}^{b } \nabla \hat{F}^n(x_t^n; \xi_{t,i}^n,\mathcal{B}^n_{t, i}) - \nabla  \hat{F}^n \left(\mathbf{x}_{t}^n\right)] \|^2 \nonumber\\
\stackrel{(a)}{=} & \mathbb{E}\left\|(1 - \beta) (\nabla \hat{F} (\bar{x}_{t}) - \bar{u}_{t}) + \beta \left(\nabla \hat{F} \left(\bar{x}_{t}\right) - \frac{1}{N} \sum_{n=1}^N \nabla \hat{F}^n (x^n_{t}) \right)\right\|^2 \nonumber\\
&+ \beta^2 \mathbb{E}\left\|\frac{1}{N} \sum_{n=1}^N [ \frac{1}{b } \sum_{i=1}^{b } \nabla \hat{F}^n(x_t^n; \xi_{t,i}^n,\mathcal{B}^n_{t, i}) - \nabla  \hat{F}^n \left(\mathbf{x}_{t}^n\right)] \right\|^2 \nonumber\\
\stackrel{(b)}{\leq}& \left(1 + c_1\right)\left(1 -  \beta \right)^2 \mathbb{E}\left\| \nabla \hat{F} (\bar{x}_{t}) - \bar{u}_{t} \right\|^2  +  \beta^2\left(1 + \frac{1}{c_1}\right) \mathbb{E}\left\|\nabla \hat{F} \left(\bar{x}_{t}\right) -  \frac{1}{N} \sum_{n=1}^N \nabla \hat{F}^n (x^n_{t}) \right\|^2 \nonumber\\
&+ \beta^2 \frac{L_f^2 L_g^2}{N} \nonumber
\end{align}

Here, (a) holds due to the definition of $\nabla \hat{F}^n (x^n_{t})$. (b) holds due to Young’s inequality and \ref{lem:A2} (c). We choose $c_1 = \frac{\beta}{1 - \beta}$, and $(1 - \beta)(1 + c_1) = 1 $ and  $(1 + \frac{1}{c_1}) \beta  = 1$. Therefore, we have

\begin{align}
& \mathbb{E} \left\|\bar{u}_{t+1} -  \nabla \hat{F}\left(\bar{x}_{t}\right) \right\|^2  \nonumber \\
\leq & \left(1 - \beta \right) \mathbb{E}\left\| \nabla \hat{F} (\bar{x}_{t})  - \nabla \hat{F} (\bar{x}_{t - 1})  + \nabla \hat{F} (\bar{x}_{t - 1})  - \bar{u}_{t}  \right\|^2 +  \frac{\beta}{N} \sum_{n=1}^N S_F^2 \mathbb{E}\left\|x^n_{t} - \bar{x}_{t}\right\|^2 +  \frac{\beta^2 L_f^2 L_g^2}{N} \nonumber \\
\leq  & \left(1 -  \beta \right) \left[\left(1 + c_2\right) \mathbb{E}\left\| \bar{u}_{t} - \nabla \hat{F} (\bar{x}_{t - 1}) \right\|^2 + \left(1 + \frac{1}{c_2}\right) \mathbb{E}\left\| \nabla \hat{F} (\bar{x}_{t})  - \nabla \hat{F} (\bar{x}_{t - 1}) \right\|^2\right] + \frac{\beta^2  L_f^2 L_g^2}{N} \nonumber\\
& +  \frac{\beta S_F^2}{N} \sum_{n=1}^N  \mathbb{E}\left\|x^n_{t} - \bar{x}_{t}\right\|^2  \nonumber\\
\stackrel{(a)}{\leq}& \left(1 - \frac{\beta}{2}\right) \mathbb{E}\left\|  \bar{u}_{t} - \nabla \hat{F} (\bar{x}_{t - 1}) \right\|^2+\frac{2}{\beta} S_F^2 \mathbb{E}\left\|\bar{x}_{t} - \bar{x}_{t - 1}\right\|^2 +  \frac{\beta^2 L_f^2 L_g^2}{N} + \frac{\beta S_F^2}{N} \sum_{n=1}^N  \mathbb{E}\left\|x^n_{t} - \bar{x}_{t}\right\|^2 \nonumber\\
=& \left(1 - \frac{\beta}{2}\right) \mathbb{E}\left\|  \bar{u}_{t} - \nabla \hat{F} (\bar{x}_{t - 1}) \right\|^2+\frac{2 \alpha^2 S_F^2 }{\beta} \mathbb{E}\left\|\bar{u}_{t}\right\|^2 +  \frac{\beta^2 L_f^2 L_g^2}{N} + \frac{\beta S_F^2}{N} \sum_{n=1}^N  \mathbb{E}\left\|x^n_{t} - \bar{x}_{t}\right\|^2 \nonumber
\end{align}

where in inequality (a) we choose $c_2 = \frac{\beta}{2}$. Then, $\left(1 - \beta \right)\left(1 + \frac{\beta}{2}\right) \leq 1 - \frac{ \beta}{2} $, and $\left(1-\beta\right)\left(1+\frac{2}{\beta}\right) \leq \frac{2}{\beta}$. Then we have
\begin{align}
&\mathbb{E} \left\|\bar{u}_{t+1} -  \nabla \hat{F}\left(\bar{x}_{t}\right) \right\|^2 - \mathbb{E}\left\|  \bar{u}_{t} - \nabla \hat{F} (\bar{x}_{t - 1}) \right\|^2 \nonumber\\
\leq& - \frac{\beta}{2} \mathbb{E}\left\|  \bar{u}_{t} - \nabla \hat{F} (\bar{x}_{t - 1}) \right\|^2 + \frac{2 \alpha^2 S_F^2 }{\beta} \mathbb{E}\left\|\bar{u}_{t}\right\|^2 +  \frac{\beta^2 L_f^2 L_g^2}{N} + \frac{\beta S_F^2}{N} \sum_{n=1}^N  \mathbb{E}\left\|x^n_{t} - \bar{x}_{t}\right\|^2 \nonumber
\end{align}
By rearranging the terms and summing the t from 0 to $T - 1$, one can get the final result.
\end{proof}
\end{lemma}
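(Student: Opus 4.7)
My plan is to derive a contractive one-step recursion for the momentum error $e_t := \bar{u}_{t+1} - \nabla \hat{F}(\bar{x}_t)$ and then telescope it. This follows the standard pattern used for STORM/IGT-style momentum estimators, but I must adapt it to the heterogeneous federated conditional setting, where client drift $\|x_t^n - \bar{x}_t\|$ and the conditional-sampling bias both enter the decomposition.

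First I would rewrite the recursion. Using $\bar{u}_{t+1} = (1-\beta)\bar{u}_t + \beta \cdot \frac{1}{N}\sum_n \frac{1}{b}\sum_i \nabla \hat{F}^n(x_t^n; \xi_{t,i}^n, \mathcal{B}_{t,i}^n)$ and conditioning on the filtration at step $t$, the $\beta$-weighted fresh-sample term splits into its conditional mean $\frac{1}{N}\sum_n \nabla \hat{F}^n(x_t^n)$ plus a mean-zero noise whose variance is bounded by $\beta^2 L_f^2 L_g^2 / N$ via Lemma~\ref{lem:A2}(c) and the independence of the worker samples. The remaining deterministic part can be organized as
\[
(1-\beta)\bigl(\bar u_t - \nabla\hat F(\bar x_t)\bigr) + \beta\Bigl(\tfrac{1}{N}\sum_n \nabla\hat F^n(x_t^n) - \nabla\hat F(\bar x_t)\Bigr),
\]
and the second bracket is controlled by $S_F^2/N \cdot \sum_n \|x_t^n - \bar x_t\|^2$ through smoothness (Lemma~\ref{lem:A2}(b)).

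The heart of the argument is two successive Young inequalities. I would first use Young with $c_1 = \beta/(1-\beta)$ so that $(1-\beta)(1+c_1)=1$ and $\beta(1+1/c_1)=1$, eliminating the awkward cross-term coefficients and leaving a clean $\|\bar u_t - \nabla\hat F(\bar x_t)\|^2$ factor. Then I add and subtract $\nabla\hat F(\bar x_{t-1})$ to shift the argument back one step and apply Young a second time with $c_2 = \beta/2$, so that $(1-\beta)(1+\beta/2) \le 1 - \beta/2$ while $(1-\beta)(1+2/\beta) \le 2/\beta$. Smoothness then converts $\|\nabla\hat F(\bar x_t) - \nabla\hat F(\bar x_{t-1})\|^2$ into $S_F^2 \alpha^2 \|\bar u_t\|^2$, yielding the one-step recursion
\[
\mathbb{E}\|e_t\|^2 \le \Bigl(1-\tfrac{\beta}{2}\Bigr)\mathbb{E}\|e_{t-1}\|^2 + \tfrac{2\alpha^2 S_F^2}{\beta}\mathbb{E}\|\bar u_t\|^2 + \tfrac{\beta^2 L_f^2 L_g^2}{N} + \tfrac{\beta S_F^2}{N}\sum_n \mathbb{E}\|x_t^n - \bar x_t\|^2.
\]
Summing from $t=0$ to $T-1$, collecting the $-\tfrac{\beta}{2}\mathbb{E}\|e_{t-1}\|^2$ terms on the left, and dividing by $\beta T/2$ produces the stated bound, with the $t=0$ term contributing $\tfrac{2}{\beta T}\mathbb{E}\|\bar u_1 - \nabla\hat F(\bar x_0)\|^2$.

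The main obstacle I anticipate is calibrating the two Young parameters so that three requirements hold simultaneously: (i) a genuine contraction factor $1-\beta/2$ (needed for the $1/(\beta T)$ amortization), (ii) the fresh-sample noise entering at order $\beta^2/N$ rather than $\beta/N$ (this is precisely what gives momentum its variance-reduction edge and what permits the later choice $\beta = 5S_F\alpha$), and (iii) an amplification factor only $O(\alpha^2/\beta)$ on $\|\bar u_t\|^2$, small enough to be absorbed by the descent lemma in Lemma~\ref{lem:C1}. The two-stage Young scheme with $c_1 = \beta/(1-\beta)$ and $c_2 = \beta/2$ is the non-obvious ingredient that balances all three.
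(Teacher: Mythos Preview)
Your proposal is correct and follows essentially the same route as the paper: the same mean--noise split, the same two-stage Young scheme with $c_1=\beta/(1-\beta)$ and $c_2=\beta/2$, the same smoothness step converting $\|\nabla\hat F(\bar x_t)-\nabla\hat F(\bar x_{t-1})\|^2$ into $\alpha^2 S_F^2\|\bar u_t\|^2$, and the same telescoping.
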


\begin{lemma}
Assume $\alpha \leq \frac{1}{6 q S_F}$ and the sequence $\{\bar{x}_t\}_{t=0}^{T-1}$ be generated from FCSG-M in Algorithm \ref{alg:1}, the consensus error $\mathbb{E}\left\|x_{t}^n - \bar{x}_{t}\right\|^{2}$ satisfies 
\begin{align}
 \sum_{t = 0}^{T - 1} \frac{1}{N} \sum_{n = 1}^{N}\mathbb{E}\left\|x_{t}^n - \bar{x}_{t}\right\|^{2} \leq \frac{6q (q - 1) \alpha^{2} T}{5} [L_f^2 L_g^2 (1 + \frac{1}{N}) + 12 L_f^2 L_g^2] \nonumber
\end{align}
\begin{proof}
Based on the definition of $u_t$, we have
\begin{align}
& \frac{1}{N} \sum_{n=1}^N  \|u^n_{t + 1} - \bar{u}_{t + 1}\|^2 \nonumber\\
=& \frac{1}{N} \sum_{n=1}^N \mathbb{E}\left\|\left(1 - \beta \right) \left(u_{t}^n - \bar{u}_{t} \right) + \beta [ \frac{1}{b} \sum_{i=1}^{b} \nabla \hat{F}^n(x_t^n; \xi_{t,i}^n,\mathcal{B}^n_{t, i}) 
- \frac{1}{N} \sum_{k=1}^N \frac{1}{b} \sum_{i=1}^{b} \nabla \hat{F}^k(x_t^k; \xi_{t,i}^k, \mathcal{B}^k_{t, i})]\right\|^2 \nonumber\\
\leq&  \frac{\left(1 + c_3\right) \left(1 - \beta \right)^2}{N} \sum_{n=1}^N \mathbb{E}\left\|
u_{t}^n - \bar{u}_{t} \right\|^2 \nonumber\\
&+ \left(1+\frac{1}{c_3}\right) \frac{ \beta^2}{N} \sum_{n=1}^N \mathbb{E}\left\|\frac{1}{b} \sum_{i=1}^{b} \nabla \hat{F}^n(x_t^n; \xi_{t,i}^n,\mathcal{B}^n_{t, i}) 
- \frac{1}{N} \sum_{k=1}^N \frac{1}{b} \sum_{i=1}^{b} \nabla \hat{F}^k(x_t^k; \xi_{t,i}^k,\mathcal{B}^k_{t, i}) \right\|^2 \nonumber\\
\stackrel{(a)}{=}& \frac{\left(1 - \beta \right)}{N} \sum_{n=1}^N \mathbb{E}\left\|
u_{t}^n - \bar{u}_{t} \right\|^2 + \frac{ \beta}{N} \sum_{n=1}^N \mathbb{E} \| \frac{1}{b} \sum_{i=1}^{b} \nabla \hat{F}^n(x_t^n; \xi_{t,i}^n,\mathcal{B}^n_{t, i}) - \nabla \hat{F}^n \left(x_{t}^n\right) + \nabla \hat{F}^n \left(x_{t}^n\right) \nonumber\\
& - \nabla \hat{F}^n \left(\bar{x}_{t}\right) + \nabla \hat{F}^n \left(\bar{x}_{t}\right) - \frac{1}{N} \sum_{k=1}^N \left[\frac{1}{b} \sum_{i=1}^{b} \nabla \hat{F}^k(x_t^k; \xi_{t,i}^k, \mathcal{B}^k_{t, i}) - \nabla \hat{F}^k \left(x_{t}^k\right) + \nabla \hat{F}^k \left(x_{t}^k\right) \right] \nonumber\\
& + \frac{1}{N} \sum_{k=1}^n \left[\nabla \hat{F}^k \left(\bar{x}_{t}\right) - \nabla \hat{F}^k \left(\bar{x}_{t}\right) \right] \|^2 \nonumber\\
\leq & \frac{\left(1 - \beta \right)}{N} \sum_{n=1}^N \mathbb{E}\left\|
u_{t}^n - \bar{u}_{t} \right\|^2 + \frac{\beta}{N} \sum_{n=1}^N \mathbb{E}\left[ \left\|\frac{1}{b} \sum_{i=1}^{b} \nabla \hat{F}^n(x_t^n; \xi_{t,i}^n,\mathcal{B}^n_{t, i}) - \nabla \hat{F}^n \left(x_{t}^n\right)  \right\|^2 \right. \nonumber\\
& + \left\|\frac{1}{N} \sum_{k=1}^N \left[ \frac{1}{b} \sum_{i=1}^{b} \nabla \hat{F}^k(x_t^k; \xi_{t, i}^k,\mathcal{B}^k_{t, i}) - \nabla \hat{F}^k \left(x_{t}^k\right)  \right]\right\|^2 + \left\| \nabla \hat{F}^n\left(x_{t}^n\right) - \nabla \hat{F}^n\left(\bar{x}_{t}\right) + \nabla \hat{F}^n \left(\bar{x}_{t}\right) \right. \nonumber\\
& \left. - \frac{1}{N} \sum_{k=1}^N\left[\nabla \hat{F}^k\left(x_{t}^k\right) - \nabla \hat{F}^k\left(\bar{x}_{t}\right)\right] - \nabla \hat{F}\left(\bar{x}_{t}\right) \|^2\right] \nonumber\\
\leq & \frac{\left(1 -  \beta \right)}{N} \sum_{n=1}^N \mathbb{E}\left\| u_{t}^n - \bar{u}_{t} \right\|^2 +  \frac{\beta}{N} \sum_{n=1}^N \left[L_f^2 L_g^2 + \frac{L_f^2 L_g^2}{N}  + 3 \mathbb{E}\left\|\nabla \hat{F}^n\left(x_{t}^n\right) - \nabla \hat{F}^n \left(\bar{x}_{t}\right)\right\|^2 \right. \nonumber\\
&+ \left. 3 \mathbb{E}\left\|\nabla \hat{F}^n\left(\bar{x}_{t}\right) - \nabla \hat{F} \left(\bar{x}_{t}\right)\right\|^2 + 3 \mathbb{E}\left\|\nabla \hat{F}^n\left(x_{t}^n\right) - \nabla \hat{F}^n\left(\bar{x}_{t}\right) \right\|^2\right] \nonumber\\
\stackrel{(b)}{\leq} & \frac{\left(1 -  \beta \right)}{N} \sum_{n=1}^n \mathbb{E}\left\| u_{t}^n - \bar{u}_{t} \right\|^2 + \beta [L_f^2 L_g^2 (1 + \frac{1}{N}) + 12 L_f^2 L_g^2] +  \frac{6 \beta S_F^2}{N} \sum_{n=1}^N  \mathbb{E}\left\|x_{t}^n  - \bar{x}_{t} \right\|^2 
\end{align}
where (a) holds due to $c_3 = \frac{\beta}{1 - \beta}$. So $(1 - \beta)(1 + c_3) = 1$, and $(1+ \frac{1}{c_3}) \beta = 1$; (b) holds due to Lemma \ref{lem:A2} (d). 
When $\mod(t,q) \neq 0$, using $u_{s_t q}^n = \bar{u}_{s_t q}$, we have
\begin{align} \label{eq:47}
\frac{1 }{N} \sum_{n=1}^N  \|u^n_{t} - \bar{u}_{t}\|^2 \leq  
\sum_{s = s_t q}^{t - 1}(1 - \beta)^{t - 1 - s}\left[\beta [L_f^2 L_g^2 (1 + \frac{1}{N}) + 12 L_f^2 L_g^2] +  \frac{6 \beta S_F^2}{N} \sum_{n=1}^N  \mathbb{E}\left\|x_{s}^n  - \bar{x}_{s} \right\|^2 \right]
\end{align}
Summing \eqref{eq:47} from $t = s_t q + 1 $ to $(s_t + 1) q$, we have
\begin{align} \label{eq:48}
&\sum_{t = s_t q + 1}^{(s_t + 1) q} \frac{1 }{N} \sum_{n=1}^N  \|u^n_{t} - \bar{u}_{t}\|^2 \nonumber\\
\leq& \sum_{t = s_t q + 1}^{(s_t + 1) q - 1} \sum_{s = s_t q}^{t - 1}(1 - \beta)^{t - 1 - s}\left[\beta [L_f^2 L_g^2 (1 + \frac{1}{N}) + 12 L_f^2 L_g^2] +  \frac{6 \beta S_F^2}{N} \sum_{n=1}^N  \mathbb{E}\left\|x_{s}^n  - \bar{x}_{s} \right\|^2 \right] \nonumber\\
\leq& \sum_{t = s_t q + 1}^{(s_t + 1) q - 1} (\sum_{s = 0}^{q}(1 - \beta)^{s})\left[\beta [L_f^2 L_g^2 (1 + \frac{1}{N}) + 12 L_f^2 L_g^2] +  \frac{6 \beta S_F^2}{N} \sum_{n=1}^N  \mathbb{E}\left\|x_{t}^n  - \bar{x}_{t} \right\|^2  \right] \nonumber\\
\leq& \sum_{t = s_t q + 1}^{(s_t + 1) q - 1} \left[  [L_f^2 L_g^2 (1 + \frac{1}{N}) + 12 L_f^2 L_g^2] +  \frac{6 S_F^2}{N} \sum_{n=1}^N  \mathbb{E}\left\|x_{t}^n  - \bar{x}_{t} \right\|^2 \right]
\end{align}
where the last inequality holds due to $\sum_{s = 0}^{q}(1 - \beta)^{s} \leq \frac{1}{\beta}$. Similar to the \eqref{eq:27}, we have
\begin{align}
\frac{1}{N}\sum_{n = 1}^{N}\mathbb{E}\left\|x_{t}^n - \bar{x}_{t}\right\|^{2} \leq \frac{(q - 1) \alpha^{2}}{N} \sum_{s=s_t q + 1}^{t}  \sum_{n=1}^{N}\mathbb{E}\|u_{s}^n - \bar{u}_{s}\|^{2} \nonumber
\end{align}
By summing $t$ from $s_t q + 1$ to $(s_t + 1) q$, we have
\begin{align}
&\sum_{t = s_t q + 1}^{(s_t + 1) q}  \frac{1}{N}\sum_{n = 1}^{N}\mathbb{E}\left\|x_{t}^n - \bar{x}_{t}\right\|^{2} \nonumber\\
\leq& \frac{(q - 1) \alpha^{2}}{N} \sum_{t = s_t q + 1}^{(s_t + 1) q}  \sum_{s=s_t q + 1}^{t}  \sum_{n=1}^{N}\mathbb{E}\|u_{s}^n - \bar{u}_{s}\|^{2} \nonumber\\
\leq& \frac{q (q - 1) \alpha^{2}}{N} \sum_{t = s_t q + 1}^{(s_t + 1) q} \sum_{n=1}^{N} \mathbb{E}\|u_{t}^n - \bar{u}_{t}\|^{2} \nonumber\\ 
\leq& q (q - 1) \alpha^{2} \sum_{t = s_t q + 1}^{(s_t + 1) q - 1} \left[ [L_f^2 L_g^2 (1 + \frac{1}{N}) + 12 L_f^2 L_g^2] +  \frac{6 S_F^2}{N} \sum_{n=1}^N  \mathbb{E}\left\|x_{t}^n  - \bar{x}_{t} \right\|^2 \right] \nonumber
\end{align}
where the last inequality holds due to \eqref{eq:48}. Rearrange the terms in the above inequality, we have
\begin{align}
\frac{1 - 6q (q - 1) \alpha^{2} S_F^2}{N} \sum_{t = s_t q + 1}^{(s_t + 1) q}  \sum_{n = 1}^{N}\mathbb{E}\left\|x_{t}^n - \bar{x}_{t}\right\|^{2} \leq  \sum_{t = s_t q + 1}^{(s_t + 1) q - 1} q (q - 1) \alpha^{2}[L_f^2 L_g^2 (1 + \frac{1}{N}) + 12 L_f^2 L_g^2]\nonumber
\end{align}
Assume $\alpha \leq \frac{1}{6 q S_F}$, we have $\frac{1 - 6q (q - 1) \alpha^{2} S_F^2}{N} \geq \frac{5}{6 N}$, then by summing the t from 0 to T - 1, we get the final result.

\end{proof}
\end{lemma}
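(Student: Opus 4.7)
The goal is to control the aggregate client drift $\sum_{t}\sum_{n}\mathbb{E}\|x_t^n-\bar x_t\|^2$ when local updates use momentum. My plan is to first translate consensus error into a bound on the momentum discrepancy $\|u_t^n-\bar u_t\|^2$, then derive a geometric recursion for that discrepancy, and finally use the step-size restriction $\alpha\le 1/(6qS_F)$ to close a self-referential inequality. The setup hinges on the synchronization structure: since averaging at $t=s_t q$ forces both $x_{s_t q}^n=\bar x_{s_t q}$ and $u_{s_t q}^n=\bar u_{s_t q}$, for $t>s_t q$ the telescoping update gives $x_t^n-\bar x_t=-\alpha\sum_{s=s_t q+1}^{t}(u_s^n-\bar u_s)$. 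Applying Jensen's inequality yields $\|x_t^n-\bar x_t\|^2\le (q-1)\alpha^2\sum_{s=s_t q+1}^{t}\|u_s^n-\bar u_s\|^2$, reducing the task to estimating the momentum discrepancy on a single local round.

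Next, I would expand the momentum recursion $u_{t+1}^n-\bar u_{t+1}=(1-\beta)(u_t^n-\bar u_t)+\beta\bigl[\text{stochastic-gradient deviation}\bigr]$ and apply Young's inequality with the standard choice $c_3=\beta/(1-\beta)$, so that $(1-\beta)(1+c_3)=1$. The deviation term can be split by inserting $\pm\nabla\hat F^n(x_t^n)\pm\nabla\hat F^n(\bar x_t)$ and $\pm$ the global average: Lemma \ref{lem:A2}(c) bounds the sample-noise contribution by $L_f^2 L_g^2(1+1/N)$, Lemma \ref{lem:A2}(d) bounds the between-client heterogeneity by a multiple of $L_f^2 L_g^2$ (giving the constant $12$ after the $(1+2+\cdots)$-type expansion), and $S_F$-smoothness from Lemma \ref{lem:A2}(b) produces a term $\frac{6\beta S_F^2}{N}\sum_n\|x_t^n-\bar x_t\|^2$. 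This yields a clean recursion with contraction factor $(1-\beta)$ plus a drift that itself depends on the consensus error.

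I would then unroll this recursion back to $s_t q$ (where the discrepancy vanishes), sum over a local round of length $q$, and use the geometric series identity $\sum_{k=0}^{q-1}(1-\beta)^k\le 1/\beta$ to cancel the leading $\beta$, producing a round-wise bound of the form $\sum_{t=s_t q+1}^{(s_t+1)q}\frac{1}{N}\sum_n\|u_t^n-\bar u_t\|^2\le \sum_{t}\bigl[L_f^2 L_g^2(1+1/N)+12 L_f^2 L_g^2+\frac{6 S_F^2}{N}\sum_n\|x_t^n-\bar x_t\|^2\bigr]$. Substituting this back into the Jensen bound of step one yields an inequality of the shape $(1-6q(q-1)\alpha^2 S_F^2)\sum_t\frac{1}{N}\sum_n\|x_t^n-\bar x_t\|^2\le q(q-1)\alpha^2\bigl[L_f^2 L_g^2(1+1/N)+12 L_f^2 L_g^2\bigr]$, after which summing over rounds gives the claim.

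The main obstacle is precisely this self-referential coupling: both the momentum discrepancy and the consensus error depend on each other, and without control the system can blow up in a local round of length $q$. The condition $\alpha\le 1/(6qS_F)$ is tuned exactly so that $1-6q(q-1)\alpha^2 S_F^2\ge 5/6$, permitting the coefficient to be absorbed and producing the factor $6/5$ in the final constant. Careful bookkeeping of which quantities are evaluated at local iterates versus averaged iterates, and repeated use of Lemma \ref{lem:A1} to pass between $\sum_n\|x_t^n-\bar x_t\|^2$ and $\sum_n\|x_t^n\|^2$ style inequalities, will be the technical heart of the argument.
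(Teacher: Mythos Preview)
Your proposal is correct and follows essentially the same approach as the paper: you derive the recursion for $\|u_t^n-\bar u_t\|^2$ via Young's inequality with $c_3=\beta/(1-\beta)$, decompose the stochastic-gradient deviation using Lemmas~\ref{lem:A2}(b)--(d), unroll with the geometric series $\sum(1-\beta)^s\le 1/\beta$, feed this into the Jensen bound $\|x_t^n-\bar x_t\|^2\le(q-1)\alpha^2\sum_s\|u_s^n-\bar u_s\|^2$, and close the self-referential inequality using $1-6q(q-1)\alpha^2 S_F^2\ge 5/6$. The only cosmetic difference is the order in which you present the two halves (consensus first, then momentum recursion), whereas the paper reverses them.
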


\subsection{Proof of Theorem \ref{thm:4.4}}

Based on aforementioned lemmas, we are ready to prove the convergence of Theorem 
\begin{proof}
Based on the \ref{lem:C1}, we have
\begin{align}
&\frac{1}{T}\sum_{t = 0}^{T - 1} \mathbb{E} \| \nabla F(\bar{x}_{t})\|^2 \nonumber\\
\leq& 2 \frac{F(\bar{x}_{0}) - F(\bar{x}_{T})}{\alpha T}  - (1 - \alpha S_F) \frac{1}{T}\sum_{t = 0}^{T - 1} \mathbb{E} \|\bar{u}_{t+1} \|^2  + \frac{8 S_F^2}{N}\frac{1}{T}\sum_{t = 0}^{T - 1}\sum_{n = 1}^{N} \mathbb{E}  \|x_{t}^n - \bar{x}_t\|^2 \nonumber\\ 
&+ \frac{4 L_g^2 S_f^2 \sigma_g^2}{m} + \frac{4}{T}\sum_{t = 0}^{T - 1} \mathbb{E} \| \nabla \hat{F}(\bar{x}_{t}) - \bar{u}_{t+1} \|^2 \nonumber\\
\leq& 2 \frac{F(\bar{x}_{0}) - F(\bar{x}_{T})}{\alpha T}  - (1 - \alpha S_F - \frac{16 \alpha^2  S_F^2 }{\beta^2}) \frac{1}{T}\sum_{t = 0}^{T - 1} \mathbb{E} \|\bar{u}_{t+1} \|^2  + \frac{16 S_F^2 }{N} \frac{1}{T} \sum_{t= 0}^{T - 1}  \sum_{n=1}^N \mathbb{E}\left\|x^n_{t} - \bar{x}_{t}\right\|^2 \nonumber\\ 
+&  \frac{4 L_g^2 S_f^2 \sigma_g^2}{m} + \frac{8}{\beta T} \mathbb{E}\left\|  \bar{u}_{1} - \nabla \hat{F} (\bar{x}_{0}) \right\|^2 +  \frac{8 \beta L_f^2 L_g^2}{N} \nonumber\\
\leq& 2 \frac{F(\bar{x}_{0}) - F(\bar{x}_{T})}{\alpha T}  - (1 - \alpha S_F - \frac{16 \alpha^2  S_F^2 }{\beta^2}) \frac{1}{T}\sum_{t = 0}^{T - 1} \|\bar{u}_{t+1} \|^2 \nonumber\\
+& \frac{96 S_F^2 }{5} q (q - 1) \alpha^{2}[L_f^2 L_g^2 (1 + \frac{1}{N}) + 12 L_f^2 L_g^2 ] +  \frac{4 L_g^2 S_f^2 \sigma_g^2}{m} + \frac{8}{\beta T} \mathbb{E}\left\|  \bar{u}_{1} - \nabla \hat{F} (\bar{x}_{0}) \right\|^2 +  \frac{8 \beta L_f^2 L_g^2}{N} \nonumber\\
\leq& 2 \frac{F(\bar{x}_{0}) - F(\bar{x}_{T})}{\alpha T}  - (1 - \alpha S_F - \frac{16 \alpha^2  S_F^2 }{\beta^2}) \frac{1}{T}\sum_{t = 0}^{T - 1} \|\bar{u}_{t+1} \|^2 \nonumber\\
+& \frac{96 S_F^2 }{5} q (q - 1) \alpha^{2}[L_f^2 L_g^2 (1 + \frac{1}{N}) + 12 L_f^2 L_g^2 ] +  \frac{4 L_g^2 S_f^2 \sigma_g^2}{m} +      
\frac{8 L_f^2 L_g^2}{\beta B T}  +  \frac{8 \beta L_f^2 L_g^2}{N} \nonumber
\end{align}
where $(1 - \alpha S_F - \frac{16 \alpha^2  S_F^2 }{\beta^2}) > 0$ when we choose $\alpha \leq \frac{1}{6 q S_F}$ and $\beta = 5 S_F \alpha$. Finally, we have
\begin{align}
&\frac{1}{T}\sum_{t = 0}^{T - 1} \mathbb{E} \| \nabla F(\bar{x}_{t})\|^2 \nonumber\\
\leq& 2 \frac{F(\bar{x}_{0}) - F(\bar{x}_{T})}{\alpha T} + \frac{96 S_F^2 }{5} q^2 \alpha^{2}[L_f^2 L_g^2 (1 + \frac{1}{N}) + 12 L_f^2 L_g^2 ] +  \frac{4 L_g^2 S_f^2 \sigma_g^2}{m} +    
\frac{8 L_f^2 L_g^2}{\beta B T}  +  \frac{8 \beta L_f^2 L_g^2}{N} \nonumber
\end{align}
We choose $b = O(1)$, $B = O(1)$,  $\alpha = \frac{1}{6S_F} \sqrt{\frac{N}{T}}$, $q = (T / N^3)^{1/4}$, 
and $m = \varepsilon^{-2}$ , we have
\begin{align}
\frac{1}{T} \sum_{t=0}^{T-1} \| \nabla F(\bar{x}_{t})\|^2 \leq \frac{12 S_F[F(\bar{x}_{0}) - F(\bar{x}_{*})]}{(NT)^{1/2}} + \frac{8 [14 L_f^2 L_g^2]}{3 (NT)^{1/2}} + \frac{4 L_g^2 S_f^2 \sigma_g^2}{m} + 
\frac{48 L_f^2 L_g^2}{5 (NT)^{1/2}} +  \frac{20 L_f^2 L_g^2}{3 (NT)^{1/2}} \nonumber
\end{align}
To let the right hand is less than $\varepsilon^2$, we get $T = O(N^{-1}\varepsilon^{-4})$
and $\frac{T}{q} = (N T)^{3 / 4} = \varepsilon^{-3}$.

\end{proof}


\section{Proof of Acc-FCSG-M Algorithm}
\subsection{Proofs of the Intermediate Lemmas}
\begin{lemma} \label{lem:D1}
Suppose the sequence $\{x_t\}_0^{T}$ be generated from Acc-FCSG-M. We have
\begin{align}
F(\bar{x}_{t+1}) &  \leq F(\bar{x}_{t}) - (\frac{\alpha}{2} - \frac{\alpha^2 S_F }{2} ) \|\bar{u}_{t+1} \|^2 - \frac{\alpha}{2} \| \nabla F(\bar{x}_{t})\|^2 + \frac{3\alpha S_F^2}{2N} \|x_{t}^n - \bar{x}_t\|^2 \nonumber\\ 
&+    \frac{3\alpha L_g^2 S_f^2 \sigma_g^2}{2m} + \frac{3\alpha}{2}\|\frac{1}{N} \sum_{n=1}^N \nabla \hat{F}^n(x_{t}) - \bar{u}_{t+1}\|^2
\end{align}
\end{lemma}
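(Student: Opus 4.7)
The plan is to follow the same skeleton as Lemma \ref{lem:C1} (the FCSG-M descent lemma), but with a three-way decomposition of the residual error rather than a four-way one. The reason we can get away with three pieces here is that the STORM-style momentum-based variance-reduction estimator in Acc-FCSG-M is already centered on the quantity $\frac{1}{N}\sum_n \nabla \hat{F}^n(x_t^n)$ at the \emph{local} iterates, so there is no need to telescope through $\nabla \hat{F}(\bar{x}_t)$ inside this descent lemma; that error will be treated separately in the companion estimator-error lemma.

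First I would invoke the $S_F$-Lipschitz smoothness of $F$ (Lemma~\ref{lem:A2}(b)) together with the averaged server update $\bar{x}_{t+1} = \bar{x}_t - \alpha \bar{u}_{t+1}$ to get
\begin{align}
F(\bar{x}_{t+1}) \leq F(\bar{x}_t) - \alpha \langle \nabla F(\bar{x}_t), \bar{u}_{t+1} \rangle + \tfrac{\alpha^2 S_F}{2}\|\bar{u}_{t+1}\|^2 . \nonumber
\end{align}
Then I would apply the polarization identity $\langle a,b\rangle = \tfrac{1}{2}(\|a\|^2+\|b\|^2-\|a-b\|^2)$ with $a=\nabla F(\bar{x}_t)$ and $b=\bar{u}_{t+1}$ to split the inner product, producing the $-\tfrac{\alpha}{2}\|\nabla F(\bar{x}_t)\|^2$ and $-(\tfrac{\alpha}{2}-\tfrac{\alpha^2 S_F}{2})\|\bar{u}_{t+1}\|^2$ terms stated in the lemma, plus a residual $\tfrac{\alpha}{2}\|\nabla F(\bar{x}_t) - \bar{u}_{t+1}\|^2$.

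Next I would decompose that residual as
\begin{align}
\nabla F(\bar{x}_t) - \bar{u}_{t+1}
&= \Bigl[\nabla F(\bar{x}_t) - \tfrac{1}{N}\sum_{n}\nabla F^n(x_t^n)\Bigr] \nonumber\\
&\quad + \Bigl[\tfrac{1}{N}\sum_{n}\bigl(\nabla F^n(x_t^n) - \nabla \hat{F}^n(x_t^n)\bigr)\Bigr]
+ \Bigl[\tfrac{1}{N}\sum_{n}\nabla \hat{F}^n(x_t^n) - \bar{u}_{t+1}\Bigr], \nonumber
\end{align}
apply $\|\sum_{i=1}^{3} a_i\|^2 \leq 3\sum_i \|a_i\|^2$, and bound the first two pieces. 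The first is handled by Jensen and the $S_F$-Lipschitz smoothness of each $F^n$ (Lemma~\ref{lem:A2}(b)), giving $\frac{S_F^2}{N}\sum_n\|x_t^n-\bar{x}_t\|^2$; the second is the biased-gradient bias bound from Lemma~\ref{lem:A2}(a), giving $\frac{L_g^2 S_f^2 \sigma_g^2}{m}$; and the third is retained as the estimator-error term $\|\tfrac{1}{N}\sum_n \nabla \hat{F}^n(x_t^n)-\bar{u}_{t+1}\|^2$. Multiplying through by $\tfrac{\alpha}{2}$ yields exactly the three $\tfrac{3\alpha}{2}$-prefactored terms in the claim (with the sum over $n$ in the first term, which is written somewhat loosely in the statement as $\|x_t^n - \bar{x}_t\|^2$).

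The main obstacle is essentially bookkeeping rather than any conceptual difficulty: making sure the three-way split (as opposed to the four-way split used for FCSG-M) gives the $3\alpha/2$ prefactors claimed, and that the third leftover term is precisely the quantity the subsequent lemma will bound via the STORM-like recursion. Note that we are not using the unbiasedness $\mathbb{E}[\bar{u}_{t+1}] = \tfrac{1}{N}\sum_n \nabla \hat{F}^n(x_t^n)$ to drop the third piece (since the STORM estimator is biased across iterations), but rather leaving it as a deterministic residual to be controlled later.
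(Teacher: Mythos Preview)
Your proposal is correct and follows essentially the same argument as the paper: smoothness of $F$, the update $\bar{x}_{t+1}=\bar{x}_t-\alpha\bar{u}_{t+1}$, the polarization identity, then a three-term decomposition of $\nabla F(\bar{x}_t)-\bar{u}_{t+1}$ bounded via $\|a+b+c\|^2\le 3(\|a\|^2+\|b\|^2+\|c\|^2)$, with the first piece handled by $S_F$-smoothness of each $F^n$, the second by Lemma~\ref{lem:A2}(a), and the third left intact. Your remark that the statement's $\|x_t^n-\bar{x}_t\|^2$ is shorthand for $\sum_n\|x_t^n-\bar{x}_t\|^2$ matches the paper's own derivation.
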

\begin{proof}
\begin{align}
F(\bar{x}_{t+1}) &\stackrel{(a)}{\leq} F(\bar{x}_{t}) + \langle\nabla F(\bar{x}_{t}), \bar{x}_{t+1}-\bar{x}_{t}\rangle + \frac{S_F}{2}\left\|\bar{x}_{t+1} - \bar{x}_{t}\right\|^{2}  \nonumber\\
&\stackrel{(b)}{=}F(\bar{x}_{t}) - \alpha \langle\nabla F(\bar{x}_{t}), \bar{u}_{t+1}\rangle + \frac{\alpha^2 S_F }{2} \|\bar{u}_{t+1}\|^{2} \nonumber\\
&\stackrel{(c)}{=}F(\bar{x}_{t}) - (\frac{\alpha}{2} - \frac{\alpha^2 S_F }{2} ) \|\bar{u}_{t+1} \|^2 - \frac{\alpha}{2} \| \nabla F(\bar{x}_{t})\|^2 + \frac{\alpha}{2} \|\nabla F(\bar{x}_{t})- \bar{u}_{t+1}\|^2 \nonumber\\
&\leq  F(\bar{x}_{t}) - (\frac{\alpha}{2} - \frac{\alpha^2 S_F }{2} ) \|\bar{u}_{t+1} \|^2 - \frac{\alpha}{2} \| \nabla F(\bar{x}_{t})\|^2 + \frac{3\alpha}{2} \|\nabla F(\bar{x}_{t})- \frac{1}{N} \sum_{n=1}^N \nabla F^n(x_{t})\|^2 \nonumber\\ 
&+  \frac{3\alpha}{2} \|\frac{1}{N} \sum_{n=1}^N \nabla F^n(x_{t}) - \frac{1}{N} \sum_{n=1}^N \nabla \hat{F}^n(x_{t})\|^2 + \frac{3\alpha}{2} \|\frac{1}{N} \sum_{n=1}^N \nabla \hat{F}^n(x_{t}) - \bar{u}_{t+1}\|^2
\end{align}
where inequality (a) holds by the smoothness of $F(x)$; equality (b) follows from update step in Step 9 of Acc-FCSG-M in Algorithm \ref{alg:2}; (c) uses the fact that $ \langle a, b\rangle = \frac{1}{2}[ \|a\|^2 + \|a\|^2 - \|a - b\|^2]$.  Taking expectation on both sides and considering the last third term
\begin{align}
\mathbb{E}\|\nabla F(\bar{x}_{t})- \frac{1}{N} \sum_{n=1}^N \nabla F^n(x_{t})\|^{2} &\leq \frac{1}{N} \sum_{n=1}^{N} \mathbb{E}\|\nabla F^n(\bar{x}_{t}) - \nabla F^n(x_{t})\|^{2} \nonumber\\
 &\leq \frac{S_F^{2}}{N} \sum_{n=1}^{N} \|x_{t}^n - \bar{x}_{t}\|^{2} 
\end{align}
Considering the last second term, we have
\begin{align}
\mathbb{E}\|\frac{1}{N} \sum_{n=1}^N \nabla F^n(x_{t}) - \frac{1}{N} \sum_{n=1}^N \nabla \hat{F}^n(x_{t})\|^2 &\leq \frac{1}{N} \sum_{n=1}^{N} \mathbb{E} \| \nabla F^n(x_{t}) - \nabla \hat{F}^n(x_{t}) \|^2 \nonumber\\
&\leq \frac{L_g^2 S_f^2 \sigma_g^2 }{m}
\end{align}

Therefore, we obtain
\begin{align}
F(\bar{x}_{t+1}) &  \leq F(\bar{x}_{t}) - (\frac{\alpha}{2} - \frac{\alpha^2 S_F }{2} ) \|\bar{u}_{t+1} \|^2 - \frac{\alpha}{2} \| \nabla F(\bar{x}_{t})\|^2 + \frac{3\alpha S_F^2}{2N} \|x_{t}^n - \bar{x}_t\|^2 \nonumber\\ 
&+ \frac{3\alpha L_g^2 S_f^2 \sigma_g^2}{2m} + \frac{3\alpha}{2}\|\frac{1}{N} \sum_{n=1}^N \nabla \hat{F}^n(x_{t}) - \bar{u}_{t+1}\|^2
\end{align}
\end{proof}

\begin{lemma} (Lemma C.6 in \cite{khanduri2021stem}) \label{lem:D2}
Using the fact that $\mathbb{E} \bar{u}_{t + 1} = \nabla \hat{F} (x_t)$ and $\bar{u}_{t + 1}$ is the momentum-based variance reduction estimator, we have
\begin{align}
\mathbb{E}\|\bar{u}_{t + 1} - \nabla \hat{F} (x_t)\|^2 \leq& (1-\beta)^{2}\mathbb{E}\|\bar{u}_{t} - \nabla \hat{F}(x_{t-1})\|^{2} +  \frac{8(1 -  \beta)^2 S_F^2}{N^2b } \frac{q - 1}{q} \sum_{n=1}^{N} \alpha^2 \mathbb{E}\|u_{t}^n - \bar{u}_{t}\|^2 \nonumber\\
&+  \frac{4(1 -  \beta)^2 S_F^2 \alpha^2}{N b}  \mathbb{E}\| \bar{u}_{t} \|^2 + \frac{2 \beta^2 L_f^2}{Nb }    
\end{align}
\end{lemma}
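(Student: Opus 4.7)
\textbf{Proof Proposal for Lemma \ref{lem:D2}.}
The plan is to follow the standard momentum-based variance reduction (STORM-type) analysis, adapted to the periodic-communication federated setting. First, I would substitute the recursion in Step 13 of Algorithm \ref{alg:2} into $\bar{u}_{t+1} := \frac{1}{N}\sum_{n=1}^N u_{t+1}^n$ to obtain the decomposition
\begin{align}
\bar{u}_{t+1} - \nabla\hat{F}(x_t) = (1-\beta)\bigl(\bar{u}_{t} - \nabla\hat{F}(x_{t-1})\bigr) + V_t + W_t, \nonumber
\end{align}
where, writing $\zeta_{t,i}^n = (\xi_{t,i}^n,\mathcal{B}^n_{t,i})$,
$V_t = \frac{1-\beta}{Nb}\sum_{n,i}\bigl[\nabla\hat{F}^n(x_t^n;\zeta_{t,i}^n) - \nabla\hat{F}^n(x_{t-1}^n;\zeta_{t,i}^n) - (\nabla\hat{F}^n(x_t^n)-\nabla\hat{F}^n(x_{t-1}^n))\bigr]$
captures the variance-reduction residual, and $W_t = \frac{\beta}{Nb}\sum_{n,i}[\nabla\hat{F}^n(x_t^n;\zeta_{t,i}^n) - \nabla\hat{F}^n(x_t^n)]$ captures the fresh sampling noise.

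Next, I would condition on the $\sigma$-algebra containing $\{x_s^n,u_s^n\}_{s\leq t}$. Since the fresh samples $\zeta_{t,i}^n$ are drawn i.i.d.\ at iteration $t$, both $V_t$ and $W_t$ are conditionally mean-zero, so the cross term with $\bar{u}_{t}-\nabla\hat{F}(x_{t-1})$ vanishes and $\mathbb{E}\|\bar{u}_{t+1}-\nabla\hat{F}(x_t)\|^2 = (1-\beta)^2\mathbb{E}\|\bar{u}_t-\nabla\hat{F}(x_{t-1})\|^2 + \mathbb{E}\|V_t+W_t\|^2$. The $W_t$ piece is then controlled by the bounded-variance clause of Lemma \ref{lem:A2}(c) and independence across the $N$ clients and $b$ mini-batch samples, which yields a $\frac{\beta^2 L_f^2}{Nb}$-type bound (the stated constant $2L_f^2$ absorbs the $L_g^2$ product via Assumption \ref{ass:2}). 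For $V_t$, I would use independence across clients/samples together with the $S_F$-Lipschitz smoothness from Lemma \ref{lem:A2}(b) to pass to $\frac{(1-\beta)^2 S_F^2}{N^2 b}\sum_n \mathbb{E}\|x_t^n-x_{t-1}^n\|^2$.

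The final step is to translate $\|x_t^n - x_{t-1}^n\|^2$ into the terms that appear on the right-hand side. On non-communication iterations $x_t^n-x_{t-1}^n = -\alpha u_t^n$, while on communication iterations (occurring once every $q$ steps) the synchronization forces $x_t^n = \bar{x}_{t-1}-\alpha\bar{u}_t$, so the local deviation $u_t^n-\bar{u}_t$ does not enter. Splitting $u_t^n = (u_t^n-\bar{u}_t) + \bar{u}_t$ and using $\|a+b\|^2\leq 2\|a\|^2 + 2\|b\|^2$ produces the two terms $\alpha^2\|u_t^n-\bar{u}_t\|^2$ and $\alpha^2\|\bar{u}_t\|^2$; averaging the indicator of "non-communication step" over one length-$q$ cycle contributes the $(q-1)/q$ prefactor to the consensus term while leaving the $\|\bar{u}_t\|^2$ term unweighted (since $\bar{u}_t$ survives averaging). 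The main obstacle I anticipate is bookkeeping this $(q-1)/q$ factor correctly, because it requires carefully separating the two branches of the local update rule (lines 8 vs.\ 10 in Algorithm \ref{alg:2}) and verifying that the resulting bound still holds in the per-iteration form stated, rather than only on average across a communication round; this is exactly the subtlety handled in Lemma C.6 of \cite{khanduri2021stem}, whose bookkeeping I would mirror.
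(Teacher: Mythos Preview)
The paper does not actually prove Lemma \ref{lem:D2}; it simply imports the result from Lemma C.6 of \cite{khanduri2021stem} and uses it directly in the proof of Theorem \ref{thm:4.7}. Your proof sketch is therefore not being compared against anything in the paper itself, but it is a faithful reconstruction of the standard STORM/STEM argument that the cited lemma is based on: the decomposition into $(1-\beta)(\bar{u}_t-\nabla\hat{F}(x_{t-1}))+V_t+W_t$, the use of conditional mean-zero to drop cross terms, the $S_F$-smoothness bound on $V_t$ via $\|x_t^n-x_{t-1}^n\|^2$, and the bounded-variance bound on $W_t$ are all correct and are exactly how the STEM paper proceeds.

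One small remark: your concern about the $(q-1)/q$ prefactor is well placed, but the resolution is slightly simpler than you suggest. In the STEM bookkeeping this factor does not arise from ``averaging an indicator over a cycle'' in any genuine sense; rather, one writes a case split on whether $t$ is a communication step. When $\mathrm{mod}(t,q)=0$, the server has just set $u_t^n=\bar{u}_t$, so the consensus term $\sum_n\|u_t^n-\bar{u}_t\|^2$ vanishes identically, and only the $\|\bar{u}_t\|^2$ contribution survives from $\|x_t^n-x_{t-1}^n\|^2$; on the remaining $q-1$ steps both terms appear. The per-iteration inequality with the $(q-1)/q$ factor is then a uniform upper bound valid for \emph{every} $t$, obtained by multiplying the consensus term by $\mathbf{1}[\mathrm{mod}(t,q)\neq 0]\leq (q-1)/q\cdot q/(q-1)=1$ and absorbing constants; in practice the STEM proof simply records the two cases and the $(q-1)/q$ is a convenient way to write the combined bound. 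So there is no averaging subtlety: the statement holds pointwise in $t$, with the $(q-1)/q$ factor being a harmless overcount on non-communication steps and exact (indeed, the term is zero) on communication steps.
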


\begin{lemma} \label{lem:D4}
Assume that the sequences $u_t$ are generated from Acc-FCSG-M, set $\gamma=\frac{1}{q}$ and $\alpha \leq \frac{1}{12 S_F q}$, and given that $\beta = c \alpha^2$, $c = \frac{30 S_F^2}{bN}$ we have
\begin{align}
\frac{15}{72N}\sum_{t=s_t}^{\bar{s}} \alpha \sum_{n=1}^{N} \mathbb{E} \|u_{t}^n - \bar{u}_{t} \|^{2} &\leq \frac{1}{8} \sum_{t=s_t}^{\bar{s}} \alpha \mathbb{E}\|\bar{u}_{t}\|^{2} + \left[\frac{L_f^2 c^{2}}{8 b  S_F^{2}} + \frac{3 L_f^2 L_g^{2} c^{2}}{8 S_F^{2}} \right] \sum_{t=s_t}^{\bar{s}} \alpha^{3}
\end{align}
\end{lemma}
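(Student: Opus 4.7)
I would start from the Acc-FCSG-M recursion on line 14 of Algorithm \ref{alg:2},
\begin{align*}
u_{t+1}^n = \tfrac{1}{b}\sum_{i=1}^b \nabla \hat F^n(x_t^n;\xi_{t,i}^n,\mathcal{B}^n_{t,i}) + (1-\beta)\bigl(u_t^n - \tfrac{1}{b}\sum_{i=1}^b \nabla \hat F^n(x_{t-1}^n;\xi_{t,i}^n,\mathcal{B}^n_{t,i})\bigr),
\end{align*}
together with its averaged counterpart for $\bar u_{t+1}$. Subtracting produces a recursion of the form $u_{t+1}^n - \bar u_{t+1} = (1-\beta)(u_t^n - \bar u_t) + \beta\,\Delta_t^n + (1-\beta)\,D_t^n$, where $\Delta_t^n$ collects the one-shot SFO noise at $x_t^n$ centered by the network average, and $D_t^n$ is the STORM correction difference between the same estimator evaluated at $x_t^n$ and $x_{t-1}^n$ (again centered). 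Applying $\|a+b\|^2 \le (1+c_1)\|a\|^2 + (1+1/c_1)\|b\|^2$ with $c_1=\beta$ to the $(1-\beta)$ split, then another Young's inequality to separate $\Delta^n$ and $D^n$, and finally taking expectations, I would use Lemma \ref{lem:A2}(c) to bound the SFO noise by $L_f^2L_g^2/b$ and Lemma \ref{lem:A2}(b) (mean-square smoothness) to bound $\mathbb E\|D_t^n\|^2$ by $S_F^2 \|x_t^n-x_{t-1}^n\|^2/b = \alpha^2 S_F^2 \|u_t^n\|^2/b$. Splitting $\|u_t^n\|^2 \le 2\|u_t^n - \bar u_t\|^2 + 2\|\bar u_t\|^2$ then yields, for $E_t := \sum_n \mathbb E\|u_t^n - \bar u_t\|^2$,
\begin{align*}
E_{t+1} \le \bigl(1-\tfrac{\beta}{2} + O(\alpha^2 S_F^2/b)\bigr) E_t + \tfrac{8 N \alpha^2 S_F^2}{b}\mathbb E\|\bar u_t\|^2 + \tfrac{2\beta^2 N L_f^2 L_g^2}{b} + O(\beta^2 N L_f^2 L_g^2).
\end{align*}

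Next I would exploit the synchronization step (Line 7 of Algorithm \ref{alg:2}): at $t = s_t q$ the server replaces $u_t^n$ by $\bar u_t$, so $E_{s_t q}=0$ and the recursion can be unrolled cleanly within a single communication block $[s_t q,\bar s]$. With $\beta = c\alpha^2$ and $\alpha \le 1/(12 S_F q)$, the contraction factor is dominated by $(1-\beta/2)$, and $\sum_{s=0}^{q-1}(1-\beta/2)^s \le 2/\beta = 2/(c\alpha^2)$. Summing $\alpha E_t$ over the block therefore converts every $\alpha\cdot\alpha^2 = \alpha^3$ driving term into an $\alpha/(c)$ contribution on the right. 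I would then multiply through by the target prefactor $15/(72N)$ and verify that the $\|\bar u_t\|^2$ coefficient equals $\frac{15}{72N}\cdot\frac{8 N \alpha^2 S_F^2}{b}\cdot\frac{2}{c\alpha^2}$, which with $c = 30 S_F^2/(bN)$ collapses exactly to the required $\tfrac{1}{8}\alpha$. The remaining $\beta^2$ noise contributions give the $L_f^2 c^2/(8 b S_F^2)\alpha^3$ and $3L_f^2 L_g^2 c^2/(8 S_F^2)\alpha^3$ terms after substituting $\beta = c\alpha^2$ and tracking the $1/b$ vs.\ $1/1$ factors separating the two error sources.

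The main obstacle will be the coefficient bookkeeping: the specific prefactors $1/8$, $L_f^2 c^2/(8 b S_F^2)$, and $3 L_f^2 L_g^2 c^2/(8 S_F^2)$ rigidly constrain the Young split $c_1=\beta$, the allowed size of $c$, and the exploitation of $\alpha \le 1/(12 S_F q)$ to absorb the residual $O(\alpha^2 S_F^2 q/b) E_t$ term into the contraction. I expect the cleanest route is to keep $c$ as a free parameter during the unroll and only at the very end fix $c = 30 S_F^2/(bN)$ so that the $\|\bar u_t\|^2$ term balances at exactly $1/8$. A secondary care point is the boundary at $t=s_t q$: I would bound $\|x_t^n - x_{t-1}^n\| = \alpha \|u_t^n\|$ uniformly (valid for both local and synchronized steps since the server sends back $\bar x_t$ to every node before any new $u^n$ is used), which avoids any special-case analysis at synchronization times.
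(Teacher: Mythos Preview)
Your decomposition $u_{t+1}^n-\bar u_{t+1}=(1-\beta)(u_t^n-\bar u_t)+\beta\Delta_t^n+(1-\beta)D_t^n$ is fine, but the choice $c_1=\beta$ in Young's inequality breaks the argument. With that choice the $(1-\beta)D_t^n$ piece is multiplied by $(1+1/\beta)\approx 1/\beta$, and since $\mathbb E\|D_t^n\|^2\lesssim S_F^2\alpha^2\|u_t^n\|^2$ (there is \emph{no} $1/b$ here: the STORM difference uses the \emph{same} samples at $x_t^n$ and $x_{t-1}^n$, so it is not mean-zero and averaging over $b$ does not reduce its second moment), the feedback onto $E_t$ has coefficient of order $S_F^2\alpha^2/\beta = S_F^2/c = bN/30$. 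This is a constant independent of $\alpha$ and dwarfs $\beta/2$, so your recursion is actually $E_{t+1}\le (1+\Theta(1))E_t+\ldots$, not a contraction. Unrolling over $q$ steps then blows up like $(1+\Theta(1))^q$, and the bound $\sum_{s<q}(1-\beta/2)^s\le 2/\beta$ cannot rescue it.

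The paper's route is structurally different: it takes the Young parameter $\gamma=1/q$ (not $\beta$), so the $E_t$ coefficient is $(1+\gamma)(1-\beta)^2+4S_F^2(1+1/\gamma)\alpha^2\le 1+19/(18q)$, and over a block of length $q$ this is bounded by $(1+19/(18q))^q\le 3$. The remaining $(1+q)$ factor on the driving terms is harmless because the $\|x_t^n-x_{t-1}^n\|^2$ term carries an explicit $\alpha^2$ and the heterogeneity term carries $\beta^2$; after summing over the block and using $q\alpha\le 1/(12S_F)$, the constants collapse. In short, you should use the block length $q$ (not the momentum $\beta$) to set the Young split, accept a mild expansion per step that is uniformly bounded over the block, and drop the spurious $1/b$ on the STORM correction.
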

\begin{proof}
\begin{align} \label{eq:56}
&\sum_{n=1}^{N}\mathbb{E}\|u^n_{t+1} - \bar{u}_{t+1} \|^{2} \leq \sum_{n=1}^{N}\mathbb{E} \left\| \frac{1}{b } \sum_{n=1}^{b } \nabla \hat{F}^n(x_t^n; \xi_{t,i}^n,\mathcal{B}^n_{t, i}) + (1 -  \beta)(u_{t}^n -  \frac{1}{b } \sum_{n=1}^{b } \nabla \hat{F}^n(x_{t-1}^n; \xi_{t,i}^n,\mathcal{B}^n_{t, i})) \right.\nonumber\\
&- \left.\frac{1}{N} \sum_{k=1}^{N}\left[\frac{1}{b } \sum_{i=1}^{b} \nabla \hat{F}^k(x_t^k; \xi_{t,i}^k, \mathcal{B}^k_{t, i}) + (1-\beta)(u_{t}^k - \frac{1}{b } \sum_{i=1}^{b} \nabla \hat{F}^k(x_{t-1}^k; \xi_{t,i}^k,\mathcal{B}^k_{t, i}))\right]\right\|^{2} \nonumber\\
&= \sum_{n=1}^{N} \mathbb{E} \left\| (1- \beta)(u^n_{t} - \bar{u}_{t}) + [ \frac{1}{b } \sum_{i=1}^{b} \nabla \hat{F}^n(x_t^n; \xi_{t,i}^n,\mathcal{B}^n_{t, i}) - \frac{1}{N} \sum_{k=1}^{N}  \frac{1}{b} \sum_{i=1}^{b } \nabla \hat{F}^k(x_t^k; \xi_{t,i}^k,\mathcal{B}^k_{t, i}) \right.\nonumber\\
&- \left.(1 - \beta)[\frac{1}{b} \sum_{i=1}^{b} \nabla \hat{F}^n(x_{t-1}^n; \xi_{t,i}^n,\mathcal{B}^n_{t, i}) - \frac{1}{N} \sum_{k=1}^{N} \frac{1}{b} \sum_{i=1}^{b} \nabla \hat{F}^k (x_{t-1}^k; \xi_{t,i}^k, \mathcal{B}^k_{t, i})]] \right\|^{2} \nonumber\\
&\leq (1 + \gamma)(1 - \beta)^{2} \sum_{n=1}^{N} \mathbb{E}\|u^n_{t} - \bar{u}_{t}\|^{2} \nonumber\\
&+ (1 + \frac{1}{\gamma}) \mathbb{E} \left\| \left[\frac{1}{b } \sum_{i=1}^{b } \nabla \hat{F}^n(x_t^n; \xi_{t,i}^n,\mathcal{B}^n_{t, i}) - \frac{1}{N} \sum_{k=1}^{N} \frac{1}{b } \sum_{i=1}^{b } \nabla \hat{F}^k(x_t^k; \xi_{t,i}^k,\mathcal{B}^k_{t, i})\right] \right.\nonumber\\
&- \left.(1- \beta)\left[ \frac{1}{b } \sum_{i=1}^{b } \nabla \hat{F}^n(x_{t-1}^n; \xi_{t,i}^n,\mathcal{B}^n_{t, i}) - \frac{1}{N} \sum_{k=1}^N \frac{1}{b} \sum_{i=1}^{b } \nabla \hat{F}^k(x_{t-1}^k; \xi_{t,i}^k, \mathcal{B}^k_{t, i})\right] \right\|^{2} 
\end{align}
where the second inequality is due to Young's inequality. For the second term, we have 
\begin{align} \label{eq:57}
& \sum_{n=1}^{N} \mathbb{E} \left\| \left[\frac{1}{b} \sum_{i=1}^{b } \nabla \hat{F}^n(x_t^n; \xi_{t,i}^n,\mathcal{B}^n_{t, i}) - \frac{1}{N} \sum_{k=1}^{N} \frac{1}{b } \sum_{i=1}^{b } \nabla \hat{F}^k(x_t^k; \xi_{t,i}^k, \mathcal{B}^k_{t, i})\right] \right.\nonumber\\
&-\left. (1-\beta) \left[\frac{1}{b} \sum_{i=1}^{b} \nabla \hat{F}^n(x_{t-1}^n; \xi_{t,i}^n,\mathcal{B}^n_{t, i}) - \frac{1}{N} \sum_{k=1}^N \frac{1}{b} \sum_{i=1}^{b } \nabla \hat{F}^n(x_{t-1}^n; \xi_{t,i}^n,\mathcal{B}^n_{t, i})\right] \right\|^{2}\nonumber\\
=&\sum_{n=1}^{N} \mathbb{E} \left\|\left[\frac{1}{b } \sum_{i=1}^{b } \nabla \hat{F}^n(x_t^n; \xi_{t,i}^n,\mathcal{B}^n_{t, i}) - \frac{1}{N} \sum_{k=1}^{N} \frac{1}{b } \sum_{i=1}^{b } \nabla \hat{F}^k(x_t^k; \xi_{t,i}^k, \mathcal{B}^k_{t, i})\right] - \left[ \frac{1}{b} \sum_{i=1}^{b } \nabla \hat{F}^n(x_{t-1}^n; \xi_{t,i}^n,\mathcal{B}^n_{t, i})\right.\right. \nonumber\\
-&\left.\left. \frac{1}{N} \sum_{k = 1}^n \frac{1}{b} \sum_{i=1}^{b } \nabla \hat{F}^k(x_{t-1}^k; \xi_{t,i}^k, \mathcal{B}^k_{t, i})\right] + \beta \left[ \frac{1}{b} \sum_{i=1}^{b } \nabla \hat{F}^n(x_{t-1}^n; \xi_{t,i}^n,\mathcal{B}^n_{t, i}) - \frac{1}{N} \sum_{k=1}^N \frac{1}{b} \sum_{i=1}^{b} \nabla \hat{F}^k(x_{t-1}^k; \xi_{t,i}^k, \mathcal{B}^k_{t, i})\right] \right\|^{2} \nonumber\\
\leq& 2 \sum_{n=1}^{N} \mathbb{E} \left\|\left[\frac{1}{b} \sum_{i=1}^{b} \nabla \hat{F}^n(x_t^n; \xi_{t,i}^n,\mathcal{B}^n_{t, i}) - \frac{1}{N} \sum_{k=1}^{N} \frac{1}{b } \sum_{i=1}^{b } \nabla \hat{F}^k(x_{t}^k; \xi_{t,i}^k, \mathcal{B}^k_{t, i})\right] \right.\nonumber\\
&- \left. \left[\frac{1}{b } \sum_{i=1}^{b } \nabla \hat{F}^n(x_{t-1}^n; \xi_{t,i}^n,\mathcal{B}^n_{t, i}) 
- \frac{1}{N} \sum_{k=1}^n \frac{1}{b } \sum_{i=1}^{b } \nabla \hat{F}^k(x_{t-1}^k; \xi_{t,i}^k,\mathcal{B}^k_{t, i})\right] \right\|^2 \nonumber\\
&+ 2\beta^2 \sum_{n=1}^{N} \mathbb{E} \left\| \frac{1}{b} \sum_{i=1}^{b} \nabla \hat{F}^n(x_{t-1}^n; \xi_{t,i}^n,\mathcal{B}^n_{t, i}) - \frac{1}{N} \sum_{k=1}^N \frac{1}{b } \sum_{i=1}^{b } \nabla \hat{F}^k(x_{t-1}^k; \xi_{t,i}^k,\mathcal{B}^k_{t, i})\right\|^{2} \nonumber\\
\stackrel{(a)}{\leq}& 2 \sum_{n=1}^{N} \mathbb{E} \left\|\frac{1}{b } \sum_{i=1}^{b } \nabla \hat{F}^n(x_t^n; \xi_{t,i}^n,\mathcal{B}^n_{t, i}) - \frac{1}{b } \sum_{i=1}^{b } \nabla \hat{F}^n(x_{t-1}^n; \xi_{t,i}^n,\mathcal{B}^n_{t, i}) \right\|^{2} \nonumber\\
&+ 2 \beta^{2} \sum_{n=1}^{N} \mathbb{E}\| \frac{1}{b } \sum_{i=1}^{b } \nabla \hat{F}^n(x_{t-1}^n; \xi_{t,i}^n,\mathcal{B}^n_{t, i}) - \frac{1}{N} \sum_{k=1}^{N} \frac{1}{b } \sum_{i=1}^{b } \nabla \hat{F}^k(x_{t-1}^k; \xi_{t,i}^k,\mathcal{B}^k_{t, i})\|^{2} \nonumber\\
\stackrel{(b)}{\leq}& 2 S_F^{2} \sum_{n=1}^{N} \mathbb{E} \|x^n_{t} - x^n_{t-1}\|^{2} + 2 \beta^{2} \sum_{n=1}^{N} \mathbb{E} \left\|\frac{1}{b } \sum_{i=1}^{b } \nabla \hat{F}^n(x_{t-1}^n; \xi_{t,i}^n,\mathcal{B}^n_{t, i}) - \frac{1}{N} \sum_{k=1}^{N} \frac{1}{b } \sum_{i=1}^{b } \nabla \hat{F}^k(x_{t-1}^k; \xi_{t,i}^k,\mathcal{B}^k_{t, i}) \right\|^{2}
\end{align}
where (a) holds due to Lemma \ref{lem:A1}, and (b) uses Lemma \ref{lem:A2} (b). For the last term in \eqref{eq:57}, we have
\begin{align} \label{eq:58}
&\sum_{n=1}^{N} \mathbb{E} \left\|\frac{1}{b } \sum_{i=1}^{b } \nabla \hat{F}^n(x_{t-1}^n; \xi_{t,i}^n,\mathcal{B}^n_{t, i}) - \frac{1}{N} \sum_{k=1}^{N} \frac{1}{b } \sum_{i=1}^{b } \nabla \hat{F}^k(x_{t-1}^k; \xi_{t,i}^k,\mathcal{B}^k_{t, i})\right\|^{2} \nonumber\\
=& \sum_{n=1}^{N}\mathbb{E}\left\|\frac{1}{b } \sum_{i=1}^{b } \nabla \hat{F}^n(x_{t-1}^n; \xi_{t,i}^n,\mathcal{B}^n_{t, i}) - \nabla \hat{F}^n(x^n_{t-1}) - \frac{1}{N} \sum_{k=1}^{N} \left[\frac{1}{b } \sum_{i=1}^{b } \nabla \hat{F}^k(x_{t-1}^k; \xi_{t,i}^k,\mathcal{B}^k_{t, i}) - \nabla F(x^k_{t-1})\right] \right. \nonumber\\
&+ \left.\left[\nabla \hat{F}^n(x^n_{t-1}) - \frac{1}{N} \sum_{k=1}^{N}\nabla \hat{F}^k(x^k_{t-1})\right] \right\|^{2} \nonumber\\
\leq& 2 \sum_{n=1}^{N}\mathbb{E} \|\frac{1}{b } \sum_{i=1}^{b } \nabla \hat{F}^n(x_{t-1}^n; \xi_{t,i}^n,\mathcal{B}^n_{t, i}) - \nabla \hat{F}^n(x^n_{t-1}) \|^2 + 2\sum_{n=1}^{N}\mathbb{E}\|\hat{F}^n(x^n_{t-1}) - \frac{1}{N} \sum_{k=1}^{N}\nabla \hat{F}^k(x^k_{t-1}) \|^2 \nonumber\\
\leq&  26 N L_f^2 L_g^2 + 12 S_F^{2} \sum_{n=1}^{N} \mathbb{E} \|x^n_{t - 1} - \bar{x}_{t-1}\|^{2}
\end{align} 
where the first inequality is due to Lemma \ref{lem:A1} and the last inequality is due to Lemma \ref{lem:A2}  (c) and Lemma \ref{lem:A3}. Therefore, by combining above inequalities \eqref{eq:56}, \eqref{eq:57} and \eqref{eq:58}, when $\mod(t + 1,q) \neq 0 $ we have 
\begin{align}
&\sum_{n=1}^{N} \mathbb{E} \|u_{t+1}^n - \bar{u}_{t+1}\|^{2} \nonumber\\
& \leq (1 - \beta)^{2}(1 + \gamma) \sum_{n=1}^{N} \mathbb{E} \|u_{t}^n - \bar{u}_{t}\|^{2} + 2 S_F^{2}(1 + \frac{1}{\gamma}) \sum_{n=1}^N \mathbb{E}\|x^n_{t} - x^n_{t-1}\|^{2} \nonumber\\
&+ 52 N L_f^2 L_g^2(1 + \frac{1}{\gamma}) \beta^{2} + 24 S_F^{2}(1+\frac{1}{\gamma}) \beta^{2} \sum_{n=1}^N  \mathbb{E}\|x^n_{t-1} -  \bar{x}_{t-1}\|^{2} \nonumber\\
&\stackrel{(a)}{\leq} (1-\beta)^{2}(1 + \gamma) \sum_{n=1}^{N} \mathbb{E} \| u_{t}^n - \bar{u}_{t}\|^{2} + 2 S_F^{2}(1 + \frac{1}{\gamma})\sum_{n=1}^{N} \mathbb{E}\|\alpha u_{t}^n\|^{2} \nonumber\\
&+ 52 N L_f^{2} L_g^2(1 + \frac{1}{\gamma}) \beta^{2}  + 24 S_F^{2}(1+\frac{1}{\gamma}) \beta^{2} (q-1) \sum_{s=s_t q + 1}^{t-1} \alpha^{2} \sum_{n=1}^{N}\mathbb{E}\|u_{s}^n - \bar{u}_{s}\|^{2} \nonumber\\
&\leq (1 - \beta)^{2}(1 + \gamma) \sum_{n=1}^{N} \mathbb{E} \| u_{t}^n - \bar{u}_{t}\|^{2} + 4 S_F^{2}(1+\frac{1}{\gamma}) \sum_{n=1}^{N} \mathbb{E}\big[\|\alpha  (u_{t}^n - \bar{u}_{t})\|^{2} + \|\alpha \bar{u}_{t}\|^{2}\big] \nonumber\\
&+ 52 N L_f^{2}L_g^2(1 + \frac{1}{\gamma}) \beta^{2} + 24 S_F^{2}(1 + \frac{1}{\gamma}) \beta^{2} (q-1) \sum_{s=s_t q + 1}^{t-1} \alpha^{2} \sum_{n=1}^{N}\mathbb{E}\|u_{s}^n - \bar{u}_{s}\|^{2} 
\end{align}
where (a) is due to \eqref{eq:27}. Then we have 

\begin{align} \label{eq:60}
\sum_{n=1}^{N}\mathbb{E}\| u_{t+1}^n - \bar{u}_{t+1}\|^{2} &= [(1-\beta)^{2}(1 + \gamma) + 4 S_F^{2}(1+\frac{1}{\gamma})\alpha^2]\sum_{n=1}^{N} \mathbb{E} \|u_{t}^n - \bar{u}_{t}\|^{2} \nonumber\\
&+ 4 N S_F^{2}(1+\frac{1}{\gamma})\alpha^2\mathbb{E}\|\bar{u}_{t}\|^{2} 
+ 52 N L_f^{2} L_g^2(1 + \frac{1}{\gamma}) \beta^{2} \nonumber\\
&+ 24 S_F^{2}(1+\frac{1}{\gamma}) \beta^{2} (q-1) \sum_{s=s_t q + 1}^{t-1} \alpha^{2} \sum_{n = 1}^{N}\mathbb{E}\|u_{s}^n - \bar{u}_{s}\|^{2} 
\end{align}
Set $\gamma=\frac{1}{q}$ and $\alpha \leq \frac{1}{12 S_F q}$, and given that $\beta \in (0,1)$, 
\begin{align} \label{eq:61}
(1-\beta)^{2}(1+\gamma) + 4 S_F^{2}(1+\frac{1}{\gamma}) \alpha^{2} & \leq
1+\frac{1}{q} + 4 S_F^{2}(1 + q) \alpha^{2}  \leq 1 + \frac{1}{q} + \frac{q+1}{36q^{2}}  \leq 1+\frac{19}{18 q}
\end{align}
Putting the \eqref{eq:61} in \eqref{eq:60}, and considering $\gamma=\frac{1}{q}$ and $\alpha \leq \frac{1}{12 S_F q}, \beta = c \alpha^2$, we have 
\begin{align}
&\sum_{n = 1}^{N}\mathbb{E}\|u_{t+1}^n - \bar{u}_{t+1}\|^{2} \nonumber\\
&\leq (1 + \frac{19}{18 q}) \sum_{n=1}^{N} \mathbb{E}\| u_{t}^n - \bar{u}_{t} \|^{2} + 4 N S_F^2(1 + \frac{1}{\gamma}) \alpha^{2} \mathbb{E}\| \bar{u}_{t}\|^{2}  \nonumber\\
&+ 52 N L_f^{2}L_g^2(1 + \frac{1}{\gamma}) \beta^{2} + 24 S_F^{2}(1+\frac{1}{\gamma}) \beta^{2}(q-1) \sum_{s=s_t q+ 1}^{t-1} \alpha^{2} \sum_{n=1}^{N}\mathbb{E}\| u_{s}^n - \bar{u}_{s}\|^{2} \nonumber\\
&\leq (1+\frac{19}{18 q}) \sum_{n=1}^{N}\mathbb{E}\|u_{t}^n - \bar{u}_{t}\|^{2} + \frac{2N S_F}{3} \alpha \mathbb{E}\|\bar{u}_{t} \|^{2} \nonumber\\
&+ \frac{26 N L_f^2 L_g^2 c^{2}}{3 S_F} \alpha^{3} + 24 S_F^{2} q^{2} c^{2} \alpha^{4} \sum_{s=s_t q+ 1}^{t-1} \alpha^{2} \sum_{n=1}^{N} \mathbb{E}\| u_{s}^n - \bar{u}_{s} \|^{2}
\end{align}

We know that when $\mod(t,q) = 0$ (i.e.  $t = s_t q$), $\sum_{n=1}^{N}\|u_{t}^n - \bar{u}_t\|^{2} = 0$

\begin{align}
&\sum_{n=1}^{N} \mathbb{E}\|u_{t+1}^n - \bar{u}_{t+1} \|^{2} \nonumber\\
&\leq \frac{2 N S_F}{3} \sum_{s=s_t q}^{t}(1+\frac{19}{18q})^{t-s} \alpha \mathbb{E}\| \bar{u}_{s} \|^{2} + \frac{26 N L_f^2 L_g^2 c^{2}}{3 S_F} \sum_{s=s_t q}^{t}(1+\frac{19}{18 q})^{t-s} \alpha^{3} \nonumber\\
&+ 24 S_F^{2} q^{2} c^{2} \sum_{s=s_ t}^{t}(1+\frac{19}{18 q})^{t-s} \alpha^{4} \sum_{\bar{s}=s_t q}^{s} \alpha^{2} \sum_{n=1}^{N} \mathbb{E} \|u_{\bar{s}}^n - \bar{u}_{\bar{s}}\|^{2} \nonumber\\
&\leq \frac{2 N S_F}{3} \sum_{s=s_t q}^{t}(1+\frac{19}{18q})^{q} \alpha \mathbb{E} \|\bar{u}_{s}\|^{2} + \frac{26 N L_f^2 L_g^2 c^{2}}{3 S_F}  \sum_{s=s_t q}^{t}\left(1 + \frac{19}{18q}\right)^{q} \alpha^{3} \nonumber\\
&+ 24 S_F^{2} q^{3} c^{2}(\frac{1}{12 S_F q})^{5}(1 + \frac{19}{18 q})^{q} \sum_{s=s_t q}^{t} \alpha \sum_{n=1}^{N} \mathbb{E}\|u_{s}^n - \bar{u}_{s}\|^{2} \nonumber\\
&\leq 2 N S_F \sum_{s=s_t q}^{t} \alpha \mathbb{E}\|\bar{u}_{s}\|^{2} +  \frac{26 N L_f^2 L_g^2 c^{2}}{ S_F} \sum_{s=s_t q}^{t} \alpha^{3} + 72 S_F^{2} q^{3} c^{2}(\frac{1}{12 S_F q})^5 \sum_{s=s_t q}^{t} \alpha \sum_{n=1}^{N} \mathbb{E}\|u_{s}^n - \bar{u}_{s}\|^{2} \nonumber
\end{align}
where the third inequality is due to $(1+19 / 18q)^{q} \leq e^{19 / 18} \leq 3$. Multiplying $\alpha$ on both side and summing over $[s_t q, \bar{s})$ in one inner loop, where $\bar{s} = (s_t +1 )q$ we have
\begin{align}
&\sum_{t=s_t q}^{\bar{s}} \alpha \sum_{n=1}^{N} \mathbb{E} \|u_{t + 1}^n - \bar{u}_{t+1}\|^{2}  \leq 2 N S_F \sum_{t=s_t q}^{\bar{s}} \alpha \sum_{s=s_t q}^{t} \alpha \mathbb{E}\|\bar{u}_{s} \|^{2} +  \frac{26 N L_f^2 L_g^2 c^{2}}{ S_F}  \sum_{t=s_t q}^{\bar{s}} \alpha \sum_{s=s_t q}^{t} \alpha^{3} \nonumber\\
&+ 72 S_F^{2} q^{3} c^{2}(\frac{1}{12 S_F q})^5 \sum_{t=s_t q}^{\bar{s}} \alpha \sum_{s=s_t q}^{t} \alpha \sum_{n=1}^{N} \mathbb{E}\|u_{s}^n - \bar{u}_{s}\|^{2} \nonumber\\ 
&\leq 2 N S_F (\sum_{t=s_t q}^{\bar{s}} \alpha) \sum_{t=s_t q}^{\bar{s}} \alpha \mathbb{E}\|\bar{u}_{t} \|^{2} +  \frac{26 N L_f^2 L_g^2 c^{2}}{S_F} (\sum_{t=s_t q}^{\bar{s}} \alpha) \sum_{t=s_t q}^{\bar{s}} \alpha^{3} \nonumber\\
&+ 72 S_F^2 q^{3} c^{2} (\frac{1}{12 S_F q})^{5}(\sum_{t=s_ t}^{\bar{s}} \alpha) \sum_{t=s_t q}^{\bar{s}} \alpha \sum_{n=1}^{N} \mathbb{E}\|u_{t}^n - \bar{u}_{t}\|^{2} \nonumber\\
&\leq \frac{N}{6} \sum_{t=s_t q}^{\bar{s}} \alpha \mathbb{E}\|\bar{u}_{t}\|^{2} +
\frac{13 N L_f^2 L_g^2 c^{2}}{6 S_F^2} 
\sum_{t=s_t q}^{\bar{s}} \alpha^{3} + 72 S_F^{2} q^{4} c^{2}(\frac{1}{12 S_F q})^{6} \sum_{t=s_t q }^{\bar{s}+1} \alpha \sum_{n=1}^{N} \mathbb{E}\|u_{t}^n - \bar{u}_{t} \|^{2} \nonumber
\end{align}
Rearranging the terms, we get,
\begin{align}
[1 - 72 S_F^{2} q^{4} c^{2}(\frac{1}{12 S_F q})^{6} ]\sum_{t=s_t q + 1}^{\bar{s}+1} \alpha \sum_{n=1}^{N} \mathbb{E} \|u_{t}^n - \bar{u}_{t}\|^{2} \leq& \frac{N}{6} \sum_{t=s_t q}^{\bar{s}} \alpha \mathbb{E}\|\bar{u}_{t}\|^{2} + \frac{13 N L_f^2 L_g^2 c^{2}}{6 S_F^{2}} \sum_{t=s_t q}^{\bar{s}} \alpha^{3} \nonumber
\end{align}
Given that $c = \frac{30 S_F^2}{bN}$, and $(1 - 72 S_F^{2} q^{4} c^{2}(\frac{1}{12 S_F q})^{6}) / 2 \geq \frac{101}{240}$. By multiply $\frac{1}{2 N}$ on both size and summing t from 1 to $T$, we have
\begin{align} \label{eq:42}
\frac{101}{240 N}\sum_{t = 1}^{T} \alpha \sum_{n=1}^{N} \mathbb{E} \|u_{t}^n - \bar{u}_{t} \|^{2} &\leq \frac{1}{12} \sum_{t = 1}^{T} \alpha \mathbb{E}\|\bar{u}_{t}\|^{2}  + \frac{13 L_f^2 L_g^{2} c^{2}}{12 S_F^{2}}T\alpha^{3}
\end{align}
\end{proof}
\subsection{Proof of Theorem \ref{thm:4.7}}
In this section, we show the Proof of Theorem \ref{thm:4.7}.

\begin{proof}
Set 
$\alpha=\frac{1}{12 q S_F}, \quad \beta = c \cdot \alpha^{2}$,  $ c = \frac{30 S_F^{2}}{b  N}$. Recall Lemma \ref{lem:D2}, we have
\begin{align} \label{eq:67}
&\mathbb{E}\|\bar{u}_{t + 1} - \nabla \hat{F} (x_t)\|^2 \nonumber\\
\leq& (1-\beta)^{2}\mathbb{E}\|\bar{u}_{t} - \nabla \hat{F}(x_{t-1})\|^{2} +  \frac{8(1 -  \beta)^2 S_F^2}{N^2b } \frac{q - 1}{q} \sum_{n=1}^{N} \alpha^2 \mathbb{E}\|u_{t}^n - \bar{u}_{t}\|^2 \nonumber\\
&+  \frac{4(1 -  \beta)^2 S_F^2 \alpha^2}{N b}  \mathbb{E}\| \bar{u}_{t} \|^2 + \frac{2 \beta^2 L_f^2}{Nb }   \nonumber\\
\leq&  
(1-\beta) \mathbb{E}\|\bar{u}_{t} - \nabla \hat{F}(x_{t-1})\|^{2} +  \frac{8 S_F^2}{N^2b } \sum_{n=1}^{N} \alpha^2 \mathbb{E}\|u_{t}^n - \bar{u}_{t}\|^2 +  \frac{4 S_F^2 \alpha^2}{N b}  \mathbb{E}\| \bar{u}_{t} \|^2 + \frac{2 \beta^2 L_f^2}{Nb } 
\end{align}

We define the potential function as a linear combination of the objective function and the gradient estimation error: 
\begin{align}
\Phi_t = F(\bar{x}_t) + \frac{3\alpha}{2\beta} \|\bar{u}_{t + 1} - \nabla \hat{F} (x_t)\|^2
\end{align}
Therefore, we have
\begin{align} \label{eq:69}
& \mathbb{E}\Phi_{t + 1} - \mathbb{E}\Phi_t = \mathbb{E}[F(\bar{x}_{t+1}) + \frac{3\alpha}{2\beta} \|\bar{u}_{t + 2} - \nabla \hat{F} (x_{t + 1})\|^2 ] - \mathbb{E}[F(\bar{x}_t) + \frac{3\alpha}{2\beta} \|\bar{u}_{t + 1} - \nabla \hat{F} (x_t)\|^2 ] \nonumber\\
\leq&  - (\frac{\alpha}{2} - \frac{\alpha^2 S_F }{2} ) \mathbb{E}\|\bar{u}_{t+1} \|^2 - \frac{\alpha}{2} \mathbb{E}\| \nabla F(\bar{x}_{t})\|^2 + \frac{3\alpha S_F^2}{2N} \mathbb{E} \|x_{t}^n - \bar{x}_t\|^2 + \frac{3\alpha L_g^2 S_f^2 \sigma_g^2}{2m} \nonumber\\
&+ \frac{3\alpha}{2}\mathbb{E} \|\frac{1}{N} \sum_{n=1}^N \nabla \hat{F}^n(x_{t}) - \bar{u}_{t+1}\|^2 - \frac{3\alpha}{2}\mathbb{E}\|\bar{u}_{t + 1} - \frac{1}{N} \sum_{n=1}^N \nabla \hat{F}^n(x^n_{t})\|^{2} \nonumber\\
&+ \frac{12 \alpha S_F^2}{N^2b \beta} \sum_{n=1}^{N} \alpha^2 \mathbb{E}\|u_{t + 1}^n - \bar{u}_{t + 1}\|^2 +  \frac{6 S_F^2 \alpha^3}{N b \beta} \mathbb{E}\| \bar{u}_{t + 1} \|^2 + \frac{3\alpha \beta L_f^2}{Nb } 
\end{align}
Rearranging \eqref{eq:69} and taking the telescoping sum over t in $[s_t q, \bar{s})$ in one inner loop, where $\bar{s} = (s_t +1 )q$, we have
\begin{align} \label{eq:70}
& \sum_{t= s_t q}^{\bar{s} - 1}  \frac{\alpha}{2} \mathbb{E}\| \nabla F(\bar{x}_{t})\|^2 \nonumber\\
\leq&  \sum_{t= s_t q}^{\bar{s} - 1} [\mathbb{E}\Phi_{t} - \mathbb{E}\Phi_{t+1}] - (\frac{\alpha}{2} - \frac{\alpha^2 S_F }{2} - \frac{6 S_F^2 \alpha^3}{N b \beta} ) \sum_{t= s_t q}^{\bar{s} - 1}  \mathbb{E}\|\bar{u}_{t+1} \|^2  + \frac{3\alpha S_F^2}{2N} \sum_{t= s_t q}^{\bar{s} - 1} \mathbb{E} \|x_{t}^n - \bar{x}_t\|^2 \nonumber\\
& + \sum_{t= s_t q}^{\bar{s} - 1}  \frac{3\alpha L_g^2 S_f^2 \sigma_g^2}{2m}  + \frac{12 \alpha S_F^2}{N^2b \beta} \sum_{t= s_t q}^{\bar{s} - 1}  \sum_{n=1}^{N} \alpha^2 \mathbb{E}\|u_{t + 1}^n - \bar{u}_{t + 1}\|^2 + \sum_{t= s_t q}^{\bar{s} - 1} \frac{3\alpha \beta L_f^2}{Nb } \nonumber\\
\leq&  \sum_{t= s_t q}^{\bar{s} - 1} [\mathbb{E}\Phi_{t} - \mathbb{E}\Phi_{t+1}] - (\frac{\alpha}{2} - \frac{\alpha^2 S_F }{2} - \frac{6 S_F^2 \alpha^3}{N b \beta} ) \sum_{t= s_t q}^{\bar{s} - 1}  \mathbb{E}\|\bar{u}_{t+1} \|^2  \nonumber\\
&+ \frac{3\alpha S_F^2}{2N} \sum_{t= s_t q}^{\bar{s} - 1} \mathbb{E} [(q-1) \alpha^{2} \sum_{s=s_{t} q }^{t} \sum_{n=1}^{N}\| u_{s}^n - \bar{u}_{s}\|^{2}]  + \sum_{t= s_t q}^{\bar{s} - 1}  \frac{3\alpha L_g^2 S_f^2 \sigma_g^2}{2m} \nonumber\\
& + \frac{12 \alpha S_F^2}{N^2b \beta} \sum_{t= s_t q}^{\bar{s} - 1}  \sum_{n=1}^{N} \alpha^2 \mathbb{E}\|u_{t + 1}^n - \bar{u}_{t + 1}\|^2 + \sum_{t= s_t q}^{\bar{s} - 1} \frac{3\alpha \beta L_f^2}{Nb }
\end{align}
where the last inequality uses \eqref{eq:27}. Furthermore, 
\begin{align}
&\sum_{t= s_t q}^{\bar{s} - 1}  \frac{\alpha}{2} \| \nabla F(\bar{x}_{t})\|^2 \leq \sum_{t= s_t q}^{\bar{s} - 1} [\mathbb{E}\Phi_{t} - \mathbb{E}\Phi_{t+1}] - (\frac{\alpha}{2} - \frac{\alpha^2 S_F }{2} - \frac{6 S_F^2 \alpha^3}{N b \beta} ) \sum_{t= s_t q}^{\bar{s} - 1}  \mathbb{E}\|\bar{u}_{t+1} \|^2 \nonumber\\
&+ \sum_{t= s_t q}^{\bar{s} - 1}  \frac{3\alpha L_g^2 S_f^2 \sigma_g^2}{2m} + \frac{3 S_F^2 \alpha}{2N}[(q-1)\times q\times \frac{1}{12 S_F q } \times \frac{1}{12 S_F q }
\sum_{t= s_t q}^{\bar{s} - 1} \sum_{n=1}^{N}\| u_{t + 1}^n - \bar{u}_{t + 1}\|^{2}]  \nonumber\\
&+ \sum_{t= s_t q}^{\bar{s} - 1} [\frac{12 S_F^2 \alpha^3}{N^2b  \beta}  \sum_{n=1}^{N} \mathbb{E} \|{u}_{t + 1}^n - \bar{u}_{t + 1} \|^2 + \frac{3 \alpha \beta L_f^2 }{Nb }] \nonumber\\
\leq& \sum_{t= s_t q}^{\bar{s} - 1} \mathbb{E}[\Phi_{t} - \Phi_{t+1}] - (\frac{\alpha}{2} - \frac{\alpha^2 S_F }{2} - \frac{6 S_F^2 \alpha^3}{N b \beta} ) \sum_{t= s_t q}^{\bar{s} - 1}  \mathbb{E}\|\bar{u}_{t+1} \|^2 + \sum_{t= s_t q}^{\bar{s} - 1}  \frac{3\alpha L_g^2 S_f^2 \sigma_g^2}{2m}  \nonumber\\ 
& +\frac{\alpha}{96 N}
\sum_{t= s_t q}^{\bar{s} - 1} \sum_{n=1}^{N}\| u_{t + 1}^n - \bar{u}_{t + 1}\|^{2} + \sum_{t= s_t q}^{\bar{s} - 1} [\frac{12 S_F^2 \alpha^3}{N^2b  \beta}  \sum_{n=1}^{N} \mathbb{E} \|{u}_{t + 1}^n - \bar{u}_{t + 1} \|^2 + \frac{3 \alpha \beta L_f^2 }{Nb }] \nonumber\\
\leq& \sum_{t= s_t q}^{\bar{s} - 1} \mathbb{E}[\Phi_{t} - \Phi_{t+1}] - (\frac{\alpha }{2}  - \frac{\alpha ^2 S_F }{2} -\frac{6S_F^2 \alpha}{N b  c}) \sum_{t= s_t q}^{\bar{s} - 1}\|\bar{u}_{t + 1} \|^2 + \sum_{t= s_t q}^{\bar{s} - 1} \frac{3\alpha L_g^2 S_f^2 \sigma_g^2}{2m} \nonumber\\
& + (\frac{1}{96N} + \frac{12S_F^2}{N^2 b  c})\alpha [ \sum_{t= s_t q}^{\bar{s} - 1} \sum_{n=1}^{N}\| u_{s + 1}^n - \bar{u}_{s + 1}\|^{2}] + \sum_{t= s_t q}^{\bar{s} - 1} \frac{3 \beta \alpha  L_f^2 }{Nb } \nonumber\\
=& \sum_{t= s_t q}^{\bar{s} - 1} \mathbb{E}[\Phi_{t} - \Phi_{t+1}] - (\frac{3 \alpha }{10} - \frac{\alpha ^2 S_F }{2} ) \sum_{t= s_t q}^{\bar{s} - 1} \mathbb{E} \|\bar{u}_{t + 1} \|^2 + \frac{101 \alpha}{240 N}  \sum_{t= s_t q}^{\bar{s} - 1}  \sum_{n=1}^{N}\| u_{s}^n - \bar{u}_{s}\|^{2}  \nonumber\\
&+\sum_{t= s_t q}^{\bar{s} - 1} \frac{3\alpha L_g^2 S_f^2 \sigma_g^2}{2m} + \sum_{t= s_t q}^{\bar{s} - 1} \frac{3 \beta \alpha  L_f^2 }{Nb }  \nonumber
\end{align}
where the last equality holds by to $c = \frac{30 S_F^2}{b N}$. Therefore, by summing t from 0 to $T$, we have
\begin{align}
&\sum_{t= 0}^{T - 1}  \frac{\alpha}{2} \| \nabla F(\bar{x}_{t})\|^2 \nonumber\\
\leq& \mathbb{E}[\Phi_{0} - \Phi_{T}] - (\frac{3 \alpha }{10} - \frac{\alpha ^2 S_F }{2} ) \sum_{t= 0}^{T - 1} \mathbb{E} \|\bar{u}_{t + 1} \|^2 + \frac{101 \alpha}{240 N} \sum_{t= 0}^{T - 1} \sum_{n=1}^{N}\| u_{s}^n - \bar{u}_{s}\|^{2} \nonumber\\
&+ \frac{3\alpha L_g^2 S_f^2 \sigma_g^2 T}{2m} +  \frac{3 \beta \alpha  L_f^2 T}{Nb } \nonumber\\
\stackrel{(a)}{\leq} & \mathbb{E}[\Phi_{0} - \Phi_{T}] - (\frac{3 \alpha }{10} - \frac{\alpha}{12}- \frac{\alpha ^2 S_F }{2} ) \sum_{t= 0}^{T - 1} \mathbb{E} \|\bar{u}_{t + 1} \|^2  + \frac{13 L_f^2 L_g^{2} c^{2}}{12 S_F^{2}}T\alpha^{3} \nonumber\\
&+ \frac{3\alpha L_g^2 S_f^2 \sigma_g^2 T}{2m} +  \frac{3 \beta \alpha  L_f^2 T}{Nb } \nonumber\\
\leq& \mathbb{E}[\Phi_{0} - \Phi_{T}] + \frac{13 L_f^2 L_g^{2} c^{2}}{12 S_F^{2}}T\alpha^{3} + \frac{3\alpha L_g^2 S_f^2 \sigma_g^2 T}{2m} +  \frac{3 \beta \alpha  L_f^2 T}{Nb } \nonumber\\ 
\leq& [F(\bar{x}_{0}) -  F(\bar{x}_{T})] + [\frac{3\alpha}{2\beta}\mathbb{E}\|\bar{u}_{1} - \frac{1}{N} \sum_{n=1}^{N} \nabla \hat{F}^n(x^n_{0})\|^{2} + \frac{13 L_f^2 L_g^{2} c^{2}}{12 S_F^{2}}T\alpha^{3} \nonumber\\
&+ \frac{3\alpha L_g^2 S_f^2 \sigma_g^2 T}{2m} +  \frac{3 \beta \alpha  L_f^2 T}{Nb } \nonumber
\end{align}

where (a) holds due to \ref{lem:D4}. Therefore,
\begin{align}
\frac{1}{T}\sum_{t=0}^{T-1} \| \nabla F(\bar{x}_{t})\|^2 \leq& \frac{2[F(\bar{x}_{0}) - F(\bar{x}_{T})]}{T \alpha}  + \frac{3 L_f^2 L_g^2}{\beta B N T} + \frac{13 L_f^2 L_g^{2} c^{2}}{6 S_F^{2}}\alpha^{2} + \frac{3 L_g^2 S_f^2 \sigma_g^2}{m} +  \frac{6 \beta L_f^2}{Nb} \nonumber
\end{align}

let b as $O(1) (b \geq 1)$, and choose $q=\left(T / N^{2}\right)^{1 / 3}$. Therefore, $\alpha = \frac{1}{12 q S_F} = \frac{N^{2/3}}{12 S_F T^{1/3}}$, since $c = \frac{30 S_F^2}{ b  N }$, we have $\beta = c \alpha^2 =  \frac{5 N^{1/3}}{24 T^{2/3} b }$. And let $B = \frac{T^{1/3}}{N^{2/3}}$

Therefore, we have 
\begin{align}
&\frac{1}{T}\sum_{t=0}^{T-1} \| \nabla F(\bar{x}_{t})\|^2 \nonumber\\
\leq& \frac{24 S_F [F(\bar{x}_{0}) - F(\bar{x}_{*})]}{(NT)^{2/3}} + \frac{72 L_f^2 L_g^2 b }{5 (NT)^{2/3}}  + \frac{325 L_f^2 L_g^{2}}{24 b ^2 (TN)^{2/3}} + \frac{3 L_g^2 S_f^2 \sigma_g^2}{m}  + \frac{5 L_f^2 }{4 b ^2 (NT)^{2/3}}
\end{align}

To let the right hand less than $\varepsilon^2$ when $m = \varepsilon^{-2}$ and $b = O(1)$, we get $T = O(N^{-1}\varepsilon^{-3})$
and $\frac{T}{q} = (N T)^{2 / 3} = \varepsilon^{-2}$.

\end{proof}

\end{document}